\documentclass[11pt]{article}

\usepackage{graphicx}
\usepackage{amsfonts}
\usepackage{amsmath,amssymb,amsthm}
\usepackage[ruled]{algorithm2e}
\usepackage{algorithmic}
\usepackage{enumerate}
\usepackage{macros}
\usepackage{fullpage}
\usepackage{color}
\usepackage{epstopdf}
\usepackage[bookmarks=false,colorlinks,citecolor=blue,urlcolor=blue]{hyperref}

\newtheorem{theorem}{Theorem}
\newtheorem{lemma}{Lemma}
\newtheorem{proposition}{Proposition}

\title{Spectral Methods meet EM: A Provably Optimal Algorithm\\ for Crowdsourcing}

%\author{
%Yuchen Zhang\\
%UC Berkeley
%\and
%Xi Chen \\
%UC Berkeley
%\and
%Dengyong Zhou  \\
%Microsoft Research
%\and
%Michael I. Jordan\\
%UC Berkeley\\
%}

\author{
	Yuchen Zhang%
	\thanks{
	Department of Electrical Engineering and Computer Science, University of California at Berkeley, Berkeley, CA 94720. Email: {\tt yuczhang@berkeley.edu}. }
\and
     Xi Chen%
	\thanks{
	Department of Electrical Engineering and Computer Science, University of California at Berkeley, Berkeley, CA 94720. Email: {\tt xichen1987@berkeley.edu}. }
\and
	Dengyong Zhou
	\thanks{Machine Learning Group, Microsoft Research, Redmond, WA 98052. Email: {\tt dengyong.zhou@microsoft.com}.}
\and
    Michael I. Jordan
	\thanks{
	Department of Electrical Engineering and Computer Science,  and Department of Statistics, University of California at Berkeley, Berkeley, CA 94720. Email: {\tt jordan@berkeley.edu}. }
}

%\author[*]{Yuchen Zhang}
%\author[*]{Xi Chen}
%\author[**]{Dengyong Zhou}
%\author[*]{Michael Jordan}
%\affil[*]{UC Berkeley}
%\affil[**]{Microsoft Research}

%%%%%%%%%%%%%%%%%%%%%%%%%%%%%%%%%%%%%%%%%%%%%%%%%%%%%%%%%%%%%%%%%%%%%%%%%%
\begin{document}
\maketitle
\begin{abstract}
    Crowdsourcing is a popular paradigm for effectively collecting labels at low cost. The Dawid-Skene estimator has been widely used for inferring the true labels from the noisy labels provided by non-expert crowdsourcing workers. However, since the estimator maximizes a non-convex log-likelihood function, it is hard to theoretically justify its performance. In this paper, we propose a two-stage efficient algorithm for multi-class crowd labeling problems. The first stage uses the spectral method to obtain an initial estimate of parameters. Then the second stage refines the estimation by optimizing the objective function of the Dawid-Skene estimator via the EM algorithm. We show that our algorithm achieves the optimal convergence rate up to a logarithmic factor. We conduct extensive experiments on synthetic and real datasets. Experimental results demonstrate that the proposed algorithm is comparable to the most accurate empirical approach, while outperforming several other recently proposed methods.
\end{abstract}

\newpage

\section{Introduction}

With the advent of online services such as Amazon Mechanical Turk, crowdsourcing has become an efficient and inexpensive way to collect labels for large-scale data. Despite the efficiency and immediate availability are virtues of crowdsourcing, labels collected from the crowd can be of low quality since crowdsourcing workers are often non-experts and sometimes unreliable. As a remedy, most crowdsourcing services resort to labeling redundancy, , collecting multiple labels from different workers for each item. Such a strategy raises a fundamental problem in crowdsourcing: how to infer true labels from noisy but redundant worker labels?

For labeling tasks with $k$ different categories,  %where each worker is asked to provide an label from $k$  possible classes,  majority voting is perhaps the most straightforward approach for this problem. The assumption underlying majority voting is that all workers are equally good. In crowdsourcing, it is reasonable to believe that the performance of each worker is consistent across different items and thus a better solution can be expected by modeling the quality of each worker. In particular,
Dawid and Skene \cite{dawid1979maximum} develop a maximum likelihood approach based on the EM algorithm.  They assume that each worker is associated with a $k \times k$ confusion matrix, where the $(l, c)$-th entry represents the probability that a random chosen item in  class $l$ is labeled as class $c$ by the worker.  The true labels and worker confusion matrices are jointly estimated by maximizing the likelihood of the observed labels, where the unobserved true labels are treated as latent variables.

Although this EM-based approach has had empirical success~\cite{snow2008cheap, Raykar:10, liu2012variational, zhou2012learning, CheLinZho13, zhouliuplattmeek:14}, there is as yet no theoretical guarantee for its performance. A recent theoretical study \cite{Gao:EM:14} shows that the global optimal solutions of the Dawid-Skene estimator can achieve minimax rates of convergence in a simplified scenario,  where the labeling task is binary and each worker has a single parameter to represent  her labeling accuracy (referred to as ``one-coin" model in what follows). However, since the likelihood function is nonconvex, this guarantee is not operational because the EM algorithm can get trapped in a local optimum.  Several alternative approaches have been developed that aim to circumvent the theoretical deficiencies of the EM algorithm, still the context of the one-coin model~\cite{karger2013efficient, Oh:12, ghosh2011moderates, Dalvi:13}, but, as we survey in Section \ref{sec:related}, they either fail to achieve the optimal rates or make restrictive assumptions which can be hard to justify in practice.

We propose a computationally efficient and provably optimal algorithm to simultaneously estimate true labels and worker confusion matrices for multi-class labeling problems. Our approach is a two-stage procedure, in which we first compute an initial estimate of worker confusion matrices using the spectral method, and then in the second stage we turn to the EM algorithm. Under some mild conditions, we show that this two-stage procedure achieves minimax rates of convergence up to a logarithmic factor, even after only one iteration of EM.
In particular, given any $\delta \in (0,1)$, we provide the bounds on the number of workers and the number of items so that our method can correctly estimate labels for all items with probability at least $1-\delta$. Then we establish the lower bound to demonstrate its optimality. Further, we provide both upper and lower bounds for estimating the confusion matrix of each worker and show that our algorithm achieves the optimal accuracy.

This work not only  provides an optimal algorithm for crowdsourcing but sheds light on understanding the general method of moments. Empirical studies show that when the spectral method is used as an initialization for the EM algorithm, it outperforms EM with random initialization~\cite{Liang:NIPS:13, chaganty2013spectral}. This work provides a concrete way to theoretically justify such observations. It is also known that starting from a root-$n$ consistent estimator obtained by the spectral method, one Newton-Raphson step
leads to an asymptotically optimal estimator~\cite{TOPE:03}.
However, obtaining a root-$n$ consistent estimator and performing a Newton-Raphson step can be demanding computationally.
In contrast, our initialization doesn't need to be root-$n$ consistent, thus a small portion of data suffices to initialize. Moreover, performing one iteration of EM is computationally more attractive and numerically more robust than a Newton-Raphson step especially for high-dimensional problems.

The paper is organized as follows. In Section \ref{sec:related}, we provide background on crowdsourcing and the method of moments for latent variables models. In Section \ref{sec:setup}, we describe our crowdsourcing problem. Our provably optimal algorithm is presented in Section \ref{sec:main-algorithm}. Section \ref{sec:convergence} is devoted to theoretical analysis (with the proofs gathered in the Appendix). In Section \ref{sec:one-coin}, we consider the special case of the one-coin model.  A simpler algorithm is introduced  together with a sharper rate. Numerical results on both synthetic and real datasets are reported in Section~\ref{sec:exp}, followed by our conclusions in Section \ref{sec:conclusion}.

\section{Related Work}
\label{sec:related}

Many methods have been proposed to address the problem of estimating true labels in crowdsourcing
\cite{Whitehill:09, Raykar:10, welinder2010multidimensional, ghosh2011moderates, liu2012variational, zhou2012learning, Dalvi:13,  Oh:12, karger2013efficient, zhouliuplattmeek:14}. The methods in~\cite{Raykar:10, ghosh2011moderates, Oh:12, liu2012variational,  karger2013efficient,Dalvi:13} are based on the generative model proposed by Dawid and Skene~\cite{dawid1979maximum}. In particular, Ghosh et al.~\cite{ghosh2011moderates} propose a method based on Singular Value Decomposition (SVD) which addresses binary labeling problems under the one-coin model. The analysis in~\cite{ghosh2011moderates} assumes that the labeling matrix is full, that is, each worker labels all items. To relax this assumption, Dalvi et al.~\cite{Dalvi:13} propose another SVD-based algorithm which explicitly considers the sparsity of the labeling matrix in both algorithm design and theoretical analysis.
Karger et al.~propose an iterative algorithm for binary labeling problems under the one-coin model~\cite{Oh:12} and extended it to multi-class labeling tasks by converting a $k$-class problem  into $k-1$ binary problems~\cite{karger2013efficient}. This line of work assumes that tasks are assigned to workers according to a random regular graph, thus imposes specific constraints on the number of workers and the number of items.
In Section~\ref{sec:convergence}, we compare our theoretical results with that of existing approaches~\cite{ghosh2011moderates,Dalvi:13,Oh:12,karger2013efficient}. The methods in \cite{Raykar:10,liu2012variational,CheLinZho13} incorporate Bayesian inference into the Dawid-Skene estimator by assuming a prior over confusion matrices.
Zhou et al. \cite{zhou2012learning,zhouliuplattmeek:14} propose a minimax entropy principle for crowdsourcing which leads to an exponential family model  parameterized with worker ability and item difficulty.
When all items have zero difficulty, the exponential family model reduces to the generative model suggested by Dawid and Skene \cite{dawid1979maximum}.

Our method for initializing the EM algorithm in crowdsourcing is inspired by recent work using spectral methods to estimate latent variable models \cite{anandkumar2012tensor,anandkumar2012spectral,Anandkumar:12, Anandkumar:13, chaganty2013spectral,Zou:13, Hsu:12, jain2013learning}. The basic idea in this line of work is to compute third-order empirical moments from the data and then to estimate parameters by computing a certain orthogonal decomposition of tensor derived from the moments. Given the special symmetric structure of the moments, the tensor factorization can be computed efficiently using the robust tensor power method \cite{anandkumar2012tensor}.  A problem with this approach is that the estimation error can have a poor dependence on the condition number of the second-order moment matrix and thus empirically it sometimes performs worse than EM with multiple random initializations.  Our method, by contrast, requires only a rough initialization from the moment of moments; we show that the estimation error does not depend on the condition number (see Theorem \ref{theorem:stage-2} (b)).

\section{Problem Setting}
\label{sec:setup}

Throughout this paper, $[a]$ denotes the integer set $\{1,2,\dots,a\}$ and
$\sigma_b(A)$ denotes the $b$-th largest singular value of matrix $A$.
Suppose that there are $\mdim$ workers, $\numobs$ items and $k$ classes.  The true label $y_j$ of item $j \in [n]$ is assumed to be sampled from a probability distribution
$\mprob[y_j = l] = w_l$ where $\{w_l:l\in[\kdim]\}$ are positive values satisfying $\sum_{l=1}^\kdim w_l = 1$.
%In crowdsourcing, we observe noisy labels from
%workers.
Denote by a vector $z_{ij}\in \R^\kdim$  the label that worker $i$ assigns to item $j$. When the assigned label is $c, $ we write $z_{ij} = e_c$, where $e_c$ represents the $c$-th canonical basis vector in $\R^\kdim  $ in which the $c$-th entry is $1$ and all other entries are $0.$  A worker may not label every item.
Let  $\pi_i$ indicate the probability that worker $i$ labels a randomly chosen item.
If item $j$  is not labeled by worker $i$, we write $z_{ij} = 0$.
Our goal is to estimate the  true labels $\{y_j: j \in [n]\}$ from the observed labels $\{z_{ij}: i \in [m], j \in [n]\}.$

For this estimation purpose, we need to make assumptions on the process of generating observed labels. Following the work of  Dawid and Skene \cite{dawid1979maximum},  we assume  that the probability that worker $i$ labels an item in class $l$ as class $c$ is independent of any particular chosen item,  that is, it is a constant over $j \in [n]$.   Let us denote the constant probability by $\mu_{ilc}.$
%\[ \mprob[z_{ij} = e_{c} |y_j = l] = \mu_{ilc}.\]
Let $\mu_{il} = [\mu_{il1}~\mu_{il2}~\cdots~\mu_{ilk}]^T.$ The matrix $
  \cfmatrix_i = [\mu_{i1}~\mu_{i2}~\dots~\mu_{i\kdim}] \in \R^{\kdim\times\kdim} $
 is called the \emph{confusion matrix} of worker $i$. In the special case of the one-coin model, all the diagonal elements of $C_i$ are equal to a constant while all the off-diagonal elements are equal to another constant such that each row of $C_i$ sums to $1.$

\section{Our Algorithm}
\label{sec:main-algorithm}

\begin{algorithm}[t]
\DontPrintSemicolon
\KwIn{integer $k$, observed labels $z_{ij}\in\R^\kdim$ for $i\in[\mdim]$ and $j\in [\numobs]$.}
\KwOut{confusion matrix estimates $\Chat_i\in \R^{\kdim\times\kdim}$ for $i\in[\mdim]$.}
\begin{enumerate}[(1)]
  \item Partition the workers into three disjoint and non-empty group $G_1$, $G_2$ and $G_3$. Compute
  the group aggregated labels $Z_{gj}$ by Eq.~\eqref{eqn:group-agregated-label}.
  \item For $(a,b,c)\in \{(2,3,1),(3,1,2),(1,2,3)\}$, compute the second and the third order moments
  $\Mhat_2\in\R^{\kdim\times\kdim}$, $\Mhat_3\in\R^{\kdim\times\kdim\times \kdim}$ by Eq.~\eqref{eqn:compute-moments-1}-\eqref{eqn:compute-moments-4},
  then compute $\Cdiamhat_c\in \R^{k\times k}$ and $\What\in \R^{k\times k}$ by tensor decomposition:
  \begin{enumerate}[(a)]
    \item Compute whitening matrix $\Qhat \in\R^{\kdim\times\kdim}$ (such that $\Qhat^T \Mhat_2 \Qhat = I$) using SVD.
    \item Compute eigenvalue-eigenvector pairs $\{(\alphahat_h,\vhat_h)\}_{h=1}^\kdim$ of the whitened tensor $\Mhat_3(\Qhat,\Qhat,\Qhat)$ by using
    the robust tensor power method. Then compute $\weighthat_h = \alphahat_h^{-2}$ and $\mudiamhat_h = (\Qhat^T)^{-1}(\alphahat_h\vhat_h)$.
    \item For $l=1,\dots,\kdim$, set the $l$-th column of $\Cdiamhat_c$ by some $\mudiamhat_h$ whose $l$-th coordinate has the greatest component,
    then set the $l$-th diagonal entry of $\What$ by $\weighthat_h$.
  \end{enumerate}
  \item Compute $\Chat_i$ by Eq.~\eqref{eqn:compute-confusion-matrix-estimate}.
\end{enumerate}
\caption{Estimating confusion matrices}
\label{alg:estimating-confusion-matrix}
\end{algorithm}

In this section, we present an algorithm to estimate the confusion matrices and true labels. Our algorithm consists of two stages. In the first stage, we compute an initial
estimate for the confusion matrices via the method of moments. In the second stage,
we perform the standard EM algorithm by taking the result of the Stage 1 as an initialization.

\subsection{Stage 1: Estimating Confusion Matrices}

Partitioning the workers into three disjoint and non-empty groups $G_1$, $G_2$ and $G_3$,
the outline of this stage is the following: we use the method of moments to estimate the averaged confusion
matrices for the three groups, then utilize this intermediate estimate to obtain the confusion matrix
of each individual worker. In particular, for $g\in\{1,2,3\}$ and $j\in [\numobs]$, we calculate
the averaged labeling within each group by
\begin{align}\label{eqn:group-agregated-label}
  Z_{gj} \defeq \frac{1}{|G_g|}\sum_{i\in G_{g}} z_{ij}.
\end{align}
Denoting the aggregated confusion matrix columns by
\begin{align*}
\mudiam_{gl} \defeq \E(Z_{gj} | y_j =l) =  \frac{1}{|G_g|} \sum_{i\in G_{g}} \pi_i \mu_{il},
\end{align*}
our first step is to estimate $\Cdiam_g \defeq [\mudiam_{g1},\mudiam_{g2},\dots,\mudiam_{g\kdim}]$ and to estimate
the distribution of true labels $W\defeq {\rm diag}(w_1,w_2,\dots,w_\kdim)$.
The following proposition shows that we can solve for $\Cdiam_g$ and $W$ from the moments of $\{Z_{gj}\}$.

\begin{proposition}[Anandkumar et al.~\cite{anandkumar2012spectral}]\label{prop:multi-view-model}
  Assume that the vectors $\{\mudiam_{g1},\mudiam_{g2},\dots,\mudiam_{g\kdim}\}$ are linearly independent for each $g\in\{1,2,3\}$. Let
  $(a,b,c)$ be a permutation of $\{1,2,3\}$. Define
  \begin{align*}
    Z'_{aj} &\defeq \E[Z_{cj}\otimes Z_{bj}]\left( \E[Z_{aj}\otimes Z_{bj}] \right)^{-1} Z_{aj},\\
    Z'_{bj} &\defeq \E[Z_{cj}\otimes Z_{aj}]\left( \E[Z_{bj}\otimes Z_{aj}] \right)^{-1} Z_{bj},\\
    M_2 &\defeq \E[Z'_{aj}\otimes Z'_{bj}],\\
    M_3 &\defeq \E[Z'_{aj}\otimes Z'_{bj}\otimes Z_{cj}],
  \end{align*}
  Then,
  \begin{align*}
    M_2 = \sum_{l=1}^k w_l\; \mudiam_{cl}\otimes \mudiam_{cl}\quad \mbox{and} \quad
		M_3 = \sum_{l=1}^k w_l\; \mudiam_{cl}\otimes \mudiam_{cl}\otimes \mudiam_{cl}.\\
  \end{align*}
\end{proposition}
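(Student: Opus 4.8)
The plan is to exploit the conditional independence of the three group-aggregated labels given the true label $y_j$, which reduces the claim to a pair of matrix identities together with the tower property.

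First I would record the two structural facts that drive everything. Because the workers are partitioned into the disjoint groups $G_a$, $G_b$, $G_c$ and, under the Dawid--Skene model, distinct workers label independently conditioned on $y_j$, the aggregates $Z_{aj}$, $Z_{bj}$, $Z_{cj}$ are mutually independent conditioned on $y_j$. Combined with the definition $\E[Z_{gj}\mid y_j=l]=\mudiam_{gl}$ (so that $\E[Z_{gj}\mid y_j]=\Cdiam_g\,e_{y_j}$), the tower property gives, for any two distinct groups $g,g'$,
\begin{align*}
\E[Z_{gj}\otimes Z_{g'j}] = \E\bigl[\E[Z_{gj}\mid y_j]\otimes \E[Z_{g'j}\mid y_j]\bigr] = \sum_{l=1}^\kdim w_l\,\mudiam_{gl}\otimes\mudiam_{g'l} = \Cdiam_g W \Cdiam_{g'}^T,
\end{align*}
where the cross-moment is read as a $\kdim\times\kdim$ matrix.

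The core step is to show that the transformed views are mean-aligned to view $c$. Writing $Z'_{aj}=T_a Z_{aj}$ with $T_a \defeq \E[Z_{cj}\otimes Z_{bj}]\,(\E[Z_{aj}\otimes Z_{bj}])^{-1}$, I would substitute the identity above to get $T_a = (\Cdiam_c W \Cdiam_b^T)(\Cdiam_a W \Cdiam_b^T)^{-1}$. The linear independence hypothesis makes each $\Cdiam_g$ an invertible $\kdim\times\kdim$ matrix, and since $W$ is positive definite the product $\Cdiam_a W \Cdiam_b^T$ is invertible; factoring its inverse as $(\Cdiam_b^T)^{-1}W^{-1}\Cdiam_a^{-1}$ collapses the expression to $T_a=\Cdiam_c\Cdiam_a^{-1}$. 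Hence $\E[Z'_{aj}\mid y_j=l]=\Cdiam_c\Cdiam_a^{-1}\mudiam_{al}=\Cdiam_c\,e_l=\mudiam_{cl}$, and the symmetric computation yields $T_b=\Cdiam_c\Cdiam_b^{-1}$ and $\E[Z'_{bj}\mid y_j=l]=\mudiam_{cl}$.

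Finally I would assemble $M_2$ and $M_3$. Since $Z'_{aj}$ and $Z'_{bj}$ are deterministic functions of $Z_{aj}$ and $Z_{bj}$, the triple $Z'_{aj}$, $Z'_{bj}$, $Z_{cj}$ remains conditionally independent given $y_j$, all three with conditional mean $\mudiam_{cl}$ on the event $y_j=l$. Conditioning on $y_j$ and averaging over its distribution then gives $M_2=\sum_l w_l\,\mudiam_{cl}\otimes\mudiam_{cl}$ and $M_3=\sum_l w_l\,\mudiam_{cl}\otimes\mudiam_{cl}\otimes\mudiam_{cl}$, as claimed. The main obstacle is the invertibility bookkeeping in the core step: one must verify that the linear independence assumption together with $w_l>0$ indeed makes every inverse appearing in $T_a$ and $T_b$ well-defined, since this is precisely what licenses the cancellation $\Cdiam_c W \Cdiam_b^T(\Cdiam_b^T)^{-1}W^{-1}=\Cdiam_c$. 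Everything else is the tower property applied to a conditionally independent family.
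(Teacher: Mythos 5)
Your proof is correct and is essentially the paper's own approach: the paper does not reprove this proposition but cites Anandkumar et al., and your derivation---conditional independence of the disjoint group aggregates given $y_j$, the cross-moment identity $\E[Z_{gj}\otimes Z_{g'j}]=\Cdiam_g W (\Cdiam_{g'})^T$, and the cancellation $T_a=(\Cdiam_c W (\Cdiam_b)^T)(\Cdiam_a W (\Cdiam_b)^T)^{-1}=\Cdiam_c\Cdiam_a^{-1}$ that aligns both transformed views to conditional mean $\mudiam_{cl}$---is precisely the standard symmetrization argument from that cited source. The invertibility bookkeeping you flag is indeed all that is required, and it is supplied by the linear independence hypothesis (each $\Cdiam_g$ invertible) together with $w_l>0$ (so $W$ is invertible).
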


Since we only have finite samples, the expectations in Proposition~\ref{prop:multi-view-model}
must be approximated by empirical moments.
In particular, they are computed by averaging over indices $j = 1,2,\dots,\numobs$. For each permutation $(a,b,c) \in \{(2,3,1),(3,1,2),(1,2,3)\}$, we compute
\begin{subequations}
\begin{align}
    \Zhat'_{aj} &\defeq \Big(\frac{1}{\numobs}\sum_{j=1}^\numobs Z_{cj}\otimes Z_{bj} \Big)\Big(\frac{1}{\numobs}\sum_{j=1}^\numobs Z_{aj}\otimes Z_{bj}\Big)^{-1} Z_{aj},\label{eqn:compute-moments-1}\\
    \Zhat'_{bj} &\defeq \Big(\frac{1}{\numobs}\sum_{j=1}^\numobs Z_{cj}\otimes Z_{aj} \Big)\Big(\frac{1}{\numobs}\sum_{j=1}^\numobs Z_{bj}\otimes Z_{aj}\Big)^{-1} Z_{bj},\\
    \Mhat_2 &\defeq \frac{1}{\numobs}\sum_{j=1}^\numobs \Zhat'_{aj} \otimes \Zhat'_{bj},\label{eqn:compute-moments-3}\\
    \Mhat_3 &\defeq \frac{1}{\numobs}\sum_{j=1}^\numobs \Zhat'_{aj} \otimes \Zhat'_{bj} \otimes Z_{cj}.\label{eqn:compute-moments-4}
\end{align}
\end{subequations}

The statement of Proposition~\ref{prop:multi-view-model}
suggests that we can recover the columns of $\Cdiam_c$ and the diagonal entries of $W$
by operating on the moments $\Mhat_2$ and $\Mhat_3$. This is implemented
by the  the tensor factorization method in Algorithm~\ref{alg:estimating-confusion-matrix}.
In particular, the tensor factorization algorithm returns a set of vectors $\{(\mudiamhat_h,\weighthat_h): h=1,\dots,\kdim\}$,
where each $(\mudiamhat_h,\weighthat_h)$ estimates a particular column of $\Cdiam_{c}$ (for some $\mudiam_{cl}$) and a particular diagonal entry of $W$ (for some $w_l$).
It is important to note that the tensor factorization algorithm doesn't provide a one-to-one correspondence
between the recovered column and the true columns of $\Cdiam_{c}$. Thus, $\mudiamhat_1,\dots,\mudiamhat_\kdim$
represents an arbitrary permutation of the true columns.

To discover the index correspondence,
we take each $\mudiamhat_h$ and examine its greatest component.
We assume that within each group, the probability of assigning a
correct label is always greater than the probability of assigning any specific incorrect label.
This assumption will be made precise in the next section. As a consequence,
if $\mudiamhat_h$ corresponds to the $l$-th column of $\Cdiam_c$,
then its $l$-th coordinate is expected to be greater than other coordinates.
Thus, we set the $l$-th column of $\Cdiamhat_c$ to some vector $\mudiamhat_h$
whose $l$-th coordinate has the greatest component (if there are multiple such vectors,
then randomly select one of them; if there is no such vector,
then randomly select a $\mudiamhat_h$). Then, we set the $l$-th diagonal entry of $\What$
to the scalar $\weighthat_h$ associated with $\mudiamhat_h$.
Note that by iterating over $(a,b,c) \in \{(2,3,1),(3,1,2),(1,2,3)\}$, we
obtain $\Cdiamhat_c$ for $c=1,2,3$ respectively. There will be three copies
of $\What$ estimating the same matrix $W$---we average them for the best accuracy.\\

In the second step, we estimate each individual confusion matrix $C_i$. The following proposition
shows that we can recover $C_i$ from the moments of $\{z_{ij}\}$.

\begin{proposition}\label{prop:recover-individual-matrix}
  For any $g\in\{1,2,3\}$ and any $i\in G_g$, let $a \in \{1,2,3\}\backslash\{g\}$ be one of the remaining group index. Then
  \begin{align*}
    \pi_i C_i W (\Cdiam_a)^T = \E[z_{ij}Z_{aj}^T].\\
  \end{align*}
\end{proposition}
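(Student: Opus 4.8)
The plan is to expand the right-hand side $\E[z_{ij}Z_{aj}^T]$ by conditioning on the latent true label $y_j$ and then recognize the result as the claimed matrix product. First I would apply the law of total expectation over the $k$ possible values of $y_j$, using $\mprob[y_j=l]=w_l$, to write
\[
  \E[z_{ij}Z_{aj}^T] = \sum_{l=1}^k w_l\,\E[z_{ij}Z_{aj}^T \mid y_j = l].
\]

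The crucial step is to factor each conditional expectation. Since $i\in G_g$ and $a\neq g$, the disjointness of the groups guarantees $i\notin G_a$, so $z_{ij}$ involves only worker $i$ whereas $Z_{aj}$ involves only the workers in $G_a$. Invoking the Dawid--Skene assumption that, conditioned on the true label $y_j$, the labels produced by distinct workers are mutually independent, I would obtain
\[
  \E[z_{ij}Z_{aj}^T \mid y_j = l] = \E[z_{ij}\mid y_j=l]\,\bigl(\E[Z_{aj}\mid y_j=l]\bigr)^T.
\]

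I would then evaluate the two factors separately. By the definition of the confusion matrix together with the labeling probability $\pi_i$---worker $i$ abstains with probability $1-\pi_i$, contributing $0$, and otherwise emits $e_c$ with probability $\mu_{ilc}$---we have $\E[z_{ij}\mid y_j=l]=\pi_i\mu_{il}$, which is exactly the single-worker analogue of the identity already recorded for $\mudiam_{gl}$. The second factor is $\E[Z_{aj}\mid y_j=l]=\mudiam_{al}$ directly by the definition of $\mudiam_{al}$. Substituting back yields $\E[z_{ij}Z_{aj}^T]=\pi_i\sum_{l=1}^k w_l\,\mu_{il}(\mudiam_{al})^T$.

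Finally I would verify that this sum is precisely $\pi_i C_i W(\Cdiam_a)^T$. Writing out $C_i=[\mu_{i1},\dots,\mu_{ik}]$, $W={\rm diag}(w_1,\dots,w_k)$, and $\Cdiam_a=[\mudiam_{a1},\dots,\mudiam_{ak}]$, the product $C_iW(\Cdiam_a)^T$ expands column-by-column into $\sum_{l=1}^k w_l\,\mu_{il}(\mudiam_{al})^T$, matching the displayed expression. The only genuinely delicate point is the conditional-independence factorization: it rests on $i\notin G_a$ (ensured by disjointness of the groups and $a\neq g$) and on the modeling assumption that worker responses---including the abstention events---are independent given $y_j$. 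Everything else is routine bookkeeping with the matrix notation.
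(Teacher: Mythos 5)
Your proposal is correct and follows essentially the same route as the paper: condition on $y_j$ via the law of total expectation, factor the conditional expectation using the conditional independence of worker labels given the true label, and identify the resulting sum $\pi_i\sum_{l=1}^k w_l\,\mu_{il}(\mudiam_{al})^T$ with $\pi_i C_i W(\Cdiam_a)^T$. Your explicit remark that the factorization requires $i\notin G_a$ (guaranteed by group disjointness and $a\neq g$) is a point the paper's proof leaves implicit, and is a worthwhile clarification.
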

\begin{proof}
 First, notice that
 \begin{align}
     \E[z_{ij}Z_{aj}^T]= \E\left[\E[z_{ij}Z_{aj}^T |y_j]\right]= \sum_{l=1}^k w_l \E\left[z_{ij}Z_{aj}^T |y_j=l\right].
     \label{eqn:recover_indiv_1}
 \end{align}
  Since $z_{ij}$ for $1 \leq i \leq m$ are conditionally independent given $y_j$, we can write
  \begin{align}
    \E\left[z_{ij}Z_{aj}^T |y_j=l\right]=     \E\left[z_{ij} |y_j=l \right] \E\left[Z_{aj}^T |y_j=l\right]=(\pi_i \mu_{il})(\mudiam_{al})^T.
         \label{eqn:recover_indiv_2}
  \end{align}
  Combining \eqref{eqn:recover_indiv_1} and \eqref{eqn:recover_indiv_2} implies the desired result,
  \begin{align*}
    \E[z_{ij}Z_{aj}^T]= \pi_i \sum_{l=1}^k w_l \mu_{il}(\mudiam_{al})^T=\pi_i C_i W (\Cdiam_a)^T.
  \end{align*}
\end{proof}

Proposition~\ref{prop:recover-individual-matrix} suggests a plug-in estimator for $C_i$. We compute $\Chat_i$
using the empirical approximation of $\E[z_{ij}Z_{aj}^T]$ and using the matrices
$\Cdiamhat_a$, $\Cdiamhat_b$, $\What$ obtained in the first step. Concretely, we calculate
\begin{align}\label{eqn:compute-confusion-matrix-estimate}
  \Chat_i \defeq {\rm normalize}\left\{ \Big(\frac{1}{\numobs}\sum_{j=1}^\numobs z_{ij}Z_{aj}^T \Big)\Big( \What(\Cdiamhat_a)^T \Big)^{-1} \right\},
\end{align}
where the normalization operator rescales the matrix columns, making sure that each column sums to $1$.
The overall procedure for Stage 1 is summarized in Algorithm~\ref{alg:estimating-confusion-matrix}.

\subsection{Stage 2: EM algorithm}

The second stage is devoted to refining the initial estimate provided by
Stage 1.  The joint likelihood of true label $y_j$ and observed labels $z_{ij}$,
as a function of confusion matrices $\mu_{i}$, can be written as
\begin{align*}
  L(\mu; y,z) &\defeq \prod_{j=1}^\numobs \prod_{i=1}^\mdim \prod_{c=1}^\kdim (\mu_{iy_jc})^{\indicator(z_{ij} = e_c)}.
\end{align*}
By assuming a uniform prior over $y$, we maximize the marginal log-likelihood function
\begin{align}
  \ell(\mu) \defeq \log\left( \sum_{y\in [\kdim]^\numobs} L(\mu; y,z)\right).\label{eqn:likelihood-lower-bound}
\end{align}

We refine the initial estimate of Stage 1 by maximizing the objective function~\eqref{eqn:likelihood-lower-bound},
which is implemented by the Expectation Maximization (EM)
algorithm. The EM algorithm takes as initialization the values
$\{\muhat_{ilc}\}$ provided as output by Stage 1, and
then executes the following E-step and M-step for at least one round.
\begin{description}
  \item[E-step] Calculate the expected value of the log-likelihood function, with respect to the conditional distribution of $y$ given $z$ under the current estimate of $\mu$:
  \begin{align}
    & Q(\mu) \defeq \E_{y|z,\muhat}\left[ \log(L(\mu; y,z)) \right] = \sum_{j=1}^\numobs \left\{ \sum_{l=1}^\kdim \qhat_{jl}\log\left( \prod_{i=1}^\mdim \prod_{c=1}^\kdim (\mu_{ilc})^{\indicator(z_{ij} = e_c)} \right) \right\},\nonumber\\
    &\mbox{where}\quad  \qhat_{jl} \leftarrow \frac{\exp\big( \sum_{i=1}^\mdim \sum_{c=1}^\kdim \indicator(z_{ij} = e_c)\log(\muhat_{ilc}) \big)}
  {\sum_{l'=1}^\kdim \exp\big( \sum_{i=1}^\mdim \sum_{c=1}^\kdim \indicator(z_{ij} = e_c)\log(\muhat_{il'c}) \big)} \qquad \mbox{for $j\in[\numobs]$, $l\in[\kdim]$.}\label{eqn:q_update}
  \end{align}
  \item[M-step] Find the estimate $\muhat$ that maximizes the function $Q(\mu)$:
  \begin{align}
    \muhat_{ilc} &\leftarrow \frac{ \sum_{j=1}^n \qhat_{jl} \indicator(z_{ij}=e_c) }{ \sum_{c'=1}^\kdim \sum_{j=1}^n \qhat_{jl} \indicator(z_{ij}=e_{c'}) } \qquad \mbox{for $i\in[\mdim]$, $l\in[\kdim]$, $c\in [\kdim]$.} \label{eqn:p_update}
  \end{align}
\end{description}
In practice, we alternatively execute the updates~\eqref{eqn:q_update} and \eqref{eqn:p_update},
for one iteration or until convergence.  Each update increases the objective function~$\ell(\mu)$.
Since $\ell(\mu)$ is not concave, the EM update doesn't guarantee converging to the global
maximum. It may converge to distinct local stationary points for different initializations.
Nevertheless, as we prove in the next section, it is guaranteed that the EM algorithm will output statistically optimal estimates of
true labels and worker confusion matrices if it is initialized by Algorithm~\ref{alg:estimating-confusion-matrix}.

\section{Convergence Analysis}
\label{sec:convergence}

To state our main theoretical results, we first need to introduce some notation and assumptions.
Let
\begin{align*}
  \wmin \defeq \min \{w_l\}_{l=1}^\kdim \quad \mbox{and} \quad \pimin \defeq \{\pi_i\}_{i=1}^\mdim
\end{align*}
be the smallest portion of true labels and the most extreme sparsity level of workers. Our first
assumption assumes that both $\wmin$ and $\pimin$ are strictly positive, that is,  every class
and every worker contributes to the dataset.

Our second assumption assumes that the confusion matrices for each of the three groups, namely $\Cdiam_1$, $\Cdiam_2$ and $\Cdiam_3$,
are nonsingular. As a consequence, if we define matrices $S_{ab}$ and tensors $T_{abc}$ for any $a,b,c\in\{1,2,3\}$ as
\begin{align*}
  S_{ab} \defeq \sum_{l=1}^\kdim w_l\;\mudiam_{al}\otimes \mudiam_{bl} = \Cdiam_a W (\Cdiam_b)^T \quad \mbox{and} \quad T_{abc} &\defeq \sum_{l=1}^\kdim w_l\; \mudiam_{al}\otimes \mudiam_{bl}\otimes \mudiam_{cl},
\end{align*}
then there will be a positive scalar $\sigmadown$ such that $\sigma_\kdim(S_{ab}) \geq \sigmadown > 0$.

Our third assumption assumes that within each group, the average probability of assigning
a correct label is always higher than the average probability of assigning any incorrect label.
To make this statement rigorous, we define a quantity
\begin{align*}
  \dgap \defeq \min_{g\in\{1,2,3\}} \min_{l\in[\kdim]} \min_{c\in [\kdim]\backslash\{l\}}\{ \mudiam_{gll} - \mudiam_{glc} \}
\end{align*}
indicating the smallest gap between diagonal entries and non-diagonal entries in the confusion matrix.
The assumption requires that $\dgap$ is strictly positive. Note that this assumption is group-based,
thus doesn't assume the accuracy of any individual worker.

Finally, we introduce a quantity that measures the average ability of workers
in identifying distinct labels. For two discrete distributions $P$ and $Q$, let
$ \KL{P,Q} \defeq \sum_i P(i)\log(P(i)/Q(i)) $
represent the KL-divergence between $P$ and $Q$. Since each column of the confusion
matrix represents a discrete distribution, we can define the following quantity:
\begin{align}\label{eqn:define-min-kl-distance}
  \Dbar = \min_{l\neq l'} \frac{1}{\mdim}\sum_{i=1}^\mdim \pi_i \KL{\mu_{il},\mu_{il'}}.
\end{align}
The quantity $\Dbar$ lower bounds the averaged KL-divergence between two columns. If $\Dbar$
is strictly positive, it means that every pair of labels can be distinguished by at least
one subset of workers. As the last assumption, we assume that $\Dbar$ is strictly positive.
\\

The following two theorems characterize the performance of our algorithm. We
split the convergence analysis into two parts. Theorem~\ref{theorem:stage-1} characterizes the performance
of Algorithm~\ref{alg:estimating-confusion-matrix}, providing sufficient conditions for achieving an arbitrarily
accurate initialization. We provide the proof of Theorem~\ref{theorem:stage-1} in
Appendix~\ref{sec:proof-stage-1}.\\

\begin{theorem}\label{theorem:stage-1}
For any scalar $\delta > 0$ and any scalar $\little$ satisfying $\little\leq \min\left\{ \frac{36\dgap\kdim}{\pimin\wmin \sigmadown},2 \right\}$,
if the number of items $\numobs$ satisfies
\begin{align*}
   \numobs = \Omega\left( \frac{\kdim^{5}\log((\kdim+\mdim)/\delta) }{\little^2 \pimin^2 \wmin^{2}\sigmadown^{13}}  \right),
\end{align*}
then the confusion matrices returned by Algorithm~\ref{alg:estimating-confusion-matrix} are bounded as
\begin{align*}
  \| \Chat_i - C_i\|_\infty \leq \little \qquad \mbox{for all $i\in [\mdim]$},
\end{align*}
with probability at least $1 - \delta$. Here, $\|\cdot\|_\infty$ denotes the element-wise $\ell_\infty$-norm of a matrix.
\end{theorem}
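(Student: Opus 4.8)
The plan is to track the error propagation through the three steps of Algorithm~\ref{alg:estimating-confusion-matrix}, converting a single high-probability bound on the deviation of the empirical moments into the stated element-wise bound on each $\Chat_i$. The target accuracy $\little$ will be achieved by first guaranteeing an intermediate accuracy $\epsilon$ on the group-level quantities $\Cdiamhat_c$ and $\What$, and then showing the plug-in estimator inherits this accuracy.

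First I would establish concentration of the empirical moments. Each label $z_{ij}$ is either $0$ or a canonical basis vector, so the group averages $Z_{gj}$ are bounded, and a raw empirical cross moment such as $\frac{1}{\numobs}\sum_j Z_{cj}\otimes Z_{bj}$ is an average of $\numobs$ i.i.d. bounded terms. A matrix Bernstein (or Hoeffding) inequality then gives that each such empirical cross moment deviates from its expectation by $O(\sqrt{\log((\kdim+\mdim)/\delta)/\numobs})$ with probability $1-\delta$. Since forming $\Zhat'_{aj}$ in Eq.~\eqref{eqn:compute-moments-1} requires inverting $\frac{1}{\numobs}\sum_j Z_{aj}\otimes Z_{bj}$, I would use the second assumption, that $\sigma_\kdim(S_{ab})\ge\sigmadown$, to guarantee that once $\numobs$ is large enough this empirical matrix is invertible and its inverse is close to the population inverse, with the error inflated by a factor $\sigmadown^{-O(1)}$. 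Chaining these estimates yields $\|\Mhat_2-M_2\|$ and $\|\Mhat_3-M_3\|$ small, where by Proposition~\ref{prop:multi-view-model} the population moments are exactly $M_2=\sum_l w_l\,\mudiam_{cl}\otimes\mudiam_{cl}$ and $M_3=\sum_l w_l\,\mudiam_{cl}\otimes\mudiam_{cl}\otimes\mudiam_{cl}$.

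Next I would feed this perturbation into the robust tensor power method. Using the whitening matrix $\Qhat$ built from $\Mhat_2$, I would bound $\|\Qhat-Q\|$ for a population whitener $Q$ in terms of $\|\Mhat_2-M_2\|$ and $\sigmadown$, and then bound the perturbation of the whitened tensor $\Mhat_3(\Qhat,\Qhat,\Qhat)$ relative to $M_3(Q,Q,Q)$, which is orthogonally decomposable. The standard perturbation guarantee for the robust power method then gives a permutation $\sigma$ with $\|\mudiamhat_h-\mudiam_{c\sigma(h)}\|$ and $|\weighthat_h-w_{\sigma(h)}|$ both $O(\epsilon)$ after un-whitening, where $\epsilon$ collects the moment errors and the accumulated $\sigmadown^{-O(1)}$ factors. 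The crucial use of the third assumption, $\dgap>0$, is to fix the label correspondence: once $\epsilon<\dgap/2$, the largest coordinate of each $\mudiamhat_h$ sits in the correct row, so the greatest-component matching in step (c) recovers $\sigma$ exactly and delivers $\|\Cdiamhat_c-\Cdiam_c\|$ and $\|\What-W\|$ of order $\epsilon$ for every $c$.

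Finally I would propagate to the individual confusion matrices. By Proposition~\ref{prop:recover-individual-matrix}, $\frac{1}{\numobs}\sum_j z_{ij}Z_{aj}^T$ concentrates to $\pi_i C_i W(\Cdiam_a)^T$; combining this with the step-1 errors in $\What(\Cdiamhat_a)^T$ and bounding the relevant inverse through $\sigma_\kdim(W(\Cdiam_a)^T)\ge\wmin\,\sigma_\kdim(\Cdiam_a)$, itself controlled by $\sigmadown$, shows the matrix inside the normalization in Eq.~\eqref{eqn:compute-confusion-matrix-estimate} is within $O(\epsilon/\pimin)$ of $\pi_i C_i$. Since each column of the true $C_i$ sums to one, the column normalization removes the unknown scalar $\pi_i$ and yields $\|\Chat_i-C_i\|_\infty\le\little$; a union bound over all $i\in[\mdim]$, which is the source of the $\log((\kdim+\mdim)/\delta)$ factor, makes this simultaneous. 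Inverting the relation $\epsilon\lesssim\little$ to solve for $\numobs$ produces the stated bound, and the admissibility constraint $\little\le 36\dgap\kdim/(\pimin\wmin\sigmadown)$ is exactly what keeps the target accuracy compatible with the $\epsilon<\dgap/2$ needed to fix the permutation. I expect the tensor-decomposition step to be the main obstacle: carefully tracking how the whitening, the power-method perturbation, and the un-whitening compound multiple inverse-singular-value factors is what produces the large exponent $\sigmadown^{13}$, and keeping each of these bounds tight enough to land on that exponent is the delicate part.
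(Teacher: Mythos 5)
Your proposal follows essentially the same route as the paper's proof: concentration of the empirical cross moments with inverse-perturbation bounds through $\sigmadown$ (the paper's Lemma~\ref{lemma:moment-estimation-error}), a whitening-plus-robust-tensor-power-method perturbation bound with the permutation fixed by the $\dgap/2$ margin (the paper's Lemma~\ref{lemma:tensor-decomposition-error}, which simply invokes Lemma~4 of Chaganty and Liang rather than re-deriving the whitening analysis you sketch), and finally the plug-in step of Eq.~\eqref{eqn:compute-confusion-matrix-estimate} where the column normalization cancels $\pi_i$ because each column of $C_i$ sums to one, followed by a union bound and inversion of the accuracy--sample-size relation. You also correctly identify where the constraint $\little\leq 36\dgap\kdim/(\pimin\wmin\sigmadown)$ enters and that the compounding of inverse-singular-value factors across the three stages is what yields the $\sigmadown^{13}$ exponent, so there is no substantive gap.
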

\vspace{10pt}

Theorem~\ref{theorem:stage-2} characterizes the error rate in Stage 2. It states that when a sufficiently accurate
initialization is taken, the updates~\eqref{eqn:q_update} and~\eqref{eqn:p_update} refine the estimates $\muhat$ and $\yhat$
to the optimal accuracy.  See Appendix~\ref{sec:proof-stage-2} for the proof.\\

\begin{theorem}\label{theorem:stage-2}
  Assume that $\mu_{ilc} \geq \gap$ holds for all $(i,l,c)\in [\mdim]\times[\kdim]^2$.
  For any scalar $\delta > 0$, if confusion matrices $\Chat_i$ are initialized in a way such that
  \begin{align}\label{eqn:initialization-accuracy-condition}
    \norms{\Chat_i - C_i}_\infty \leq \minac \defeq \min\left\{ \frac{\gap}{2},\frac{\gap\Dbar}{16}\right\}\qquad \mbox{for all $i\in [\mdim]$}
  \end{align}
  and the number of workers $\mdim$ and the number of items $\numobs$ satisfy
  \begin{align*}
    \mdim = \Omega\left( \frac{\log(1/\gap)\log(\kdim\numobs/\delta)+\log(\mdim\numobs)}{\Dbar} \right)\quad\mbox{and} \quad
    \numobs = \Omega\left( \frac{\log(\mdim\kdim/\delta)}{\pimin\wmin\minac^2} \right),
  \end{align*}
  then, for $\muhat$ and $\qhat$ obtained by iterating~\eqref{eqn:q_update} and~\eqref{eqn:p_update} (for at least one round), with probability
  at least $1-\delta$,
  \begin{enumerate}[(a)]
    \item Let $\yhat_j = \arg\max_{l\in[\kdim]} \qhat_{jl}$, then $\yhat_j = y_j$ holds for all $j\in[\numobs]$.
    \item $\ltwos{ \muhat_{il} - \mu_{il} }^2 \leq \frac{48\log(2 \mdim\kdim/\delta)}{\pi_i w_l \numobs}$ holds for all $(i,l)\in [\mdim]\times[\kdim]$.
  \end{enumerate}
\end{theorem}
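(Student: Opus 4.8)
The plan is to analyze the two EM updates in sequence: first show that a single E-step, when fed the accurate initialization~\eqref{eqn:initialization-accuracy-condition}, produces posteriors $\qhat_{jl}$ that are sharply concentrated on the true label, and then argue that the subsequent M-step therefore reduces to an almost-oracle multinomial frequency estimator whose fluctuations yield the rate in part~(b). Fix an item $j$ with true label $y_j$ and a competing label $l\neq y_j$. From~\eqref{eqn:q_update} the sign of $\qhat_{jl}-\qhat_{jy_j}$ is governed by the log-odds
\begin{align*}
  \Lambda_{jl} \defeq \sum_{i=1}^\mdim \Lambda_{jl}^{(i)}, \qquad \Lambda_{jl}^{(i)}\defeq \sum_{c=1}^\kdim \indicator(z_{ij}=e_c)\bigl(\log\muhat_{ilc}-\log\muhat_{iy_jc}\bigr),
\end{align*}
a sum of $\mdim$ terms independent over workers, each nonzero only when worker $i$ labels item $j$. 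The first step is to split $\Lambda_{jl}$ into its population mean, a bias from the initialization error, and a centered fluctuation. Because $\mu_{ilc}\geq\gap$ and $\minac\leq\gap/2$ force $\muhat_{ilc}\geq\gap/2$, each $\log\muhat$ is $2/\gap$-Lipschitz in its argument, so replacing $\muhat$ by $\mu$ costs at most $O(\minac/\gap)$ per term; summing and using $\minac\leq\gap\Dbar/16$ bounds the total bias by $\tfrac14\mdim\Dbar$. The population mean of the centered version equals $-\sum_i\pi_i\KL{\mu_{iy_j},\mu_{il}}\leq-\mdim\Dbar$ by the definition~\eqref{eqn:define-min-kl-distance} of $\Dbar$. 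Hence $\E[\Lambda_{jl}]\leq-\tfrac34\mdim\Dbar$, and it suffices to show that $\Lambda_{jl}$ does not exceed its mean by more than $\tfrac14\mdim\Dbar$.

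The hard part is obtaining this deviation bound with the sharp $1/\Dbar$ (rather than $1/\Dbar^2$) dependence on $\mdim$. A naive Hoeffding bound over the $\mdim$ bounded summands would only yield $\mdim=\Omega(\log(1/\gap)^2\log(\cdot)/\Dbar^2)$. Instead I would use a Chernoff argument,
\begin{align*}
  \mprob[\Lambda_{jl}\geq 0]\leq\inf_{t\in(0,1)}\prod_{i=1}^\mdim \E\bigl[e^{t\Lambda_{jl}^{(i)}}\bigr],
\end{align*}
choosing $t$ bounded away from $0$ and $1$ (e.g.\ $t=1/2$) so that each factor becomes a R\'enyi/Bhattacharyya affinity of $\muhat_{iy_j}$ and $\muhat_{il}$. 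The key structural fact is that for log-likelihood ratios of distributions bounded below by $\gap$, the exponential-moment exponent is comparable, up to a factor $O(\log(1/\gap))$, to the KL divergence appearing in $\Dbar$; equivalently $\mathrm{Var}(\Lambda_{jl}^{(i)})\lesssim \log(1/\gap)\,\KL{\mu_{iy_j},\mu_{il}}$, so the variance scales with the mean. This ``variance proportional to mean'' property is exactly what turns the exponent into one of order $\mdim\Dbar/\log(1/\gap)$, giving per-pair misclassification probability $\exp(-c\,\mdim\Dbar/\log(1/\gap))$. A union bound over the $\numobs$ items and $\kdim-1$ competing labels then needs $\mdim\Dbar\gtrsim\log(1/\gap)\log(\kdim\numobs/\delta)$, precisely the first term in the stated lower bound on $\mdim$, and it establishes part~(a): $\yhat_j=\arg\max_l\qhat_{jl}=y_j$ for every $j$ simultaneously.

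For part~(b) I would first upgrade the qualitative E-step conclusion to the quantitative statement that $\sum_{j}\lvert\qhat_{jl}-\indicator(y_j=l)\rvert$ is negligible: each misassigned posterior mass is at most $\exp(-c\,\mdim\Dbar/\log(1/\gap))$, so the total soft-assignment error is at most $\numobs\kdim\exp(-c\,\mdim\Dbar/\log(1/\gap))$, which the second ($\log(\mdim\numobs)$) term in the $\mdim$ bound drives well below the eventual statistical rate. Consequently the update~\eqref{eqn:p_update} agrees, up to this negligible perturbation of numerator and denominator, with the oracle estimator $\tilde\mu_{ilc}=N_{ilc}/N_{il}$, where $N_{il}=\#\{j:y_j=l,\ \text{worker }i\text{ labels }j\}$ and $N_{ilc}$ counts those labeled $c$. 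The final step is standard concentration: conditioned on $\{j:y_j=l\}$, each $N_{ilc}$ is binomial with mean $N_{il}\mu_{ilc}$, so a Bernstein/multinomial bound gives $\|\tilde\mu_{il}-\mu_{il}\|_2^2\leq O(\log(\mdim\kdim/\delta)/N_{il})$; combining with $N_{il}\geq\tfrac12\pi_i w_l\numobs$, which holds with high probability once $\numobs\gtrsim\log(\mdim\kdim/\delta)/(\pimin\wmin)$, yields $\|\muhat_{il}-\mu_{il}\|_2^2\leq 48\log(2\mdim\kdim/\delta)/(\pi_i w_l\numobs)$ after tracking constants. The extra $\minac^{-2}$ factor in the stated bound on $\numobs$ serves to guarantee that this refined estimate itself satisfies~\eqref{eqn:initialization-accuracy-condition}, so the whole argument is invariant under further EM rounds and the guarantee persists for iterating ``at least one round.'' The main obstacle throughout is the E-step concentration: everything else is bookkeeping, but extracting the $1/\Dbar$ scaling from the exponential-moment bound, while simultaneously absorbing the initialization bias and keeping the $\gap$-dependence only logarithmic, is the delicate step.
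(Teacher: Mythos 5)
Your proposal is correct and follows essentially the same route as the paper's proof: the same decomposition of the E-step log-odds into an initialization bias of at most $\tfrac{1}{4}\mdim\Dbar$ (via $\minac \leq \gap\Dbar/16$) plus a deviation controlled by the variance-proportional-to-KL bound $\E[(\Lambda_{jl}^{(i)})^2]\lesssim \log(1/\gap)\,\pi_i\KL{\mu_{iy_j},\mu_{il}}$ (the paper's Lemma~\ref{lemma:bound-variance}, applied through Bernstein rather than your Chernoff-at-$t=1/2$ formulation, which you yourself note is equivalent), the same reduction of the M-step to a near-oracle multinomial frequency estimate with Chernoff counting concentration, and the same use of the $\minac^{-2}$ condition on $\numobs$ to make the error bounds self-sustaining across further EM rounds (the paper's induction in Lemma~\ref{lemma:update-stability}). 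No gaps; the argument as sketched fills in to the paper's proof.
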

\vspace{10pt}

In Theorem~\ref{theorem:stage-2}, the assumption that all confusion matrix entries are lower bounded by $\gap > 0$
is somewhat restrictive. For datasets violating this assumption, we enforce positive confusion matrix entries
by adding random noise: Given any observed label $z_{ij}$,
we replace it by a random label in $\{1,...,k\}$ with probability $k \rho$. In this modified model,
every entry of the confusion matrix is lower bounded by $\rho$, so that Theorem~\ref{theorem:stage-2} holds.
The random noise makes the constant $\Dbar$ smaller than its original value,
but the change is minor for small $\rho$.

To see the consequence of the convergence analysis, we take error rate $\little$ in Theorem~\ref{theorem:stage-1}
equal to the constant~$\minac$ defined in Theorem~\ref{theorem:stage-2}. Then we combine the statements of the two
theorems. This shows that if we choose the number of workers $\mdim$ and the number of items $\numobs$ such
that
\begin{align}\label{eqn:mn-simplified-condition}
  \mdim = \widetilde \Omega\left( \frac{1}{\Dbar}\right) \quad\mbox{and}\quad  \numobs = \widetilde \Omega\left( \frac{\kdim^{5} }{\pimin^2 \wmin^{2}\sigmadown^{13} \min\{\gap^2, (\gap\Dbar)^2\}}  \right);
\end{align}
that is, if both $\mdim$ and $\numobs$ are lower bounded by a problem-specific constant and logarithmic terms,
then with high probability, the predictor $\yhat$ will be perfectly accurate, and the estimator $\muhat$ will be
bounded as $\ltwos{ \muhat_{il} - \mu_{il} }^2 \leq \widetilde \order(1/(\pi_i w_l \numobs))$.
To show the optimality of this convergence rate, we present the following minimax lower bounds. See
Appendix~\ref{sec:proof-optimality} for the proof.\\

\begin{theorem}\label{theorem:optimality}
There are universal constants $\UNICON_1 > 0$ and $\UNICON_2 > 0$ such that:
\begin{enumerate}[(a)]
  \item For any $\{\mu_{ilc}\}$, $\{\pi_{i}\}$ and any number of items~$\numobs$, if the number of workers $\mdim \leq 1/(4\Dbar)$, then
  \begin{align*}
    \inf_{\yhat} \sup_{v\in [\kdim]^\numobs} \E\Big[\sum_{j=1}^\numobs \indicator(\yhat_j\neq y_j) \Big | \{\mu_{ilc}\},\{\pi_{i}\},y = v\Big] \geq \UNICON_1 \numobs.
  \end{align*}

  \item For any $\{w_{l}\}$, $\{\pi_{i}\}$, any worker-item pair $(\mdim,\numobs)$ and any pair of indices $(i,l)\in[\mdim]\times[\kdim]$, we have
  \begin{align*}
    \inf_{\muhat} \sup_{\mu\in \R^{\mdim\times\kdim\times\kdim}} \E\Big[ \ltwos{\muhat_{il} - \mu_{il}}^2 \Big| \{w_l\}, \{\pi_i\} \Big] \geq \UNICON_2\;\min\left\{1, \frac{1}{\pi_{i} w_{l}\numobs}\right\}.
  \end{align*}
\end{enumerate}
\end{theorem}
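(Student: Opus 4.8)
The plan is to establish both lower bounds by information-theoretic reductions to binary hypothesis testing, using Le Cam's two-point method together with the tensorization of KL-divergence. The guiding principle is that the quantity $\Dbar$ (for part (a)) and the effective sample size $\pi_i w_l \numobs$ (for part (b)) govern exactly how much information the observations carry about the targets, so both rates should fall out of computing a single aggregate KL-divergence and invoking Pinsker's inequality.

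For part (a), I would first select the pair of labels $(l,l')$ achieving the minimum in the definition \eqref{eqn:define-min-kl-distance} of $\Dbar$, and restrict attention to the sub-family of true-label vectors $v \in \{l,l'\}^\numobs$. For a fixed item $j$, the observations $(z_{1j},\dots,z_{mj})$ form a product distribution across workers, and a direct computation shows that the single-worker contribution to the KL-divergence between the hypotheses $y_j=l$ and $y_j=l'$ is exactly $\pi_i\KL{\mu_{il},\mu_{il'}}$ (the ``not-labeled'' outcome has identical probability under both hypotheses and cancels). Summing over workers and using the choice of $(l,l')$ gives an aggregate KL-divergence equal to $\mdim\Dbar \le 1/4$ under the hypothesis $\mdim \le 1/(4\Dbar)$. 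By Pinsker's inequality the total-variation distance between the two per-item observation laws is then bounded away from $1$, so the Bayes error of the per-item test (under a uniform prior on $\{l,l'\}$) is at least a universal constant. Since the $\numobs$ items are conditionally independent, the expected number of mislabeled items under this product prior is at least $\UNICON_1\numobs$; as the supremum over $v$ dominates any Bayes average, this yields the claimed bound.

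For part (b), I would fix the target pair $(i,l)$ and build a two-point family that differs only in the column $\mu_{il}$, keeping all other columns, all other workers, the label prior $\{w_l\}$ and the sparsity levels $\{\pi_i\}$ fixed. Concretely I would place the two competing columns $\mu_{il}^{(0)},\mu_{il}^{(1)}$ symmetrically around a balanced base distribution supported on two coordinates --- e.g.\ $(\tfrac12+\epsilon,\tfrac12-\epsilon)$ versus $(\tfrac12-\epsilon,\tfrac12+\epsilon)$ --- so that $\ltwos{\mu_{il}^{(0)}-\mu_{il}^{(1)}}^2 \asymp \epsilon^2$ and $\KL{\mu_{il}^{(0)},\mu_{il}^{(1)}} \asymp \epsilon^2$ with universal constants (crucially free of $\kdim$). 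The only observations whose law differs between the two hypotheses are those of worker $i$ on items with true label $l$; conditioning on the shared randomness (the true labels, distributed according to $\{w_l\}$, and the labeling pattern, governed by $\pi_i$, both of which have identical marginals under the two hypotheses) and applying the chain rule for KL-divergence, the KL between the two observation laws is at most $\E[N_{il}]\cdot\KL{\mu_{il}^{(0)},\mu_{il}^{(1)}} = \pi_i w_l\numobs\cdot\KL{\mu_{il}^{(0)},\mu_{il}^{(1)}}$, where $N_{il}$ counts the relevant observations. Choosing $\epsilon^2 \asymp 1/(\pi_i w_l\numobs)$ keeps this aggregate KL-divergence bounded by a constant, so Le Cam's two-point bound gives an $\ell_2^2$ estimation error of order $\epsilon^2 \asymp 1/(\pi_i w_l\numobs)$. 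When $\pi_i w_l\numobs<1$, the same construction with a constant $\epsilon$ keeps the hypotheses indistinguishable and produces a constant lower bound, which together with the previous regime yields the stated $\min\{1,1/(\pi_i w_l\numobs)\}$ with constant $\UNICON_2$.

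The main obstacle in each part is the bookkeeping around the sampling model rather than the testing inequalities themselves. In part (a) the delicate point is passing from the per-item Bayes risk to the worst-case supremum over $v$ while correctly handling the ``not-labeled'' event in the KL computation and confirming that item independence makes the Bayes risk additive. In part (b) the crux is controlling the KL-divergence in the presence of a \emph{random} number of informative observations: the chain-rule argument over shared randomness is precisely what lets me replace this random count by its mean $\pi_i w_l\numobs$, and I must also verify that the two-point perturbation stays within the probability simplex and produces matching $\ell_2^2$ and KL scalings with no hidden dependence on $\kdim$.
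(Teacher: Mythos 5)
Your proposal is correct, and for part (a) it coincides with the paper's proof essentially step for step: pick the pair $(l,l')$ achieving the minimum in the definition of $\Dbar$, restrict to $v\in\{l,l'\}^\numobs$, note that conditionally on $y_j$ the observations factor across workers with the unlabeled outcome contributing zero to the KL-divergence, so the per-item KL is exactly $\mdim\Dbar\le 1/4$, and conclude by Le Cam plus Pinsker with additivity over items (the paper's Eq.~\eqref{eqn:y-lower-pinsker}--\eqref{eqn:y-lower-column-independence}). For part (b) you follow the same two-point Le Cam skeleton but handle the random number of informative observations by a genuinely different device. The paper conditions on the realized label vector $v$, computes the KL restricted to the index set $J=\{j: v_j = l\}$ exactly as $|J|\,\pi_i\cdot\tfrac12\log\bigl(1/(1-4\delta^2)\bigr)$, and then controls the randomness of $|J|\sim\mathrm{Binomial}(\numobs,w_l)$ by choosing $\delta^2=\min\{1/16,\,1/(10\pi_i w_l\numobs)\}$ so that the relevant threshold exceeds the binomial median, yielding explicit constants. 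You instead bound the unconditional KL by the chain rule over the shared randomness (true labels and labeling pattern, whose marginals agree under both hypotheses), replacing the random count by its mean $\pi_i w_l\numobs$; this is valid by marginalization/data-processing and is arguably cleaner, since it avoids the median-of-binomial step entirely, at the cost of slightly less transparent constants. Two further cosmetic differences are harmless: the paper randomizes all nuisance columns via the random matrix $V$ and works with the mixture $\tfrac12\Pprob_0+\tfrac12\Pprob_1$, whereas you fix the nuisance entries --- legitimate because the supremum over $\mu$ dominates any fixed choice; and your symmetric perturbation $(\tfrac12\pm\epsilon,\tfrac12\mp\epsilon)$ versus the paper's $u_0=(\tfrac12,\tfrac12)$, $u_1=(\tfrac12+\delta,\tfrac12-\delta)$ are equivalent up to constants, with both giving $\ell_2^2$ separation and KL of order $\epsilon^2$ with no $\kdim$ dependence, provided (as you note) $\epsilon$ is kept below a fixed constant so the columns remain in the simplex.
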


In part (a) of Theorem~\ref{theorem:optimality},
we see that the number of workers should be at least $1/\Dbar$, otherwise any predictor will make many mistakes.
This lower bound matches our sufficient condition on the number of workers~$\mdim$ (see Eq.~\eqref{eqn:mn-simplified-condition}).
In part (b), we see that the best possible estimate for $\mu_{il}$ has $1/(\pi_i w_l \numobs)$ mean-squared error.
It verifies the optimality of our estimator $\muhat_{il}$. It is also worth noting that the constraint
on the number of items~$\numobs$ (see Eq.~\eqref{eqn:mn-simplified-condition}) depends on problem-specific constants,
which might be improvable.
Nevertheless, the constraint scales  logarithmically with $\mdim$ and $1/\delta$, thus is easy to satisfy for reasonably large datasets.

It is worth contrasting our convergence rate with existing algorithms. Ghosh et al.~\cite{ghosh2011moderates} and
Dalvi et al.~\cite{Dalvi:13} propose consistent estimators for the binary one-coin model. To attain an error
rate~$\delta$, their algorithms require $\mdim$ and $\numobs$ scaling with $1/\delta^2$, while our algorithm
only requires $\mdim$ and $\numobs$ scaling with $\log(1/\delta)$. Karger et al. \cite{Oh:12,karger2013efficient}
propose algorithms for both binary and multi-class problems. Their algorithm assumes that workers are assigned by a random regular graph.
Their analysis assumes that the limit of number of items goes to infinity, or that the number of workers
is many times of the number of items. Our algorithm no longer requires these assumptions.

We also compare our algorithm with the majority voting estimator, where the true label
is simply estimated by a majority vote among workers. Gao and Zhou~\cite{Gao:EM:14}
show that if there are many spammers and few experts, the majority voting estimator gives almost a random guess.
In contrast, our algorithm requires a relatively large $\mdim\Dbar$ to guarantee good performance. Since $\mdim\Dbar$
is the aggregated KL-divergence, a small number of experts are sufficient to ensure it large enough.

\section{One-Coin Model}
\label{sec:one-coin}

In this section, we consider a simpler crowdsourcing model that is usually referred to as the ``one-coin model.''
For the one-coin model, the confusion matrix $C_i$ is parameterized by a single
parameter $p_i$. More concretely, its entries are defined as
\begin{align}\label{eqn:define-onecoin-confusion-matrix}
  \mu_{ilc} = \left\{
  \begin{array}{ll}
    p_i & \mbox{if $l=c$},\\
    \frac{1 - p_i}{\kdim - 1} & \mbox{if $l\neq c$}.
  \end{array}
  \right.
\end{align}
In other words, the worker $i$ uses a single coin flip to decide her assignment. No matter what the true label is,
the worker has $p_i$ probability to assign the correct label,
and has $1-p_i$ probability to randomly assign an incorrect label. For the one-coin model, it suffices to estimate
$p_i$ for every worker $i$ and estimate $y_j$ for every item $j$. Because of its
simplicity, the one-coin model is easier to estimate and enjoys
better convergence properties.\\

To simplify our presentation, we consider the case where $\pi_i \equiv 1$; noting that with
proper normalization, the algorithm can be easily adapted to the case where $\pi_i < 1$.
The statement of the algorithm relies on the following notation:
For every two workers $a$ and $b$, let the quantity $N_{ab}$ be defined as
\begin{align*}
  N_{ab} \defeq \frac{\kdim - 1}{\kdim}\left( \frac{\sum_{j=1}^\numobs \indicator(z_{aj} = z_{bj})}{\numobs} - \frac{1}{\kdim} \right).
\end{align*}
For every worker $i$, let workers $a_i,b_i$ be defined as
\begin{align*}
  (a_i,b_i) = \arg\max_{(a,b)} \{|N_{ab}|:~a\neq b\neq i\}.
\end{align*}
The algorithm contains two separate stages. First, we initializes $\phat_i$ by an estimator
based on the method of moments.  In contrast with the algorithm for the general model, the
estimator for the one-coin model doesn't need third-order moments.
Instead, it only relies on pairwise statistics~$N_{ab}$. Second, an EM algorithm
is employed to iteratively maximize the objective function~\eqref{eqn:likelihood-lower-bound}.
See Algorithm~\ref{alg:one-coin-model} for a detailed description. \\

\begin{algorithm}[t]
\DontPrintSemicolon
\KwIn{integer $k$, observed labels $z_{ij}\in\R^\kdim$ for $i\in[\mdim]$ and $j\in [\numobs]$.}
\KwOut{Estimator $\phat_i$ for $i\in[\mdim]$ and $\yhat_j$ for $j\in [\numobs]$.}
\begin{enumerate}[(1)]
\item Initialize $\phat_i$ by
\begin{align}\label{eqn:onecoin-p-init}
\phat_i \leftarrow \frac{1}{\kdim} + \sign(N_{i a_1})\;\sqrt{\frac{N_{i a_i}N_{i b_i}}{N_{a_i b_i}}}
\end{align}
\item If $\frac{1}{\mdim} \sum_{i=1}^\mdim \phat_i \geq \frac{1}{\kdim}$ not hold, then set $\phat_i \leftarrow \frac{2}{\kdim} - \phat_i$ for all $i \in [\mdim]$.
\item Iteratively execute the following two steps for at least one round:
\begin{align}
  \qhat_{jl} & \propto \exp\Big( \sum_{i=1}^\mdim \indicator(z_{ij} = e_l)\log(\phat_i) + \indicator(z_{ij} \neq e_l)\log\Big( \frac{1-\phat_i}{\kdim - 1} \Big) \Big)
  \qquad \mbox{for $j\in[\numobs]$, $l\in[\kdim]$,} \label{eqn:onecoin-q_update} \\
  \phat_i &\leftarrow \frac{1}{\numobs} \sum_{j=1}^n \sum_{l=1}^\kdim \qhat_{jl}\indicator(z_{ij}=e_l) \qquad \mbox{for $i\in[\mdim]$,} \label{eqn:onecoin-p_update}
\end{align}
where update~\eqref{eqn:onecoin-q_update} normalizes $\qhat_{jl}$, making $\sum_{l=1}^\kdim \qhat_{jl} = 1$ holds for all $j\in[\numobs]$.

\item Output $\{\phat_i\}$ and $\yhat_j \defeq \arg\max_{l\in [\kdim]} \{\qhat_{jl}\}$.
\end{enumerate}
\caption{Estimating one-coin model}
\label{alg:one-coin-model}
\end{algorithm}

To theoretically characterize the performance of Algorithm~\ref{alg:one-coin-model}, we need some
additional notation.  Let $\dgap_i$ be the $i$-th largest element in $\{ |p_i - 1/\kdim|\}_{i=1}^\mdim$.
In addition, let $\dgapbar \defeq \frac{1}{\mdim}\sum_{i=1}^\mdim (p_i - 1/\kdim)$ be the average gap between  all accuracies
and $1/\kdim$. We assume that $\dgapbar$ is strictly positive.
We follow the definition of $\Dbar$ in Eq.~\eqref{eqn:define-min-kl-distance}.
The following theorem is proved in Appendix~\ref{sec:proof-one-coin-model}.

\begin{theorem}\label{theorem:one-coin-model}
  Assume that $\gap \leq  p_i\leq 1-\gap$ holds for all $i\in [\mdim]$.
  For any scalar $\delta > 0$, if the number of workers $\mdim$ and the number of items $\numobs$ satisfy
  \begin{align}\label{eqn:mn-onecoin-condition}
    \mdim = \Omega\left( \frac{\log(1/\gap)\log(\kdim\numobs/\delta)+\log(\mdim\numobs)}{\Dbar} \right)\quad\mbox{and} \quad
    \numobs = \Omega\left( \frac{\log(\mdim\kdim/\delta)}{ \dgap_3^6 \min\{\dgapbar^2, \gap^2, (\gap\Dbar)^2\} } \right),
  \end{align}
  Then, for $\phat$ and $\yhat$ returned by Algorithm~\ref{alg:one-coin-model}, with probability
  at least $1-\delta$,
  \begin{enumerate}[(a)]
    \item $\yhat_j = y_j$ holds for all $j\in[\numobs]$.
    \item $|\phat_i - p_i| \leq 2\sqrt{\frac{3\log(6 \mdim /\delta)}{\numobs}}$ holds for all $i \in [\mdim]$.
  \end{enumerate}
\end{theorem}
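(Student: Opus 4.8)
The plan is to follow the same two-stage template as Theorems~\ref{theorem:stage-1} and~\ref{theorem:stage-2}: first show that the pairwise-moment initializer \eqref{eqn:onecoin-p-init}--step~(2) produces $\phat_i$ inside the basin of attraction required to seed EM, and then show that one round of the updates \eqref{eqn:onecoin-q_update}--\eqref{eqn:onecoin-p_update} delivers the claimed label recovery (a) and parameter accuracy (b). Because the one-coin likelihood is exactly the Dawid--Skene likelihood restricted to the rank-one parameterization \eqref{eqn:define-onecoin-confusion-matrix}, the Stage-2 argument can largely reuse the analysis behind Theorem~\ref{theorem:stage-2}; the genuinely new work is the analysis of the simpler initializer, which uses only the second-order statistics $N_{ab}$ and no third-order tensors.

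\textbf{Stage 1.} The first step is a population identity. Conditioning on $y_j$ and using the one-coin structure, the probability that two distinct workers $a,b$ agree is independent of the true label, which gives, after the normalization built into $N_{ab}$,
\begin{align*}
  \E[N_{ab}] = (p_a - 1/\kdim)(p_b - 1/\kdim) \qquad \text{for all } a\neq b.
\end{align*}
Writing $\gamma_i \defeq p_i - 1/\kdim$, this says that $N$ is in expectation the off-diagonal of the rank-one Gram matrix $\gamma\gamma^\top$, so $\gamma_i^2 = \E[N_{ia_i}]\E[N_{ib_i}]/\E[N_{a_ib_i}]$ for any $a_i,b_i\neq i$, which is exactly what \eqref{eqn:onecoin-p-init} estimates. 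I would then establish concentration: each $N_{ab}$ is an average of $\numobs$ i.i.d.\ bounded indicators, so Hoeffding gives $|N_{ab} - \gamma_a\gamma_b|\le \varepsilon$ with $\varepsilon = O(\sqrt{\log(\mdim/\delta)/\numobs})$ uniformly over all $O(\mdim^2)$ pairs after a union bound. The rule $(a_i,b_i)=\arg\max|N_{ab}|$ then keeps the denominator away from zero: excluding the single index $i$ still leaves two workers whose $|\gamma|$ are at least $\dgap_3$, so $|N_{a_ib_i}|\gtrsim \dgap_3^2$. Propagating $\varepsilon$ through the ratio and the square root---using this lower bound on the denominator together with the factor $1/|\gamma_i|\ge 1/\dgap_3$ from differentiating $\sqrt{\cdot}$---yields the uniform bound $\bigl|\,|\gamma_i| - \sqrt{N_{ia_i}N_{ib_i}/N_{a_ib_i}}\,\bigr| = O(\dgap_3^{-3}\varepsilon)$. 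It remains to fix signs. The magnitude step recovers only $|\gamma_i|$; the factor $\sign(N_{ia_1})$ pins each $\phat_i-1/\kdim$ to $\sign(\gamma_i)\sign(\gamma_{a_1})$ against the common reference worker $a_1$, so all estimates share one global sign $\sign(\gamma_{a_1})$, which is reliable because $|\gamma_{a_1}|\gtrsim\dgap_3$. Step~(2) resolves this last bit: since $\frac{1}{\mdim}\sum_i(\phat_i-1/\kdim)$ concentrates near $\sign(\gamma_{a_1})\,\dgapbar$ and $\dgapbar>0$, flipping to make the average at least $1/\kdim$ picks the correct sign provided the per-worker error is below $\dgapbar$. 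Imposing the three requirements---error below the Stage-2 threshold $\minac=\min\{\gap/2,\gap\Dbar/16\}$ of \eqref{eqn:initialization-accuracy-condition} and below $\dgapbar$---and solving $\dgap_3^{-3}\varepsilon \lesssim \min\{\dgapbar,\gap,\gap\Dbar\}$ for $\numobs$ reproduces the stated condition $\numobs=\Omega(\log(\mdim\kdim/\delta)/(\dgap_3^6\min\{\dgapbar^2,\gap^2,(\gap\Dbar)^2\}))$. Since $\norms{\Chat_i-C_i}_\infty=|\phat_i-p_i|$ in the one-coin parameterization, this certifies hypothesis \eqref{eqn:initialization-accuracy-condition} of Theorem~\ref{theorem:stage-2}.

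\textbf{Stage 2.} With a valid initialization, I would prove (a) and (b) by the coupling argument of Theorem~\ref{theorem:stage-2}, specialized to the scalar parameter. For (a), fix an item $j$ and a wrong label $l\neq y_j$; the log-posterior gap $\log(\qhat_{jy_j}/\qhat_{jl})$ from \eqref{eqn:onecoin-q_update} is a sum over workers of bounded log-likelihood-ratio increments whose conditional mean is $\sum_i \KL{\mu_{iy_j},\mu_{il}}\ge \mdim\Dbar$, up to the small perturbation from using $\phat$ in place of $p$. A Bernstein bound, with increments of order $\log(1/\gap)$ since $\gap\le p_i\le 1-\gap$, makes this gap positive for every $(j,l)$ simultaneously once $\mdim=\Omega((\log(1/\gap)\log(\kdim\numobs/\delta)+\log(\mdim\numobs))/\Dbar)$, giving $\yhat_j=y_j$ for all $j$. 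For (b), once the posteriors place essentially all mass on the correct labels, the M-step \eqref{eqn:onecoin-p_update} reduces to the empirical agreement rate $\phat_i\approx\frac{1}{\numobs}\sum_j\indicator(z_{ij}=e_{y_j})$, an average of $\numobs$ i.i.d.\ $\mathrm{Bernoulli}(p_i)$ variables; Hoeffding plus a union bound over the $\mdim$ workers yields $|\phat_i-p_i|\le 2\sqrt{3\log(6\mdim/\delta)/\numobs}$, the extra factor absorbing the vanishing contamination from the few items with imperfect posterior weights.

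I expect the main obstacle to be the Stage-1 error propagation. The quotient $N_{ia_i}N_{ib_i}/N_{a_ib_i}$ followed by a square root is delicate for near-random workers, where $\gamma_i\approx 0$, the true numerator sits near the noise floor, and even the sign of the ratio is fragile; one must argue a uniform $O(\dgap_3^{-3}\varepsilon)$ bound that simultaneously covers the high-ability regime (where relative error controls the root) and the low-ability regime (where the additive estimate $\bigl|\sqrt{x}-\sqrt{y}\bigr|\le\sqrt{|x-y|}$ is needed and must still beat $\minac$). Threading the global-sign resolution through this bound---guaranteeing that the reference $a_1$ satisfies $|\gamma_{a_1}|\gtrsim\dgap_3$ so that $\sign(N_{ia_1})$ is trustworthy---is the second delicate point. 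The Stage-2 steps, by contrast, are essentially a transcription of the Theorem~\ref{theorem:stage-2} argument to the one-parameter model and should pose no new difficulty.
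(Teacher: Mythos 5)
Your proposal is correct and follows essentially the same route as the paper: the same population identity $\E[N_{ab}]=(p_a-1/\kdim)(p_b-1/\kdim)$ with Hoeffding plus a union bound, the same $\dgap_3^{-3}$ error propagation through the ratio-and-square-root (including the paper's case split between near-random workers, handled additively, and high-ability workers, handled via the denominator bound $|N_{a_ib_i}|\gtrsim \dgap_3^2$ from the argmax selection), the same sign resolution through the reference worker $a_1$ and the $\dgapbar$-based global flip in step (2), and the same Stage-2 reuse of the Theorem~\ref{theorem:stage-2} machinery (events $\event_1,\event_2$ via Bernstein with $\log(1/\gap)$-bounded increments and Chernoff, followed by the alternating-update stability induction). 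Even the delicate points you flag are precisely the ones the paper's Lemma~\ref{lemma:onecoin-init} addresses with its two-regime analysis and sign case study, so no substantive divergence or gap remains.
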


It is worth contrasting condition~\eqref{eqn:mn-simplified-condition} with condition~\eqref{eqn:mn-onecoin-condition},
namely the sufficient conditions for the general model and for the one-coin model. It turns out
that the one-coin model requires much milder conditions on the number of items. In particular, $\dgap_3$ will be close to
$1$ if among all the workers there are three experts giving high-quality answers. As a consequence, the one-coin
is more robust than the general model. By contrasting the convergence rate
of $\muhat_{il}$ (by Theorem~\ref{theorem:stage-2}) and $\phat_i$ (by Theorem~\ref{theorem:one-coin-model}), the convergence rate of $\phat_i$ does not depend on $\{w_l\}_{l=1}^\kdim$.
This is another evidence that the one-coin model enjoys a better convergence rate because of its simplicity.

\section{Experiments}
\label{sec:exp}

In this section, we report the results of empirical studies comparing the algorithm we propose in
Section~\ref{sec:main-algorithm} (referred to as Opt-D\&S) with a variety of other methods.
We compare to the Dawid \& Skene estimator initialized by majority voting (refereed to as MV-D\&S),
the pure majority voting estimator, the multi-class labeling algorithm proposed by Karger
et al.~\cite{karger2013efficient} (referred to as KOS), the SVD-based algorithm proposed by
Ghosh et al.~\cite{ghosh2011moderates} (referred to as Ghost-SVD) and the ``Eigenvalues of Ratio''
algorithm proposed by Dalvi et al. \cite{Dalvi:13} (referred to as EigenRatio). The evaluation
is made on three synthetic datasets and five real datasets.

\subsection{Synthetic data}

\begin{table}
\centering
\begin{tabular}{|c|c|c|c|c|c|c|}
  \hline
    & Opt-D\&S & MV-D\&S & Majority Voting & KOS & Ghosh-SVD & EigenRatio\\\hline
  $\pi = 0.2$ & 7.64 & 7.65 & 18.85 & 8.34 & 12.35 & 10.49\\\hline
  $\pi = 0.5$ & 0.84 & 0.84 & 7.97 & 1.04 & 4.52 & 4.52 \\\hline
  $\pi = 1.0$ & 0.01 & 0.01 & 1.57 & 0.02 & 0.15 & 0.15 \\
  \hline
\end{tabular}
\caption{Prediction error (\%) on the synthetic dataset. The parameter $\pi$ indicates the sparsity of data --- it is
the probability that the worker labels each task.}\label{tab:synthetic-prediction-error}
\end{table}

\begin{figure}
\centering
\begin{tabular}{cc}
  \includegraphics[scale = 0.4]{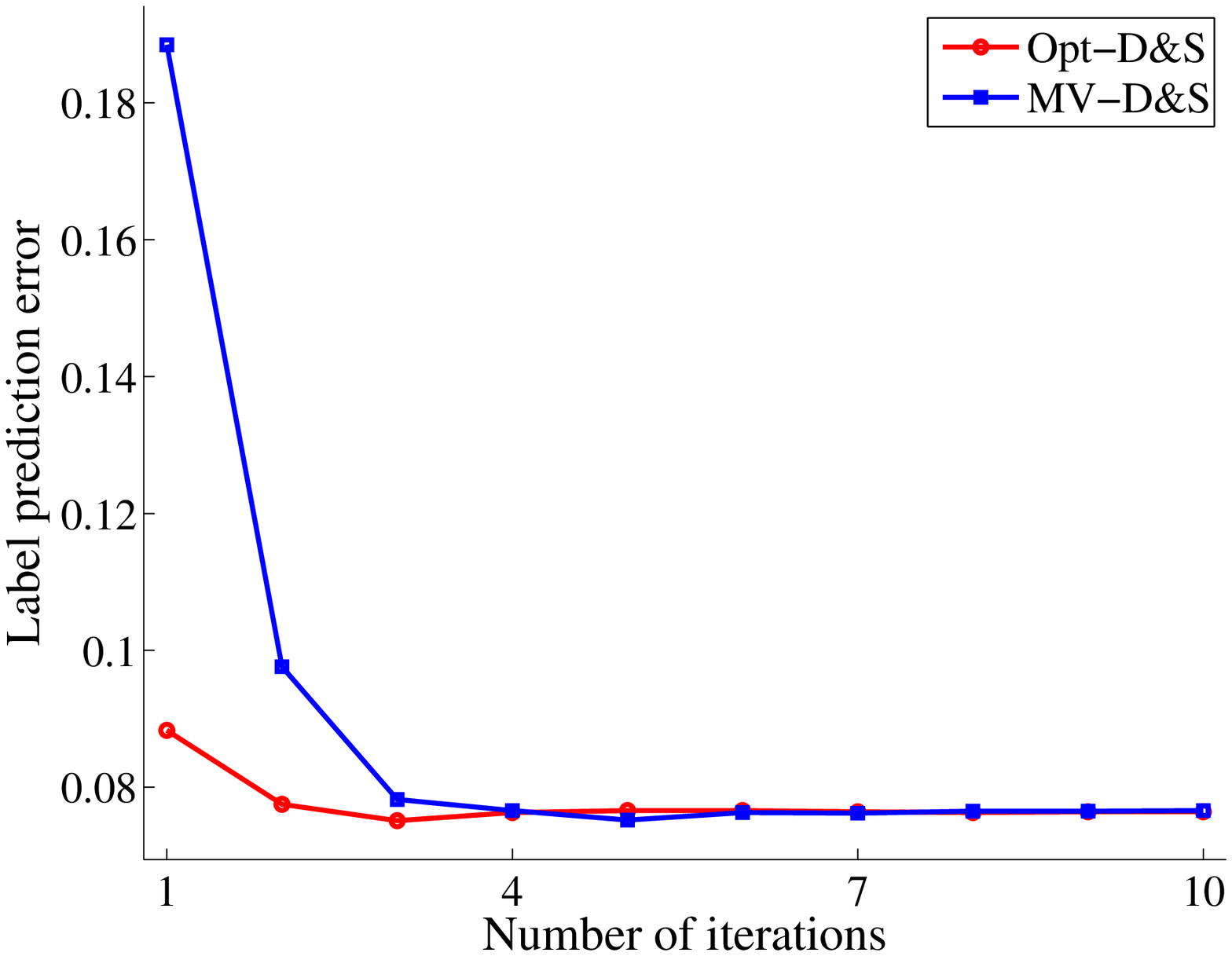} & \includegraphics[scale = 0.4]{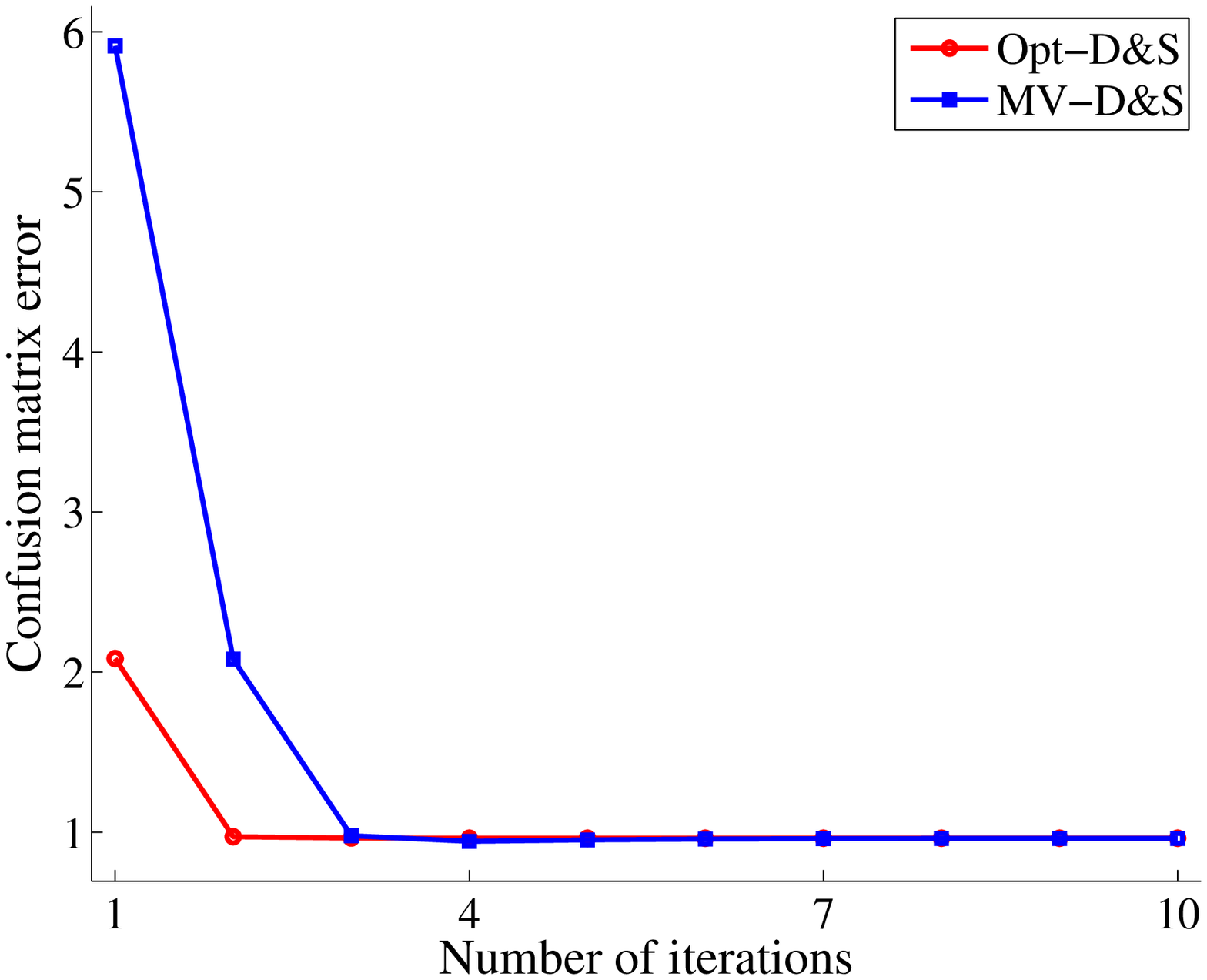}\\
  (a)&(b)
\end{tabular}
\caption{Comparing the convergence rate of the Opt-D\&S algorithm and the MV-D\&S estimator on synthetic dataset with $\pi = 0.2$: (a) convergence
of the prediction error. (b) convergence of the squared error $\sum_{i=1}^\mdim \norms{\Chat_i - C_i}_F^2$ for estimating confusion matrices.}
\label{fig:synthetic-plot}
\end{figure}

For synthetic data, we generate $\mdim = 100$ workers and $\numobs = 1000$ binary tasks. The true label of each
task is uniformly sampled from $\{1,2\}$. For each worker, the 2-by-2 confusion matrix is generated as follow:
the two diagonal entries are independently and uniformly
sampled from the interval $[0.3,0.9]$, then the non-diagonal entries
are determined to make the confusion matrix columns sum to $1$. To simulate a sparse dataset, we make each
worker label a task with probability~$\pi$. With the choice $\pi\in\{0.2,0.5,1.0\}$, we obtain three different
datasets.

We execute every algorithm independently for 10 times and average the outcomes. For the Opt-D\&S algorithm
and the MV-D\&S estimator, the estimation is outputted after 10 EM iterates.
For the group partitioning step involved in the Opt-D\&S algorithm,
the workers are randomly and evenly partitioned into three groups.

The main evaluation metric is the error of predicting the true label of items. The performance of various methods are
reported in Table~\ref{tab:synthetic-prediction-error}. On all sparsity levels, the Opt-D\&S algorithm achieves the best
accuracy, followed by the MV-D\&S estimator. All other methods are consistently worse.
It is not surprising that the Opt-D\&S algorithm and the MV-D\&S estimator yield similar accuracies,
since they optimize the same log-likelihood objective.  It is also meaningful to look at the convergence speed
of both methods, as they employ distinct initialization strategies.  Figure~\ref{fig:synthetic-plot} shows that
the Opt-D\&S algorithm converges faster than the MV-D\&S estimator, both in estimating the true labels and
in estimating confusion matrices. This is the cost that is incurred to obtain the general theoretical guarantee
associated with Opt-D\&S (recall Theorem~\ref{theorem:stage-1}).

\subsection{Real data}

\begin{table}[t]
\centering
\begin{tabular}{|c|c|c|c|c|}
  \hline
  Dataset name  & \# classes & \#  items & \#  workers & \#  worker labels \\\hline
  Bird & 2 & 108 & 39 & 4,212 \\\hline
  RTE & 2 & 800 & 164 & 8,000\\\hline
  TREC & 2 &  19,033& 762 & 88,385 \\\hline
  Dog & 4 & 807 & 52 & 7,354 \\\hline
  Web & 5 & 2,665 & 177 & 15,567 \\
  \hline
\end{tabular}
\caption{The summary of datasets used in the real data experiment.}\label{tab:real-data-summary}
\end{table}

\begin{table}[t]
\centering
\begin{tabular}{|c|c|c|c|c|c|c|}
  \hline
    & Opt-D\&S & MV-D\&S & Majority Voting & KOS & Ghosh-SVD & EigenRatio \\\hline
  Bird & {\bf 10.09} & 11.11 & 24.07 & 11.11 & 27.78 & 27.78 \\\hline
  RTE & {\bf 7.12} & 7.12 & 10.31 & 39.75 & 49.13 & 9.00\\\hline
  TREC & {\bf 29.80} & 30.02 & 34.86 & 51.96 & 42.99 & 43.96 \\\hline
  Dog & 16.89 & {\bf 16.66} & 19.58 & 31.72 & -- & --\\\hline
  Web & 15.86 & {\bf 15.74} & 26.93 & 42.93 & -- & --\\
  \hline
\end{tabular}
\caption{Error rate (\%) in predicting the true labels on real data.}\label{tab:real-prediction-error}
\end{table}

\begin{figure}[t]
\centering
\begin{tabular}{ccc}
\includegraphics[width = 0.31\textwidth]{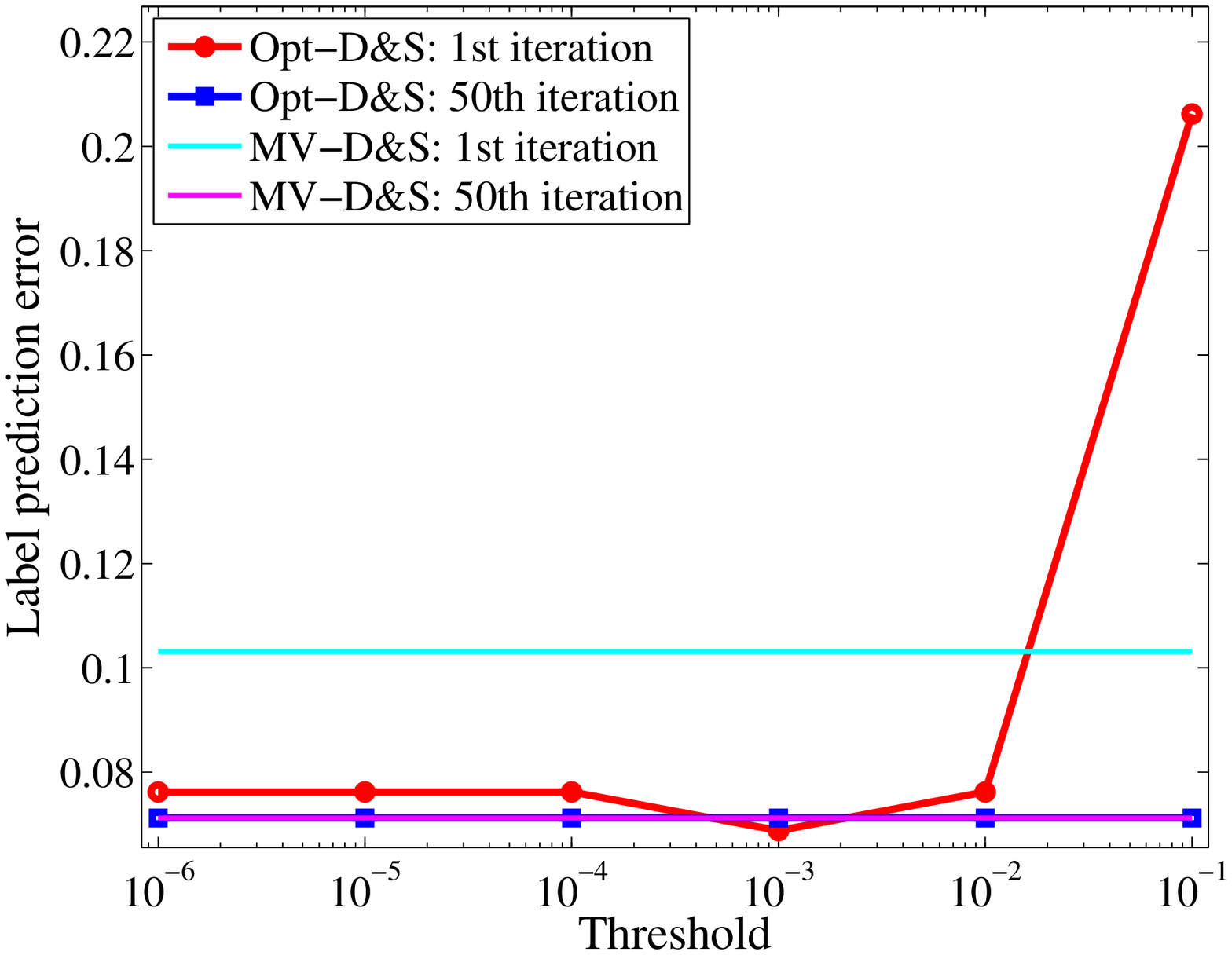}&
\includegraphics[width = 0.31\textwidth]{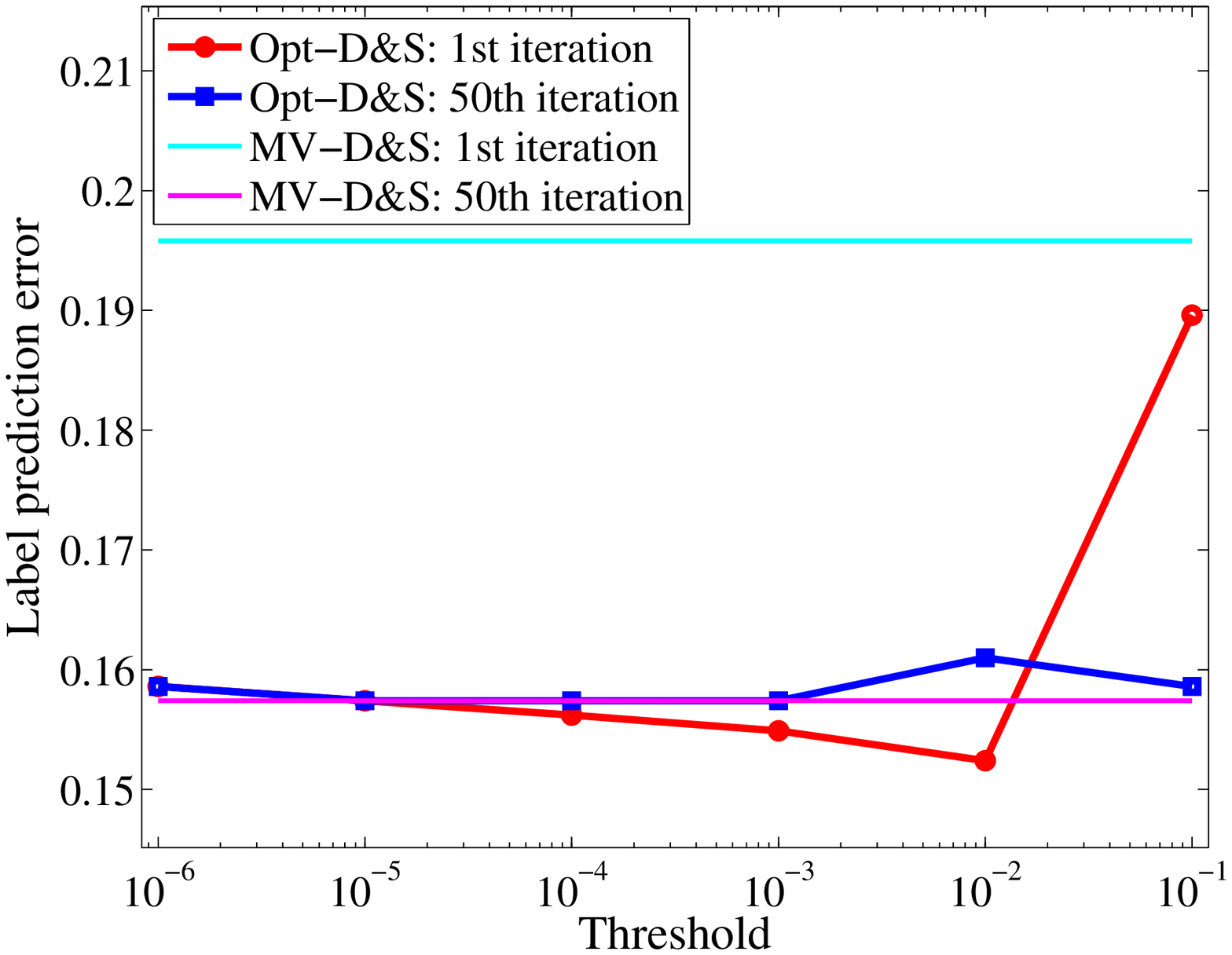}&
\includegraphics[width = 0.31\textwidth]{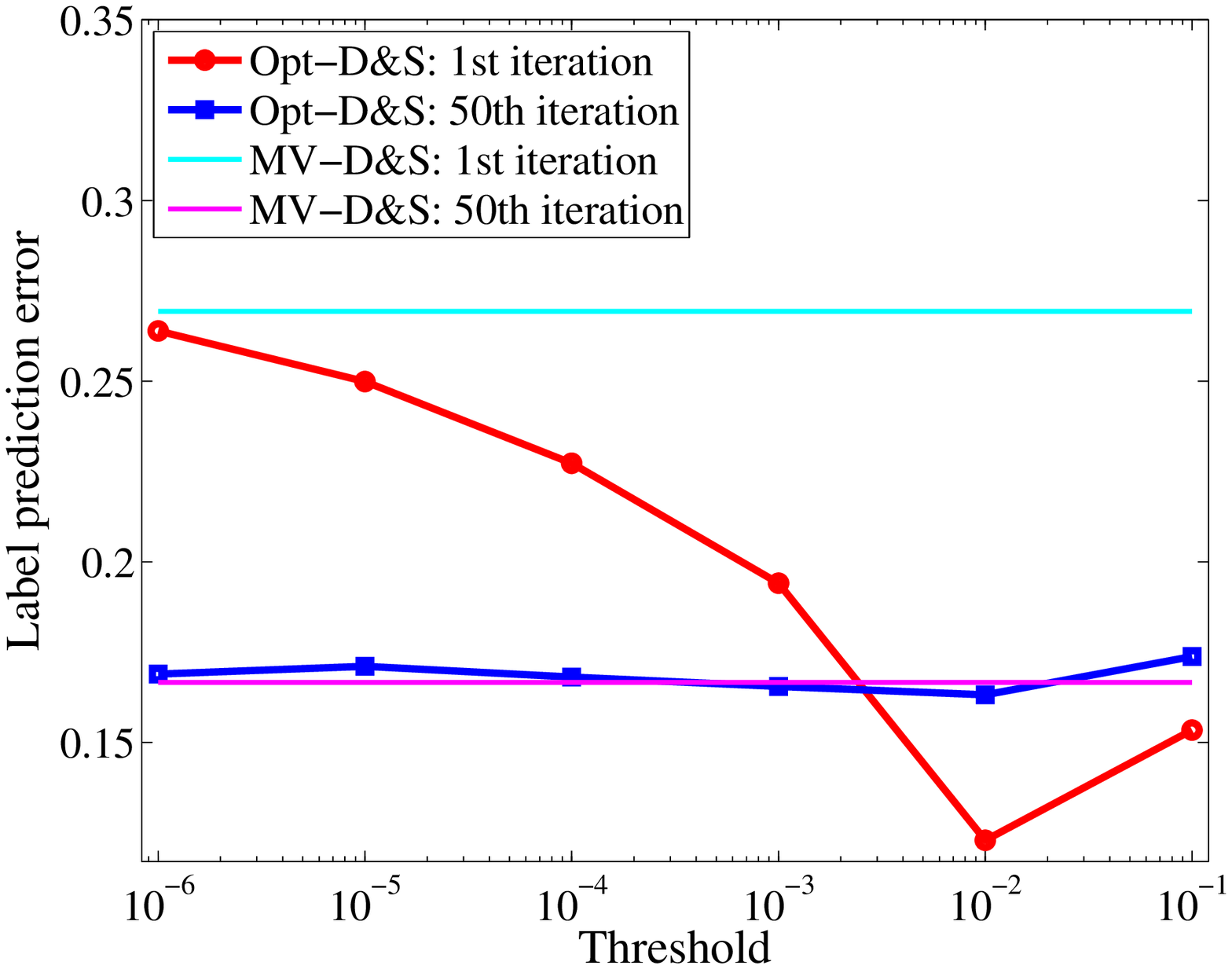}\\
(a) RTE & (b) Dog & (c) Web
\end{tabular}
\caption{Comparing the MV-D\&S estimator the Opt-D\&S algorithm with different thresholding parameter $\Delta$.
The predict error is plotted after the 1st EM update and after convergence.}\label{fig:threshold-compare-plots}
\end{figure}

For real data experiments, we compare crowdsourcing algorithms on five datasets: three binary tasks and two multi-class tasks.
Binary tasks include labeling bird species~\cite{welinder2010multidimensional} (Bird dataset), recognizing textual entailment~\cite{snow2008cheap} (RTE
dataset) and assessing the quality of documents in TREC 2011 crowdsourcing track~\cite{Lease2011Overview} (TREC dataset). Multi-class tasks include
labeling the bread of dogs from ImageNet~\cite{deng2009imagenet} (Dog dataset) and judging the relevance of web search results~\cite{zhou2012learning} (Web dataset).
The statistics for the five datasets are summarized in Table~\ref{tab:real-data-summary}.
Since the Ghost-SVD algorithm and the EigenRatio algorithm work on binary tasks, they are evaluated on the Bird, RTE and TREC dataset.
For the MV-D\&S estimator and the Opt-D\&S algorithm, we iterate their EM steps until convergence.

Since entries of the confusion matrix are positive, we find it helpful to incorporate this prior knowledge
into the initialization stage of the Opt-D\&S algorithm. In particular, when estimating the confusion matrix
entries by equation~\eqref{eqn:compute-confusion-matrix-estimate}, we add an extra checking step before the normalization,
examining if the matrix components are greater than or equal to a small threshold $\Delta$. For components that are
smaller than $\Delta$, they are reset to $\Delta$. The default choice of the thresholding parameter is
$\Delta = 10^{-6}$. Later, we will compare the Opt-D\&S algorithm with respect to different
choices of $\Delta$. It is important to note that this modification doesn't change our theoretical
result, since the thresholding step doesn't take effect if the initialization error is bounded by
Theorem~\ref{theorem:stage-1}.

Table~\ref{tab:real-prediction-error} summarizes the performance of each method.
The MV-D\&S estimator and the Opt-D\&S algorithm consistently outperform the other methods
in predicting the true label of items.  The KOS algorithm, the Ghost-SVD algorithm and the
EigenRatio algorithm yield poorer performance, presumably due to the fact that they
rely on idealized assumptions that are not met by the real data.
In Figure~\ref{fig:threshold-compare-plots}, we compare the Opt-D\&S
algorithm with respect to different thresholding parameters $\Delta\in \{10^{-i}\}_{i=1}^6$.
We plot results for three datasets (RET, Dog, Web), where the performance of the MV-D\&S
estimator is equal to or slightly better than that of Opt-D\&S.
The plot shows that the performance of the Opt-D\&S algorithm is stable after convergence.
But at the first EM iterate, the error rates are more sensitive to the choice of~$\Delta$.
A proper choice of $\Delta$ makes the Opt-D\&S algorithm perform better than MV-D\&S.
The result suggests that a proper initialization combining with one EM iterate is good enough
for the purposes of prediction. In practice, the best choice of $\Delta$
can be obtained by cross validation.

\section{Conclusions}
\label{sec:conclusion}

Under the generative model proposed by Dawid and Skene \cite{dawid1979maximum}, we propose an optimal algorithm for inferring true labels in the multi-class crowd labeling setting. Our method utilizes the method of moments to construct the initial estimator for the EM algorithm. We proved that our method achieves the optimal rate with only one iteration of the EM algorithm. %The empirical study suggests that our method outperforms several state-of-the-art methods using the Dawid and Skene's model \cite{dawid1979maximum} and achieves the comparable performance as the EM algorithm with the majority vote as the initialization.

To the best of our knowledge, this work provides the first instance of a provable convergence for a latent variable model in which EM is initialized the method of moments. One-step EM initialized by the method of moments not only leads to better estimation error in terms of the dependence on the condition number of the second-order moment matrix but it also computationally more attractive than the standard one-step estimator obtained via a Newton-Raphson step. It is interesting to explore whether a properly initialized one-step EM algorithm can achieve the optimal rate for other latent variable models such as latent Dirichlet allocation or other mixed membership models.

%Second, we make a technical assumption that all the parameters in the confusion matrix of each worker is bounded below from zero. It is interesting to further improve our theoretical results by removing such an assumption, which can allow us to model the perfectly reliable workers.

\newpage
\appendix
\section{Proof of Theorem~\ref{theorem:stage-1}}
\label{sec:proof-stage-1}

If $a\neq b$, it is easy to verify that $S_{ab} = \Cdiam_a W (\Cdiam_b)^T = \E[Z_{aj}\otimes Z_{bj}]$. Furthermore, we can upper bound
the spectral norm of $S_{ab}$, namely
\begin{align*}
  \lop{S_{ab}} \leq \sum_{l=1}^\kdim w_l \ltwo{\mudiam_{al}}\ltwo{\mudiam_{bl}} \leq \sum_{l=1}^\kdim w_l \norm{\mudiam_{al}}_1\norm{\mudiam_{bl}}_1 \leq 1.
\end{align*}
For the same reason, it can be shown that $\lop{T_{abc}} \leq 1$.

Our proof strategy is briefly described as follow: we upper bound the estimation error for computing empirical moments~\eqref{eqn:compute-moments-1}-\eqref{eqn:compute-moments-4} in Lemma~\ref{lemma:moment-estimation-error},
and upper bound the estimation error for tensor decomposition in Lemma~\ref{lemma:tensor-decomposition-error}. Then, we combine both lemmas to upper bound
the error of formula~\eqref{eqn:compute-confusion-matrix-estimate}.

\begin{lemma}\label{lemma:moment-estimation-error}
Given a permutation $(a,b,c)$ of $(1,2,3)$, for any scalar $\little \leq \sigmadown/2$, the second and the third moments $\Mhat_2$ and $\Mhat_3$ computed by equation~\eqref{eqn:compute-moments-3}
and~\eqref{eqn:compute-moments-4} are bounded as
\begin{align}
  \max\{\lops{\Mhat_2 - M_2 }, \lops{\Mhat_3 - M_3 }\}\leq 31 \little/\sigmadown^3
\end{align}
with probability at least $1 - \delta$, where $\delta = 6\exp(-(\sqrt{\numobs}\little - 1)^2) + \kdim\exp(-(\sqrt{\numobs/\kdim}\little - 1)^2)$.
\end{lemma}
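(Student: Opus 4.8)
The plan is to exploit that both $Z'_{aj}=S_{cb}S_{ab}^{-1}Z_{aj}$ and $Z'_{bj}=S_{ca}S_{ba}^{-1}Z_{bj}$ are \emph{deterministic} linear images of the raw aggregates $Z_{aj},Z_{bj}$, so that the whole estimation error reduces to the concentration of the raw moments plus deterministic perturbation algebra. Writing $A\defeq S_{cb}S_{ab}^{-1}$, $B\defeq S_{ca}S_{ba}^{-1}$, their plug-in versions $\widehat A\defeq\widehat S_{cb}\widehat S_{ab}^{-1}$, $\widehat B\defeq\widehat S_{ca}\widehat S_{ba}^{-1}$ (with $\widehat S_{pq}\defeq\frac1\numobs\sum_j Z_{pj}\otimes Z_{qj}$), and $\widehat T_{abc}\defeq\frac1\numobs\sum_j Z_{aj}\otimes Z_{bj}\otimes Z_{cj}$, the empirical moments collapse to products of raw moments:
\[
  \Mhat_2=\widehat A\,\widehat S_{ab}\,\widehat B^T,\qquad M_2=A\,S_{ab}\,B^T,
\]
while $\Mhat_3$ and $M_3$ are the transforms of $\widehat T_{abc}$ and $T_{abc}$ by $\widehat A,\widehat B$ (resp. $A,B$) along their first two modes. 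I would use throughout that $\ltwos{Z_{gj}}\le\norm{Z_{gj}}_1\le1$ (each aggregate is an average of indicator vectors), together with $\lops{S_{pq}}\le1$, $\lops{T_{abc}}\le1$ (established at the start of the section) and $\lops{S_{pq}^{-1}}\le1/\sigmadown$.

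First I would control the raw second moments. Since $\{Z_{pj}\otimes Z_{qj}\}_j$ are i.i.d. and bounded by $1$ in operator norm, a vector/matrix Bernstein inequality gives $\lops{\widehat S_{pq}-S_{pq}}\le\little$ for a fixed pair with probability at least $1-\exp(-(\sqrt\numobs\,\little-1)^2)$, and a union bound over the cross-moment matrices involved (six of them) produces the prefactor $6$. The hypothesis $\little\le\sigmadown/2\le\tfrac12\sigma_\kdim(S_{pq})$ makes the standard inverse-perturbation bound applicable, yielding $\lops{\widehat S_{pq}^{-1}-S_{pq}^{-1}}\le2\little/\sigmadown^2$, and hence $\lops{\widehat A-A},\lops{\widehat B-B}=O(\little/\sigmadown^2)$ while $\lops{A},\lops{B}\le1/\sigmadown$.

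Next I would propagate these errors. Expanding $\Mhat_2-M_2=\widehat A\widehat S_{ab}\widehat B^T-AS_{ab}B^T$ as a three-factor telescoping difference and bounding each term by the quantities above gives $\lops{\Mhat_2-M_2}=O(\little/\sigmadown^3)$: the single $1/\sigmadown$ from $A$ or $B$ together with the $1/\sigmadown^2$ from inverting $S$ is exactly what forces the $\sigmadown^3$ in the denominator. The identical multilinear expansion handles $\Mhat_3-M_3$, leaving $\lops{\widehat T_{abc}-T_{abc}}$ as the only new ingredient.

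The hard part is controlling the raw third moment $\lops{\widehat T_{abc}-T_{abc}}$ in \emph{tensor} operator norm, which no matrix Bernstein inequality delivers directly. I would bound it through a covering/net argument over the unit sphere of $\R^\kdim$ (equivalently, via slice-wise matrix concentration), which inflates the effective deviation by a $\sqrt\kdim$ factor and therefore gives a rate of order $\sqrt{\numobs/\kdim}$ with a union-bound prefactor $\kdim$---precisely the second term $\kdim\exp(-(\sqrt{\numobs/\kdim}\,\little-1)^2)$ of $\delta$. Finally I would add the two failure probabilities and collect the $O(\little/\sigmadown^3)$ contributions, using $\sigmadown\le\lops{S_{pq}}\le1$ to absorb lower powers of $\sigmadown$; the only genuinely delicate point is the numerical bookkeeping needed to collapse the combined constant to exactly $31$.
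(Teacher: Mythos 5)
Your proposal is correct and takes essentially the same route as the paper's own proof: concentration of the raw cross-moment matrices with a six-fold union bound, the inverse-perturbation bound unlocked by $\little \leq \sigmadown/2$, telescoping multilinear perturbation using $\lops{A},\lops{B}\leq 1/\sigmadown$ and $\lops{\widehat{A}-A},\lops{\widehat{B}-B}\leq 6\little/\sigmadown^2$, and slice-wise (Frobenius-norm) concentration of the third-order average with the $\sqrt{\kdim}$ inflation and prefactor $\kdim$ --- exactly the paper's Lemma~\ref{lemma:tensor-concentration-lemma}, and indeed you should use the slice-wise variant rather than a sphere net, since a net would carry an $\exp(O(\kdim))$ union-bound prefactor instead of the required $\kdim$. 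The only organizational difference is that you concentrate the raw tensor $\frac{1}{\numobs}\sum_{j} Z_{aj}\otimes Z_{bj}\otimes Z_{cj}$ and then apply the deterministic mode maps $A,B$, whereas the paper applies the concentration lemma directly to the transformed samples $F_2 Z_{aj}$, $F_3 Z_{bj}$, $Z_{cj}$ and handles the random factors $\widehat{F}_2,\widehat{F}_3$ via a per-sample perturbation bound --- a trivially equivalent commutation that yields the same $\little/\sigmadown^2$ contribution and failure probability, and, with the paper's grouping of the telescoping (bounding the $(\widehat{A},\widehat{B})$-versus-$(A,B)$ difference against a common middle factor of operator norm at most one, i.e.\ $30\little/\sigmadown^3$ plus $\little/\sigmadown^2$), the stated constant $31$.
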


\begin{lemma}\label{lemma:tensor-decomposition-error}
Suppose that $(a,b,c)$ is permutation of $(1,2,3)$. For any scalar $\little \leq \dgap/2$, if the empirical moments $\Mhat_2$ and $\Mhat_3$ satisfy
\begin{align}\label{eqn:moment-estimation-condition}
  &\max\{\lops{\Mhat_2 - M_2 }, \lops{\Mhat_3 - M_3 }\} \leq \little \hugeconst\\
  \mbox{for}\quad &\hugeconst \defeq \min\left\{ \frac{1}{2}, \frac{2\sigmadown^{3/2}}{15\kdim (24\sigmadown^{-1} + 2\sqrt{2})}, \frac{\sigmadown^{3/2}}{4\sqrt{3/2}\sigmadown^{1/2} + 8\kdim(24/\sigmadown + 2\sqrt{2})}\nonumber \right\}
\end{align}
then the estimates $\Cdiamhat_c$ and $\What$ are bounded as
\begin{align*}
  \lops{\Cdiamhat_c - \Cdiam_c} \leq \sqrt{\kdim}\little \qquad \mbox{and} \qquad \lops{\What - W} \leq \little.
\end{align*}
with probability at least $1-\delta$, where $\delta$ is defined in Lemma~\ref{lemma:moment-estimation-error}.
\end{lemma}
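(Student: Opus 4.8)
The proof is a deterministic perturbation analysis around the noiseless tensor decomposition; the randomness has already been absorbed into the moment bound of Lemma~\ref{lemma:moment-estimation-error}, so $\delta$ is inherited unchanged. The plan is to first record what the algorithm recovers exactly when fed the population moments $M_2,M_3$, and then to track how far the outputs move once $M_2,M_3$ are replaced by $\Mhat_2,\Mhat_3$. Since $M_2 = S_{cc} = \Cdiam_c W (\Cdiam_c)^T$ and the assumption gives $\sigma_\kdim(M_2)\ge\sigmadown>0$, there is a whitening matrix $Q$ with $Q^T M_2 Q = I$; moreover $\lops{M_2}\le 1$ forces $\lops{(Q^T)^{-1}}\le 1$ and $\lops{Q}\le\sigmadown^{-1/2}$. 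Writing $v_l\defeq\sqrt{w_l}\,Q^T\mudiam_{cl}$, Proposition~\ref{prop:multi-view-model} gives $\sum_l v_l v_l^T = Q^T M_2 Q = I$, so $\{v_l\}$ is orthonormal and $M_3(Q,Q,Q)=\sum_l w_l^{-1/2}\,v_l\otimes v_l\otimes v_l$ is orthogonally decomposable with eigenpairs $(w_l^{-1/2},v_l)$. In the noiseless case the robust tensor power method returns exactly these pairs, and the un-whitening steps give $\weighthat=\alpha^{-2}=w_l$ and $\mudiamhat=(Q^T)^{-1}(\alpha v)=\mudiam_{cl}$. This is the fixed point around which everything is perturbed.

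The crux of the argument, and the step I expect to be the main obstacle, is controlling the empirical whitening map. By Weyl's inequality $\sigma_\kdim(\Mhat_2)\ge\sigmadown-\lops{\Mhat_2-M_2}$, which stays bounded away from zero under the hypothesis $\lops{\Mhat_2-M_2}\le\little\hugeconst$, so $\Qhat$ is well defined and satisfies $\lops{\Qhat}=O(\sigmadown^{-1/2})$, $\lops{(\Qhat^T)^{-1}}=O(1)$. Using standard perturbation of the symmetric inverse square root I would bound $\lops{\Qhat - QR}$, for an orthogonal $R$ absorbing the rotational freedom of whitening, by a multiple of $\lops{\Mhat_2-M_2}$, and then combine this with $\lop{M_3}\le 1$ to bound the whitened tensor error $\lops{\Mhat_3(\Qhat,\Qhat,\Qhat)-M_3(QR,QR,QR)}$ by $O(\sigmadown^{-3/2})\,(\lops{\Mhat_2-M_2}+\lops{\Mhat_3-M_3})$. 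The difficulty is precisely that whitening inverts a square root of the possibly ill-conditioned $M_2$ and then acts trilinearly on $\Mhat_3$, so the moment error is amplified by high powers of $1/\sigmadown$; forcing this amplified quantity below the convergence radius of the power method is exactly what dictates the elaborate form of the constant $\hugeconst$.

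With the whitened perturbation, call it $\epsilon$, under control, I would invoke the robust tensor power method guarantee of Anandkumar et al.~\cite{anandkumar2012tensor}: provided $\epsilon$ lies below a threshold depending on $\kdim$ and the eigenvalue spread $\{w_l^{-1/2}\}$, the method returns pairs with $|\alphahat_h - w_l^{-1/2}|=O(\epsilon)$ and $\ltwos{\vhat_h - v_l}=O(\epsilon)$ once correctly matched. These are propagated through the two output maps: $|\weighthat_h - w_l| = |\alphahat_h^{-2}-w_l|$ by a first-order estimate (using that $\alphahat_h$ is bounded away from $0$), and, via $\mudiamhat_h-\mudiam_{cl}=(\Qhat^T)^{-1}(\alphahat_h\vhat_h - w_l^{-1/2}v_l)+\big[(\Qhat^T)^{-1}-(Q^T)^{-1}\big](w_l^{-1/2}v_l)$, a bound $\ltwos{\mudiamhat_h-\mudiam_{cl}}\le\lops{(\Qhat^T)^{-1}}\ltwos{\alphahat_h\vhat_h-w_l^{-1/2}v_l}+\lops{(\Qhat^T)^{-1}-(Q^T)^{-1}}\,w_l^{-1/2}$, each term already controlled above. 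Choosing $\hugeconst$ as stated makes each per-column error at most $\little$ in $\ell_2$ and each eigenvalue error at most $\little$; since $\lops{\Cdiamhat_c-\Cdiam_c}$ is at most the Frobenius norm $\le\sqrt{\kdim}\,\max_l\ltwos{\mudiamhat_h-\mudiam_{cl}}$ and $\What-W$ is diagonal, this yields $\lops{\Cdiamhat_c-\Cdiam_c}\le\sqrt{\kdim}\little$ and $\lops{\What-W}\le\little$.

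It remains to justify the largest-coordinate assignment that resolves the unknown permutation. By the gap assumption the $l$-th coordinate of $\mudiam_{cl}$ exceeds each of its other coordinates by at least $\dgap$, so under the restriction $\little\le\dgap/2$ the perturbation $\ltwos{\mudiamhat_h-\mudiam_{cl}}\le\little$ cannot move the largest coordinate of any recovered column off its true index. Hence the matching step identifies the permutation exactly, and the columnwise bounds assemble into the stated matrix-norm guarantees, completing the deterministic argument conditional on the moment bound of Lemma~\ref{lemma:moment-estimation-error}.
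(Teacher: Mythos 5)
Your proposal is correct, and at bottom it follows the same mathematical route as the paper, but with a very different division of labor: the paper's entire perturbation analysis consists of a single citation to Lemma 4 of Chaganty and Liang~\cite{chaganty2013spectral} (itself resting on the robust tensor power method analysis of Anandkumar et al.~\cite{anandkumar2012tensor}), which directly supplies a permutation $\pi$ with $\ltwos{\mudiamhat_h - \mudiam_{c\pi(h)}}\leq\little$ and $|\weighthat_h - w_{\pi(h)}|\leq\little$ whenever the moment errors are at most $\little\hugeconst$, with the specific constant $\hugeconst$ obtained by plugging the bounds $\lops{M_2}\leq 1$ and $\lops{M_3}\leq 1$ into that lemma. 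You instead inline a derivation of the cited lemma's content: the noiseless fixed point (orthonormality of $v_l=\sqrt{w_l}\,Q^T\mudiam_{cl}$ and the eigenpairs $(w_l^{-1/2},v_l)$ of the whitened tensor), Weyl's inequality plus whitening perturbation modulo an orthogonal rotation, the trilinear amplification by powers of $1/\sigmadown$, the power-method guarantee, and the un-whitening propagation. Your permutation-resolution step (the margin $\dgap$ together with $\little\leq\dgap/2$ pinning the largest coordinate of each recovered column to its true index) and the final assembly (per-column $\ell_2$ bound, hence Frobenius, hence spectral bound $\sqrt{\kdim}\little$; $\What-W$ diagonal) coincide with the paper's argument essentially verbatim. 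What your inlining buys is self-containedness and a transparent account of why $\hugeconst$ has its elaborate form; what the paper's citation buys is the exact numerical constants. Two caveats on your version. First, you assert but do not verify that the particular constants appearing in $\hugeconst$ suffice to push the amplified whitened error below the power method's convergence threshold and to make each per-column error at most $\little$; in the paper this bookkeeping is precisely what the citation delivers, so a complete version of your argument would need to carry explicit constants through every perturbation step rather than $O(\cdot)$ bounds. Second, your opening claim that the argument is ``deterministic'' given the moment bound elides the fact that the robust tensor power method uses internal randomization (random restarts), so the decomposition step contributes its own failure probability, which is folded into the same $\delta$ in the cited lemma --- though the paper is equally terse on this point, so it is not a gap relative to the paper's own proof.
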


Combining Lemma~\ref{lemma:moment-estimation-error}, Lemma~\ref{lemma:tensor-decomposition-error}, if we choose a scalar $\altlittle$ satisfying
\begin{align}
    \altlittle \leq \min\{\dgap/2, \pimin \wmin \sigmadown/(36\kdim) \}, \label{eqn:stage-1-little-requirement}
\end{align}
then the estimates $\Cdiamhat_g$ (for $g=1,2,3$) and $\What$ satisfy that
\begin{align}\label{eqn:group-cfg-and-weight-estimation-error}
  \lops{\Cdiamhat_g - \Cdiam_g} \leq \sqrt{\kdim} \altlittle \qquad \mbox{and} \qquad \lops{\What - W} \leq \altlittle.
\end{align}
with probability at least $1-6\delta$, where
\[
    \delta = (6 + \kdim)\exp\Big(-(\sqrt{\numobs/\kdim}\altlittle \hugeconst \sigmadown^3 /31  - 1)^2\Big).
\]
To be more precise, we obtain the bound~\eqref{eqn:group-cfg-and-weight-estimation-error} by plugging $\little \defeq \altlittle \hugeconst \sigmadown^3 /31$
into Lemma~\ref{lemma:moment-estimation-error}, then plugging $\little \defeq \altlittle $ into Lemma~\ref{lemma:tensor-decomposition-error}.
The high probability statement is obtained by apply union bound.
\\

Assuming inequality~\eqref{eqn:group-cfg-and-weight-estimation-error}, for any $a\in\{1,2,3\}$,
since $\lops{\Cdiam_a} \leq \sqrt{\kdim},\lops{\Cdiamhat_a - \Cdiam_a}\leq \sqrt{\kdim}\altlittle$ and $\lop{W} \leq 1, \lop{\What} \leq \altlittle$,
Lemma~\ref{lemma:matrix-multiplication} (the preconditions are satisfied by inequality~\eqref{eqn:stage-1-little-requirement}) implies that
\begin{align*}
  \lop{ \What  \Cdiamhat_a - W \Cdiam_a} \leq 4\sqrt{\kdim}\altlittle,
\end{align*}
Since condition~\eqref{eqn:stage-1-little-requirement} implies
\begin{align*}
 \lops{ \What  \Cdiamhat_a - W \Cdiam_a} \leq 4 \sqrt{\kdim} \altlittle\leq \sqrt{\wmin \sigmadown}/2 \leq \sigma_\kdim(W\Cdiam_a)/2
\end{align*}
Lemma~\ref{lemma:matrix-inversion} yields that
\begin{align*}
  \lop{ \left(\What  \Cdiamhat_a \right)^{-1} - \left( W \Cdiam_a \right)^{-1}} \leq \frac{8\sqrt{\kdim}\altlittle}{\wmin \sigmadown}.
\end{align*}
By Lemma~\ref{lemma:tensor-concentration-lemma}, for any $i\in[\mdim]$, the concentration bound
\begin{align*}
  \lop{ \frac{1}{\numobs}\sum_{j=1}^\numobs z_{ij}Z_{aj}^T - \E[z_{ij}Z_{aj}^T]} \leq \altlittle
\end{align*}
holds with probability at least $1 - \mdim \exp(-(\sqrt{\numobs} \altlittle - 1)^2)$. Combining the above two inequalities
with Proposition~\ref{prop:recover-individual-matrix}, then applying Lemma~\ref{lemma:matrix-multiplication} with preconditions
\begin{align*}
  \lop{(W  \Cdiam_a )^{-1}} \leq  \frac{1}{\wmin \sigmadown} \quad \mbox{and} \quad \lop{\E\left[ z_{ij}Z_{aj}^T\right]} \leq  1,
\end{align*}
we have
\begin{align}\label{eqn:second-last-bound}
  \Big\| \underbrace{ \Big(\frac{1}{\numobs}\sum_{j=1}^\numobs z_{ij}Z_{aj}^T \Big)\Big(\What  \Cdiamhat_a \Big)^{-1}}_{\Ghat}  - \pi_i C_i \Big\|_{\rm \scriptscriptstyle op} \leq \frac{18\sqrt{\kdim} \altlittle}{\wmin \sigmadown}.
\end{align}
Let $\Ghat\in \R^{\kdim\times \kdim}$ be the first term on the left hand side of inequality~\eqref{eqn:second-last-bound}. Each column of $\Ghat$, denoted by $\Ghat_l$, is an estimate of $\pi_i \mu_{il}$. The  $\ell_2$-norm estimation error is bounded by $\frac{18\sqrt{\kdim} \altlittle}{\wmin \sigmadown}$.
Hence, we have
\begin{align}\label{eqn:normalization-factor-bound}
  \lone{\Ghat_l - \pi_i \mu_{il}} \leq \sqrt{\kdim}\ltwos{\Ghat_l - \pi_i \mu_{il}} \leq  \sqrt{\kdim} \lops{\Ghat - \pi_i C_i} \leq \frac{18\kdim \altlittle}{\wmin \sigmadown},
\end{align}
and consequently, using the fact that $\sum_{c=1}^\kdim \mu_{ilc}=1$, we have
\begin{align}
  \ltwo{{\rm normalize}(\Ghat_l) - \mu_{il}} &= \ltwo{ \frac{\Ghat_l}{\pi_i + \sum_{c=1}^\kdim \left(\Ghat_{lc} - \pi_i \mu_{ilc} \right)} - \mu_{il} } \nonumber \\
  &\leq \frac{\ltwos{\Ghat_l - \pi_i\mu_{il} } + \lone{\Ghat_l - \pi_i \mu_{il}} \ltwos{\mu_{il}}}{\pi_i - \lone{\Ghat_l - \pi_i \mu_{il}}} \nonumber \\
  &\leq \frac{72\kdim \altlittle}{\pimin\wmin \sigmadown} \label{eqn:last-ell-2-bound}
\end{align}
where the last step combines inequalities~\eqref{eqn:second-last-bound},~\eqref{eqn:normalization-factor-bound} with the bound
$\frac{18\kdim \altlittle}{\wmin \sigmadown} \leq \pi_i/2$ from condition~\eqref{eqn:stage-1-little-requirement}, and uses the fact that $\ltwos{\mu_{il}}\leq 1$. \\

\noindent Note that inequality~\eqref{eqn:last-ell-2-bound} holds with probability at least
\begin{align*}
  1 - (36 + 6\kdim)\exp\Big(-(\sqrt{\numobs/\kdim}\altlittle \hugeconst \sigmadown^3 /31  - 1)^2\Big) - \mdim \exp(-(\sqrt{\numobs} \altlittle - 1)^2).
\end{align*}
It can be verified that $H\geq \frac{\sigmadown^{5/2}}{230\kdim}$. Thus, the above
expression is lower bounded by
\begin{align*}
  1 - (36 + 6\kdim + \mdim)\exp\Big(-\Big( \frac{\sqrt{\numobs} \altlittle \sigmadown^{11/2}}{31\times 230\cdot \kdim^{3/2}}  - 1\Big)^2\Big),
\end{align*}
If we represent this probability in the form of $1-\delta$, then
\begin{align}\label{eqn:error-bounded-by-sample-complexity}
  \altlittle = \frac{31\times 230\cdot \kdim^{3/2}}{\sqrt{\numobs} \sigmadown^{11/2}} \left( 1 + \sqrt{\log((36+6\kdim+\mdim)/\delta)}\right).
\end{align}
Combining condition~\eqref{eqn:stage-1-little-requirement} and inequality~\eqref{eqn:last-ell-2-bound},
we find that to make $\norms{\Chat - C}_\infty$ bounded by $\little$, it is sufficient to choose
$\little_1$ such that
\begin{align*}
  \altlittle \leq \min\left\{ \frac{\little\pimin\wmin \sigmadown}{72\kdim}, \frac{\dgap}{2}, \frac{\pimin\wmin \sigmadown}{36\kdim} \right\}
\end{align*}
This condition can be further simplified to
\begin{align}\label{eqn:condition-on-altlittle}
  \altlittle\leq \frac{\little\pimin\wmin \sigmadown}{72\kdim}
\end{align}
for small $\little$, that is $\little\leq \min\left\{ \frac{36\dgap\kdim}{\pimin\wmin \sigmadown},2 \right\}$.
According to equation~\eqref{eqn:error-bounded-by-sample-complexity}, the condition~\eqref{eqn:condition-on-altlittle} will be satisfied
if
\begin{align*}
  \sqrt{\numobs} \geq \frac{72\times 31\times 230\cdot \kdim^{5/2}}{\little \pimin \wmin \sigmadown^{13/2}} \left( 1 + \sqrt{\log((36+6\kdim+\mdim)/\delta)}\right).
\end{align*}
Taking square over both sides of the inequality completes the proof.

\subsection{Proof of Lemma~\ref{lemma:moment-estimation-error}}

Throughout the proof, we assume that the following concentration bound holds: for any distinct indices $(a',b')\in \{1,2,3\}$, we have
\begin{align}
\lop{\frac{1}{n}\sum_{j=1}^\numobs Z_{a'j}\otimes Z_{b'j} - \E[Z_{a'j}\otimes Z_{b'j}]} &\leq \little \label{eqn:matrix-concentration-condition}
\end{align}
By Lemma~\ref{lemma:tensor-concentration-lemma} and the union bound, this event happens with probability at
least $1 - 6\exp(-(\sqrt{\numobs}\little - 1)^2)$.
By the assumption that $\little \leq \sigmadown/2 \leq \sigma_k(S_{ab})/2$ and Lemma~\ref{lemma:matrix-inversion}, we have
\begin{align*}
  \lop{\frac{1}{n}\sum_{j=1}^\numobs Z_{cj}\otimes Z_{bj} - \E[Z_{cj}\otimes Z_{bj}]} &\leq \little \quad\mbox{and}\\
  \lop{\left(\frac{1}{n}\sum_{j=1}^\numobs Z_{aj}\otimes Z_{bj}\right)^{-1} - \left(\E[Z_{aj}\otimes Z_{bj}]\right)^{-1}} &\leq \frac{2\little}{\sigma^2_\kdim(\Smat_{ab})}
\end{align*}
Under the preconditions
\begin{align*}
  \lop{\E[Z_{cj}\otimes Z_{bj}]} \leq 1 \quad \mbox{and} \quad \lop{(\E[Z_{aj}\otimes Z_{bj}])^{-1}} \leq \frac{1}{\sigma_\kdim(\Smat_{ab})},
\end{align*}
Lemma~\ref{lemma:matrix-multiplication} implies that
\begin{align}
  &\lop{\left(\frac{1}{n}\sum_{j=1}^\numobs Z_{cj}\otimes Z_{bj}\right)\left(\frac{1}{n} Z_{aj}\otimes Z_{bj}\right)^{-1} - \E[Z_{cj}\otimes Z_{bj}](\E[Z_{aj}\otimes Z_{bj}])^{-1} } \nonumber\\
  &\qquad \leq 2 \left(\frac{\little}{\sigma_k(S_{ab})} + \frac{2\little }{\sigma^2_\kdim(\Smat_{ab})}\right)\leq 6\little/\sigmadown^2 \label{eqn:zprime-concentration-1}
\end{align}
and for the same reason, we have
\begin{align}
  &\lop{\left(\frac{1}{n}\sum_{j=1}^\numobs Z_{cj}\otimes Z_{aj}\right)\left(\frac{1}{n} Z_{bj}\otimes Z_{aj}\right)^{-1} - \E[Z_{cj}\otimes Z_{aj}](\E[Z_{bj}\otimes Z_{aj}])^{-1} } \leq 6\little/\sigmadown^2 \label{eqn:zprime-concentration-2}
\end{align}
Now, let matrices $F_2$ and $F_3$ be defined as
\begin{align*}
  F_2 &\defeq \E[Z_{cj}\otimes Z_{bj}](\E[Z_{aj}\otimes Z_{bj}])^{-1},\\
  F_3 &\defeq \E[Z_{cj}\otimes Z_{aj}](\E[Z_{bj}\otimes Z_{aj}])^{-1},
\end{align*}
and let the matrix on the left hand side of inequalities~\eqref{eqn:zprime-concentration-1} and~\eqref{eqn:zprime-concentration-2} be denoted by
$\Delta_2$ and $\Delta_3$, we have
\begin{align*}
  &\lop{ \widehat{Z}'_{aj}\otimes \widehat{Z}'_{bj} - F_2\left( Z_{aj}\otimes Z_{bj}\right)F_3^T} = \lop{ \Big( F_2 + \Delta_2 \Big) \left( Z_{aj}\otimes Z_{bj} \right)\Big( F_3 + \Delta_3 \Big)^T - F_2\left( Z_{aj}\otimes Z_{bj}\right)F_3^T}\\
  &\qquad \leq \lop{Z_{aj}\otimes Z_{bj} } \left( \lop{ \Delta_2 }\lop{F_3 + \Delta_2} + \lop{F_2}\lop{ \Delta_3 }\right) \leq 30\little \lop{Z_{aj}\otimes Z_{bj}} /\sigmadown^3.
\end{align*}
where the last steps uses inequality~\eqref{eqn:zprime-concentration-1},~\eqref{eqn:zprime-concentration-2} and the fact that
$\max\{\lop{F_2},\lop{F_3}\} \leq 1/\sigmadown$ and
\begin{align*}
  \lop{F_3 + \Delta_2} \leq \lops{F_3} + \lops{\Delta_2}\leq 1/\sigmadown + 6\little/\sigmadown^2 \leq 4/\sigmadown.
\end{align*}
To upper bound the norm $\lop{Z_{aj}\otimes Z_{bj}}$, notice that
\begin{align*}
  \lop{Z_{aj}\otimes Z_{bj}} \leq \ltwo{Z_{aj}} \ltwo{Z_{bj}} \leq \norm{Z_{aj}}_1\norm{Z_{bj}}_1 \leq 1.
\end{align*}
Consequently, we have
\begin{align}\label{eqn:zprime-concentration-3}
  \lop{ \widehat{Z}'_{aj}\otimes \widehat{Z}'_{bj} - F_2\left( Z_{aj}\otimes Z_{bj}\right)F_3^T} \leq 30\little /\sigmadown^3.
\end{align}
For the rest of the proof, we use inequality~\eqref{eqn:zprime-concentration-3} to bound $\Mhat_2$ and $\Mhat_3$. For the second
moment, we have
\begin{align*}
  \lop{\Mhat_2 - M_2} &\leq \frac{1}{\numobs}\sum_{j=1}^\numobs \lop{ \widehat{Z}'_{aj}\otimes \widehat{Z}'_{bj} - F_2\left( Z_{aj}\otimes Z_{bj}\right)F_3^T} + \lop{F_2\left( \frac{1}{\numobs}\sum_{j=1}^\numobs Z_{aj}\otimes Z_{bj} \right)F_3^T - M_2}\\
  &\leq  30\little /\sigmadown^3 + \lop{F_2\left( \frac{1}{\numobs} \sum_{j=1}^\numobs Z_{aj}\otimes Z_{bj} - \E[Z_{aj}\otimes Z_{bj}] \right)F_3^T}\\
  &\leq  30\little /\sigmadown^3 + \little/\sigmadown^2 \leq 31\little /\sigmadown^3.
\end{align*}
For the third moment, we have
\begin{align}\label{eqn:third-moment-error-expansion}
  \Mhat_3 - M_3 &= \frac{1}{\numobs}\sum_{j=1}^\numobs \left( \widehat{Z}'_{aj}\otimes \widehat{Z}'_{bj} - F_2\left( Z_{aj}\otimes Z_{bj}\right)F_3^T \right)\otimes Z_{cj} \nonumber\\
  &\qquad + \left( \frac{1}{\numobs}\sum_{j=1}^\numobs
  F_2\left( Z_{aj}\otimes Z_{bj}\right)F_3^T\otimes Z_{cj} - \E\left[ F_2\left( Z_{aj}\otimes Z_{bj}\right)F_3^T\otimes Z_{cj} \right]\right).
\end{align}
We examine the right hand side of equation~\eqref{eqn:third-moment-error-expansion}. The first term is bounded as
\begin{align}\label{eqn:zprime-concentration-4}
  \lop{\left( \widehat{Z}'_{aj}\otimes \widehat{Z}'_{bj} - F_2\left( Z_{aj}\otimes Z_{bj}\right)F_3^T \right)\otimes Z_{cj}} &\leq \lop{\widehat{Z}'_{aj}\otimes \widehat{Z}'_{bj} - F_2\left( Z_{aj}\otimes Z_{bj}\right)F_3^T}\ltwo{Z_{cj}} \nonumber\\
  &\leq 30\little /\sigmadown^3.
\end{align}
For the second term, since $\ltwos{F_2Z_{aj}}\leq 1/\sigmadown$, $\ltwos{F_3Z_{bj}}\leq 1/\sigmadown$ and $\ltwos{Z_{cj}} \leq 1$, Lemma~\ref{lemma:tensor-concentration-lemma} implies that
\begin{align}\label{eqn:tensor-concentration-result}
  \lop{ \frac{1}{\numobs}\sum_{j=1}^\numobs F_2\left( Z_{aj}\otimes Z_{bj}\right)F_3^T\otimes Z_{cj} - \E\left[ F_2\left( Z_{aj}\otimes Z_{bj}\right)F_3^T\otimes Z_{cj} \right]} \leq \little/\sigmadown^2
\end{align}
with probability at least $1 - \kdim \exp( - (\sqrt{\numobs/\kdim}\little - 1)^2)$. Combining inequalities~\eqref{eqn:zprime-concentration-4}
and~\eqref{eqn:tensor-concentration-result}, we have
\begin{align*}
  \lop{\Mhat_3 - M_3} \leq 30\little /\sigmadown^3 + \little/\sigmadown^2 \leq 31\little /\sigmadown^3.
\end{align*}
Applying union bound to all high-probability events completes the proof.

\subsection{Proof of Lemma~\ref{lemma:tensor-decomposition-error}}

Chaganty and Liang~(Lemma 4 in \cite{chaganty2013spectral}) have proved that when condition~\eqref{eqn:moment-estimation-condition} holds,
the tensor decomposition method of Algorithm~\ref{alg:estimating-confusion-matrix} outputs $\{\mudiamhat_h,\weighthat_h\}_{h=1}^\kdim$, such that with probability at least $1-\delta$,
a permutation $\pi$ satisfies
\begin{align*}
  \ltwos{\mudiamhat_{h} - \mudiam_{c\pi(h)}} \leq \little \qquad \mbox{and} \qquad \norm{\weighthat_h - w_{\pi(h)} }_\infty \leq \little.
\end{align*}
Note that the constant $\hugeconst$ in Lemma~\ref{lemma:tensor-decomposition-error} is obtained by plugging upper bounds
$\lops{M_2}\leq 1$ and $\lops{M_3}\leq 1$ into Lemma 4 of Chaganty and Liang~\cite{chaganty2013spectral}.

The $\pi(h)$-th component of $\mudiam_{c\pi(h)}$ is greater than other components of $\mudiam_{c\pi(h)}$, by a margin of $\dgap$.
Assuming
$\little \leq \dgap/2$, the greatest component of $\mudiamhat_{h}$ is its $\pi(h)$-th component. Thus, Algorithm~\ref{alg:estimating-confusion-matrix} is able to
correctly
estimate the $\pi(h)$-th column of $\Cdiamhat_c$ by the vector $\mudiamhat_{h}$. Consequently, for every column of $\Cdiamhat_c$, the $\ell_2$-norm error is bounded by
$\little$. Thus, the spectral-norm error of $\Cdiamhat_c$ is bounded by $\sqrt{\kdim}\little$. Since $W$ is a diagonal matrix and $\norm{\weighthat_h - w_{\pi(h)} }_\infty \leq \little$,
we have $\lops{\What - W} \leq \little$.

\section{Proof of Theorem~\ref{theorem:stage-2}}
\label{sec:proof-stage-2}

We define two random events that will be shown holding with high probability:
\begin{align*}
  &\event_1:~\sum_{i=1}^\mdim \sum_{c=1}^\kdim \indicator(z_{ij} = e_c)\log(\mu_{iy_jc}/\mu_{ilc}) \geq \mdim \Dbar/2 \qquad \mbox{for all $j\in[\numobs]$ and $l\in [\kdim]\backslash\{y_j\}$}.\\
  &\event_2:~\Big| \sum_{j=1}^\numobs \indicator(y_j = l)\indicator(z_{ij} = e_c) - \numobs w_l\pi_i\mu_{ilc} \Big| \leq \numobs \tfac \qquad \mbox{for all $(i,l,c)\in [\mdim]\times[\kdim]^2$}.
\end{align*}
where $\tfac > 0$ are scalars to be specified later. We define $\tmin$ to be the smallest
element among $\{\tfac\}$.
Assuming that $\event_1\cap\event_2$ holds, the following lemma shows
that performing updates~\eqref{eqn:q_update} and~\eqref{eqn:p_update} attains the desired level of accuracy.
See Section~\ref{sec:proof-update-stability} for the proof.

\begin{lemma}\label{lemma:update-stability}
  Assume that $\event_1\cap \event_2$ holds. Also assume that $\mu_{ilc} \geq \gap$ for all $(i,l,c)\in [\mdim]\times[\kdim]^2$. If $\Chat$ is initialized such that
  inequality~\eqref{eqn:initialization-accuracy-condition} holds, and scalars $\tfac$ satisfy
  \begin{align}\label{eqn:concentration-accuracy-condition}
    2 \exp\Big( -\mdim\Dbar/4 + \log(\mdim) \Big) \leq \tfac\leq \mass \min\left\{ \frac{\gap}{8},\frac{\gap \Dbar}{64}\right\}
  \end{align}
  Then by alternating updates~\eqref{eqn:q_update} and~\eqref{eqn:p_update} for at least one round, the estimates $\Chat$ and $\qhat$ are bounded as
  \begin{align*}
    |\muhat_{il} - \mu_{ilc}| &\leq 4\tfac/(\pi_i w_l). \qquad &&\mbox{for all $i\in[\mdim]$, $l\in[\kdim]$, $c\in [\kdim]$}.\\
    \max_{l\in[\kdim]}\{ |\qhat_{jl} - \indicator(y_j = l)|\}  &\leq \exp\left( -\mdim\Dbar/4 + \log(\mdim) \right) \qquad &&\mbox{for all $j\in [\numobs]$}.
  \end{align*}
\end{lemma}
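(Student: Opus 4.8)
The plan is to prove Lemma~\ref{lemma:update-stability} by analyzing a single round of the EM updates under the assumption that $\event_1 \cap \event_2$ holds, and showing that both the label posteriors $\qhat_{jl}$ and the confusion-matrix estimates $\muhat_{ilc}$ achieve the stated accuracy. The key observation is that the two conclusions are coupled: an accurate $\muhat$ (coming either from the initialization or from a previous M-step) yields an accurate $\qhat$ in the E-step, and an accurate $\qhat$ in turn yields an accurate $\muhat$ in the M-step. I would set up the argument as an induction over rounds, but the core is to show that \emph{one} E-step followed by \emph{one} M-step suffices, and that the accuracy is preserved (or improved) under further iterations.

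First I would analyze the E-step. Fix an item $j$ with true label $y_j$. The update~\eqref{eqn:q_update} is a softmax over the log-likelihood scores $s_l \defeq \sum_{i=1}^\mdim \sum_{c=1}^\kdim \indicator(z_{ij}=e_c)\log(\muhat_{ilc})$. The goal is to show $\qhat_{jy_j} \approx 1$ and $\qhat_{jl}\approx 0$ for $l\neq y_j$. The natural strategy is to lower-bound the log-ratio $s_{y_j} - s_l = \sum_{i,c}\indicator(z_{ij}=e_c)\log(\muhat_{iy_jc}/\muhat_{ilc})$. Event $\event_1$ gives exactly such a bound of $\mdim\Dbar/2$ when the \emph{true} parameters $\mu$ are used; I would then show that replacing $\mu$ by $\muhat$ changes each summand by at most $O(\minac/\gap)$ using the lower bound $\mu_{ilc}\geq\gap$ together with the initialization bound~\eqref{eqn:initialization-accuracy-condition} (since $\log(\muhat/\mu)$ is controlled when both are bounded below by $\gap$ and $|\muhat-\mu|\leq\minac$). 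The definition $\minac = \min\{\gap/2,\gap\Dbar/16\}$ is precisely calibrated so that this perturbation is at most $\mdim\Dbar/4$, leaving a net gap of $\mdim\Dbar/4$. Passing this gap through the softmax gives $\max_l |\qhat_{jl} - \indicator(y_j=l)| \leq \exp(-\mdim\Dbar/4 + \log \mdim)$, matching the second conclusion (the $\log(\mdim)$ term absorbing the union over at most $\kdim$ competing labels, or over a related count).

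Next I would analyze the M-step update~\eqref{eqn:p_update}, which sets $\muhat_{ilc}$ to a ratio of weighted label counts. The idea is to replace the soft weights $\qhat_{jl}$ by the hard indicators $\indicator(y_j=l)$, incurring an error controlled by the E-step bound just established (which is exponentially small in $\mdim\Dbar$, hence negligible compared to $\tfac$ via the left inequality in~\eqref{eqn:concentration-accuracy-condition}). Once the weights are effectively hard labels, the numerator and denominator of~\eqref{eqn:p_update} become the empirical counts appearing in $\event_2$. Event $\event_2$ then controls both numerator and denominator around their expectations $\numobs w_l \pi_i \mu_{ilc}$ and $\numobs w_l \pi_i$ respectively, and a standard ratio-perturbation bound converts the additive deviation $\numobs\tfac$ into the claimed $|\muhat_{ilc}-\mu_{ilc}| \leq 4\tfac/(\pi_i w_l)$ (using that the denominator is bounded below, again via the upper constraint on $\tfac$ in~\eqref{eqn:concentration-accuracy-condition} ensuring $\tfac$ is small relative to $\pi_i w_l$).

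The main obstacle I anticipate is handling the coupling carefully so that the argument does not circularly assume the conclusion. The clean resolution is that the initialization bound~\eqref{eqn:initialization-accuracy-condition} is strong enough to drive the \emph{first} E-step directly (no previous M-step is needed), which then drives the first M-step; thereafter, the M-step output $|\muhat_{ilc}-\mu_{ilc}|\leq 4\tfac/(\pi_i w_l)$ must itself satisfy a bound at least as good as $\minac$ so that the next E-step remains valid, giving a self-sustaining invariant. Verifying that $4\tfac/(\pi_i w_l)$ stays within $\minac$ requires the upper bound on $\tfac$ in~\eqref{eqn:concentration-accuracy-condition}, namely $\tfac \leq \mass\min\{\gap/8, \gap\Dbar/64\}$, where $\mass$ plays the role of (a lower bound on) $\pi_i w_l$; this is exactly why that constraint is stated the way it is. The delicate bookkeeping is tracking these constants so the invariant closes, and making sure the exponentially small E-step error is dominated by the lower bound on $\tfac$ at every round.
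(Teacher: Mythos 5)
Your proposal is correct and takes essentially the same route as the paper's own proof: an E-step lemma that combines event $\event_1$ with a $\log(\muhat_{ilc}/\mu_{ilc})$ perturbation bound controlled by $\gap$ and $\minac$ (yielding a residual gap of $\mdim\Dbar/4$), an M-step lemma that uses event $\event_2$ and a ratio-perturbation argument on the weighted counts, and an induction in which the left and right inequalities of~\eqref{eqn:concentration-accuracy-condition} close the invariant exactly as you describe (the paper instantiates $\delta_1 = \minac$ for the E-step precondition and $\delta_2 = \tmin/2$ for the M-step precondition, with $\mass$ lower-bounding $\pi_i w_l$ as you surmised). No gaps.
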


Next, we characterize the probability that events $\event_1$ and $\event_2$ hold.
For measuring $\mprob[\event_1]$,
we define auxiliary variable $s_i \defeq \sum_{c=1}^\kdim \indicator(z_{ij} = e_c)\log(\mu_{iy_jc}/\mu_{ilc})$. It is
straightforward to see that $s_1,s_2,\dots,s_\mdim$ are mutually independent on any value of $y_j$, and each $s_i$ belongs to the interval
$[0,\log(1/\gap)]$. it is easy to verify that
\begin{align*}
  \E\left[ \sum_{i=1}^\mdim s_i \Big| y_i \right] = \sum_{i=1}^\mdim \pi_i \KL{\mu_{iy_j},\mu_{il}}.
\end{align*}
We denote the right hand side of the above equation by $D$.
The following lemma shows that the second moment of $s_i$ is bounded by the KL-divergence between labels.

\begin{lemma}\label{lemma:bound-variance}
Conditioning on any value of $y_j$, we have
\[
    \E[s_i^2 | y_i] \leq \frac{2\log(1/\gap)}{1-\gap} \pi_i\KL{\mu_{iy_j}, \mu_{il}}.
\]
\end{lemma}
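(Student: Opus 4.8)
The plan is to first evaluate the conditional second moment $\E[s_i^2 \mid y_j]$ exactly, and then reduce the claim to a one-variable inequality comparing $\sum_c p_c(\log(p_c/q_c))^2$ with the KL divergence $\sum_c p_c\log(p_c/q_c)$ for two distributions $p,q$ whose entries lie in $[\gap,1]$.

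First I would determine the conditional law of $s_i$. Conditioned on the true label $y_j$, worker $i$ labels item $j$ with probability $\pi_i$, in which case it assigns class $c$ with probability $\mu_{iy_jc}$; otherwise $z_{ij}=0$ and every indicator vanishes, so $s_i=0$. Hence $s_i$ equals $\log(\mu_{iy_jc}/\mu_{ilc})$ with probability $\pi_i\mu_{iy_jc}$ and equals $0$ with probability $1-\pi_i$, which gives
\[
  \E[s_i^2 \mid y_j] = \pi_i\sum_{c=1}^\kdim \mu_{iy_jc}\bigl(\log(\mu_{iy_jc}/\mu_{ilc})\bigr)^2 .
\]
Writing $p_c=\mu_{iy_jc}$ and $q_c=\mu_{ilc}$, which are probability vectors ($\sum_c p_c=\sum_c q_c=1$) with every entry in $[\gap,1]$ by assumption, and factoring out $\pi_i$ (the case $\pi_i=0$ being trivial), the claim reduces to
\[
  \sum_{c=1}^\kdim p_c\bigl(\log(p_c/q_c)\bigr)^2 \;\leq\; \frac{2\log(1/\gap)}{1-\gap}\sum_{c=1}^\kdim p_c\log(p_c/q_c).
\]

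The key idea is to prove this term by term using the Bregman form of the KL divergence. Since $\sum_c p_c=\sum_c q_c=1$, I may subtract $\sum_c(p_c-q_c)=0$ and rewrite the right-hand sum as $\sum_c\bigl(p_c\log(p_c/q_c)-(p_c-q_c)\bigr)$, each summand of which is nonnegative. It then suffices to establish the scalar inequality
\[
  r(\log r)^2 \;\leq\; \frac{2\log(1/\gap)}{1-\gap}\,\bigl(r\log r - r + 1\bigr)
  \qquad\text{for all } r\in[\gap,1/\gap],
\]
apply it with $r=p_c/q_c$, multiply by $q_c$, and sum over $c$; the range $r\in[\gap,1/\gap]$ follows from $p_c,q_c\in[\gap,1]$. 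I would deliberately avoid the cruder route of bounding $(\log(p_c/q_c))^2\le\log(1/\gap)\,|\log(p_c/q_c)|$ and then comparing $\sum_c p_c|\log(p_c/q_c)|$ with the KL divergence, because the signed cancellations in $\sum_c p_c\log(p_c/q_c)$ make that comparison lossy near $r=1$; the Bregman remainder $r\log r-r+1$, which behaves like $\tfrac12(r-1)^2$ as $r\to1$, matches $r(\log r)^2\sim(r-1)^2$ and keeps the constant finite there.

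The main obstacle is verifying the scalar inequality uniformly on $[\gap,1/\gap]$. I would analyze the ratio $g(r)=r(\log r)^2/(r\log r-r+1)$, show it is increasing on the interval so its maximum occurs at $r=1/\gap$, and check $g(1/\gap)\le\frac{2\log(1/\gap)}{1-\gap}$; writing $L=\log(1/\gap)$ this last step reduces to $L/2\ge\tanh(L/2)$, valid for all $L\ge0$. Alternatively, setting $F(r)=\frac{2\log(1/\gap)}{1-\gap}(r\log r-r+1)-r(\log r)^2$, one verifies $F(1)=F'(1)=0$ with $F''(1)=\frac{2\log(1/\gap)}{1-\gap}-2\ge0$ (using $\log(1/\gap)\ge1-\gap$), and then controls the sign of $F$ away from $r=1$ via the two endpoints. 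Once this scalar fact is in place, summing over $c$ and restoring the factor $\pi_i$ yields exactly the bound of Lemma~\ref{lemma:bound-variance}.
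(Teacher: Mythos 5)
Your proposal is correct and is essentially the paper's own argument: after the identical computation of $\E[s_i^2 \mid y_j]$, your scalar inequality $r(\log r)^2 \leq \frac{2\log(1/\gap)}{1-\gap}(r\log r - r + 1)$ on $[\gap, 1/\gap]$ is exactly the paper's claim $\log^2(x) \leq \frac{2\log(1/\gap)}{1-\gap}(x - 1 - \log x)$ under the substitution $r = 1/x$, and multiplying by the same weights makes both sums telescope to $\KL{\mu_{iy_j},\mu_{il}}$ in the same way. The only difference is in how the routine scalar fact is verified (the paper analyzes the sign of $f'(x)$ on either side of $x=1$, while you propose monotonicity of the ratio plus the endpoint check, where the same $L/2 \geq \tanh(L/2)$ fact appears), which is a cosmetic variation rather than a different route.
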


According to Lemma~\ref{lemma:bound-variance}, the aggregated second moment of $s_i$ is bounded by
\[
     \E\left[ \sum_{i=1}^\mdim s_i^2 \Big| y_i \right] \leq \frac{2\log(1/\gap)}{1-\gap} \sum_{i=1}^\mdim \pi_i \KL{\mu_{iy_jc}, \mu_{ilc}} = \frac{2\log(1/\gap)}{1-\gap}D
\]
Thus, applying the Bernstein inequality, we have
\begin{align*}
  \mprob\Big[ \sum_{i=1} s_i \geq D/2 | y_i\Big] \geq 1 - \exp\left(-\frac{\frac{1}{2}(D/2)^2}{\frac{2\log(1/\gap)}{1-\gap}D+ \frac{1}{3}(2\log(1/\gap))(D/2)} \right),
\end{align*}
Since $\gap \leq 1/2$ and $D \geq \mdim \Dbar$, combining the above inequality with the union bound, we have
\begin{align}\label{eqn:event1-prob-bound}
  \mprob[\event_1] \geq 1 - \kdim \numobs \exp\left(-\frac{\mdim \Dbar}{33 \log(1/\gap)} \right).
\end{align}
For measuring $\mprob[\event_2]$, we observe that $\sum_{j=1}^\numobs \indicator(y_j = l)\indicator(z_{ij} = e_c)$ is the sum
of $\numobs$ i.i.d.~Bernoulli random variables with mean $p\defeq \pi_i w_l \mu_{ilc}$.
Since $\tfac\leq \mass \gap/8 \leq p$, applying the Chernoff bound implies
\begin{align*}
  \mprob\left[ \Big| \sum_{j=1}^\numobs \indicator(y_j = l)\indicator(z_{ij} = e_c) - \numobs p \Big| \geq \numobs \tfac  \right]
  &\leq 2\exp( -\numobs \tfac^2 /(3p)) = 2\exp\left( - \frac{\numobs \tfac^2}{3\pi_i w_l \mu_{ilc}}\right),
\end{align*}
Summarizing the probability bounds on $\event_1$ and $\event_2$, we conclude that $\event_1\cap \event_2$ holds
with probability at least
\begin{align}\label{eqn:good-event-probability-bound}
  1 - \kdim \numobs \exp\left(-\frac{\mdim \Dbar}{33 \log(1/\gap)} \right) - \sum_{i=1}^\mdim\sum_{l=1}^\kdim 2\exp\left( - \frac{\numobs \tfac^2}{3\pi_i w_l \mu_{ilc}}\right).
\end{align}

\paragraph{Proof of Part (a)} According to Lemma~\ref{lemma:update-stability}, for $\yhat_j = y_j$ being true, it sufficient to have $\exp(-\mdim\Dbar/4 + \log(m)) < 1/2$, or equivalently
\begin{align}\label{eqn:classification-mn-requirement-1}
    \mdim > 4\log(2m)/{\Dbar}.
\end{align}
To ensure that this bound holds with probability at least $1-\delta$, expression~\eqref{eqn:good-event-probability-bound}
needs to be lower bounded by $\delta$. It is achieved if we have
\begin{align}\label{eqn:classification-mn-requirement-2}
  \mdim \geq \frac{ 33 \log(1/\gap)\log(2\kdim\numobs/\delta) }{\Dbar} \quad \mbox{and} \quad \numobs \geq \frac{3\pi_i w_l \mu_{ilc} \log(2\mdim\kdim/\delta)}{\tfac^2}
\end{align}
If we choose
\begin{align}\label{eqn:regression-assign-til}
    \tfac \defeq \sqrt{ \frac{3\pi_i w_l \mu_{ilc} \log(2\mdim\kdim/\delta)}{\numobs} }.
\end{align}
then the second part of condition~\eqref{eqn:classification-mn-requirement-2} is guaranteed.
To ensure that $\tfac$ satisfies condition~\eqref{eqn:concentration-accuracy-condition}. We need to have
\begin{align*}
  \sqrt{ \frac{3\pi_i w_l \mu_{ilc} \log(2\mdim\kdim/\delta)}{\numobs} } &\geq 2\exp\Big( -\mdim\Dbar/4 + \log(\mdim) \Big) \quad\mbox{and}\\
  \sqrt{ \frac{3\pi_i w_l \mu_{ilc} \log(2\mdim\kdim/\delta)}{\numobs} } &\leq \mass \minac/4.
\end{align*}
The above two conditions  requires that $\mdim$ and $\numobs$ satisfy
\begin{align}
  \mdim &\geq \frac{4\log(\mdim\sqrt{2\numobs/(3\pimin\wmin\log(2\mdim\kdim/\delta))})}{\Dbar} \label{eqn:regression-mn-requirement-1}\\
  \numobs &\geq \frac{48\log(2\mdim\kdim/\delta)}{\pimin\wmin\minac^2} \label{eqn:regression-mn-requirement-2}
\end{align}
The four conditions~\eqref{eqn:classification-mn-requirement-1},~\eqref{eqn:classification-mn-requirement-2},~\eqref{eqn:regression-mn-requirement-1}
and~\eqref{eqn:regression-mn-requirement-2} are simultaneously satisfied if we have
\begin{align*}
  \mdim &\geq \frac{\max\{ 33 \log(1/\gap)\log(2\kdim\numobs/\delta), 4\log(2\mdim \numobs)  \}}{\Dbar}\quad\mbox{and}\\
  \numobs &\geq \frac{48\log(2\mdim\kdim/\delta)}{\pimin\wmin\minac^2}.
\end{align*}
Under this setup, $\yhat_j = y_j$ holds for all $j\in [\numobs]$ with probability at least $1-\delta$.

\paragraph{Proof of Part (b)} If
$\tfac$ is set by equation~\eqref{eqn:regression-assign-til}, combining Lemma~\ref{lemma:update-stability} with this assignment, we have
\begin{align*}
  (\muhat_{ilc} - \mu_{ilc})^2 \leq \frac{48\mu_{ilc} \log(2\mdim\kdim/\delta)}{\pi_i w_l \numobs}
\end{align*}
with probability at least $1-\delta$. Summing both sides of the inequality over $c=1,2,\dots,\kdim$ completes the
proof.

\subsection{Proof of Lemma~\ref{lemma:update-stability}}\label{sec:proof-update-stability}

To prove Lemma~\ref{lemma:update-stability}, we look into the consequences of update~\eqref{eqn:q_update} and update~\eqref{eqn:p_update}.
We prove two important lemmas, which show that both updates provide good estimates if they are properly initialized.

\begin{lemma}\label{lemma:q-concentration}
  Assume that event $\event_1$ holds. If $\mu$ and its estimate $\muhat$ satisfies
  \begin{align}\label{eqn:assu-mu-concentration}
    \mu_{ilc} \geq \gap \quad\mbox{and}\quad |\muhat_{ilc} - \mu_{ilc} | \leq \delta_1 \qquad \mbox{for all $i\in[\mdim]$, $l\in[\kdim]$, $c\in [\kdim]$},
  \end{align}
  and $\qhat$ is updated by formula~\eqref{eqn:q_update}, then $\qhat$ is bounded as:
  \begin{align}\label{eqn:conclude-q-concentration}
    \max_{l\in[\kdim]}\{ |\qhat_{jl} - \indicator(y_j = l)|\} & \leq \exp\left( -\mdim \left(\frac{\Dbar}{2} - \frac{2 \delta_1}{\gap - \delta_1}\right) + \log(\mdim) \right) \qquad \mbox{for all $j\in [\numobs]$}.
  \end{align}
\end{lemma}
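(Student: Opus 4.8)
The plan is to fix an item $j$ and analyze the softmax update~\eqref{eqn:q_update} directly, treating the conclusion as deterministic once $\event_1$ is assumed. For each label $l$, write the unnormalized log-score $A_l \defeq \sum_{i=1}^\mdim \sum_{c=1}^\kdim \indicator(z_{ij}=e_c)\log(\muhat_{ilc})$, so that $\qhat_{jl} = \exp(A_l)/\sum_{l'}\exp(A_{l'})$. For any wrong label $l\neq y_j$, dropping all but the $y_j$ term in the denominator gives the clean bound $\qhat_{jl}\leq \exp(A_l - A_{y_j})$. Thus it suffices to lower-bound the score gap $A_{y_j}-A_l = \sum_{i,c}\indicator(z_{ij}=e_c)\log(\muhat_{iy_jc}/\muhat_{ilc})$.

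The heart of the argument is to transfer this \emph{estimated} log-likelihood ratio onto the \emph{true} parameters, where event $\event_1$ already supplies the lower bound $\mdim\Dbar/2$. I would control the perturbation coordinate by coordinate: since $\mu_{ilc}\geq\gap$ and $|\muhat_{ilc}-\mu_{ilc}|\leq\delta_1$, applying the mean value theorem to $\log$ on $[\gap-\delta_1,\infty)$ yields $|\log\muhat_{ilc}-\log\mu_{ilc}|\leq \delta_1/(\gap-\delta_1)$, and likewise for the $y_j$ coordinate. The key structural observation is that for each worker $i$ the indicator $\indicator(z_{ij}=e_c)$ is nonzero for at most one $c$ (a worker assigns a single label, or none), so only $\mdim$ terms are active and the total perturbation between the $\muhat$-score gap and the $\mu$-score gap is at most $2\mdim\delta_1/(\gap-\delta_1)$, not $\mdim\kdim$ times that. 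Combining with $\event_1$ gives
\[
  A_{y_j}-A_l \geq \mdim\Big(\frac{\Dbar}{2}-\frac{2\delta_1}{\gap-\delta_1}\Big)\qquad\text{for every }l\neq y_j.
\]

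It then follows that $\qhat_{jl}\leq \exp\big(-\mdim(\Dbar/2 - 2\delta_1/(\gap-\delta_1))\big)$ for each wrong label, and summing over the at most $\kdim-1$ wrong labels simultaneously controls the wrong-label masses and the deficit $1-\qhat_{j,y_j}=\sum_{l'\neq y_j}\qhat_{jl'}$, so the maximum over $l$ of $|\qhat_{jl}-\indicator(y_j=l)|$ is bounded by $(\kdim-1)$ times the exponential. Bounding $\kdim-1\leq\mdim$ (the natural regime $\kdim\leq\mdim$) absorbs the prefactor into the stated $\log(\mdim)$ term, yielding $\max_l|\qhat_{jl}-\indicator(y_j=l)|\leq \exp\big(-\mdim(\Dbar/2-2\delta_1/(\gap-\delta_1))+\log\mdim\big)$ for all $j$, with no additional union bound needed since $\event_1$ already quantifies over every $j$.

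I expect the main obstacle to be the bookkeeping in the perturbation step: making precise that each worker contributes a single active coordinate so the error accumulates only linearly in $\mdim$, and verifying $\delta_1<\gap$ so the logarithm stays well-defined and $\delta_1/(\gap-\delta_1)$ remains finite. The latter is guaranteed in the application by the initialization condition~\eqref{eqn:initialization-accuracy-condition}, which forces $\delta_1\leq\gap/2$.
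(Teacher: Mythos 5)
Your proposal is correct and follows essentially the same route as the paper's proof: decompose the estimated log-likelihood ratio $A_{y_j}-A_l$ into the true ratio, lower-bounded by $\mdim\Dbar/2$ via $\event_1$, plus a perturbation bounded by $2\mdim\delta_1/(\gap-\delta_1)$ using exactly the observation that each worker activates at most one coordinate $c$, then apply the softmax bound $\qhat_{jl}\leq \exp(A_l-A_{y_j})$ and absorb the sum over the $\kdim-1$ wrong labels into the $\log(\mdim)$ term. The only cosmetic differences are that you invoke the mean value theorem where the paper uses $\log\bigl(\gap/(\gap-\delta_1)\bigr)\leq \delta_1/(\gap-\delta_1)$, and that you state explicitly the $\kdim-1\leq\mdim$ absorption that the paper leaves implicit.
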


\begin{proof}

For an arbitrary index $l\neq y_j$, we consider the quantity
\begin{align*}
  A_l \defeq \sum_{i=1}^\mdim \sum_{c=1}^\kdim \indicator(z_{ij} = e_c)\log(\muhat_{iy_jc}/\muhat_{ilc})
\end{align*}
By the assumption that $\event_1$ and inequality~\eqref{eqn:assu-mu-concentration} holds, we obtain that
\begin{align}
  A_l &=  \sum_{i=1}^\mdim \sum_{c=1}^\kdim \indicator(z_{ij} = e_c)\log(\mu_{iy_jc}/\mu_{ilc}) + \sum_{i=1}^\mdim \sum_{c=1}^\kdim
  \indicator(z_{ij} = e_c)\left[\log\Big(\frac{\muhat_{iy_jc}}{\mu_{iy_jc}}\Big) - \log\Big(\frac{\muhat_{ilc}}{\mu_{ilc}}\Big)\right]\nonumber\\
  &\geq \left(\sum_{i=1}^\mdim \frac{\pi_i \KL{\mu_{iy_j},\mu_{il}}}{2}\right) - 2\mdim \log\Big( \frac{\gap}{\gap - \delta_1} \Big) \geq \mdim \left(\frac{\Dbar}{2} - \frac{2 \delta_1}{\gap - \delta_1}\right).\label{eqn:lower-bound-likelihood-ratio}
\end{align}
Thus, for every index $l\neq y_j$, combining formula~\eqref{eqn:q_update} and inequality~\eqref{eqn:lower-bound-likelihood-ratio} implies that
\begin{align*}
  \qhat_{jl} \leq \frac{1}{\exp(A_l)} \leq \exp\left( -\mdim \left(\frac{\Dbar}{2} - \frac{2 \delta_1}{\gap - \delta_1}\right)\right).
\end{align*}
Consequently, we have
\begin{align*}
  \qhat_{jy_j} \geq 1 - \sum_{l\neq y_j} \qhat_{jl} \geq 1 - \exp\left( -\mdim \left(\frac{\Dbar}{2} - \frac{2 \delta_1}{\gap - \delta_1}\right) + \log(\mdim) \right).
\end{align*}
Combining the above two inequalities completes the proof.
\end{proof}

\begin{lemma}\label{lemma:mu-concentration}
  Assume that event $\event_2$ holds. If $\qhat$ satisfies
  \begin{align}\label{eqn:assu-q-concentration}
    \max_{l\in [\kdim]}\{|\qhat_{jl} - \indicator(y_j=l)|\} \leq \delta_2 \qquad \mbox{for all $j\in [\numobs]$},
  \end{align}
  and $\muhat$ is updated by formula~\eqref{eqn:p_update}, then $\muhat$ is bounded as:
  \begin{align}\label{eqn:conclude-mu-concentration}
    |\muhat_{ilc} - \mu_{ilc}| \leq \frac{2 n \tfac  +  2\numobs \delta_2}{(7/8)n \pi_i w_l   -  \numobs \delta_2}. \qquad \mbox{for all $i\in[\mdim]$, $l\in[\kdim]$, $c\in [\kdim]$}.
  \end{align}
\end{lemma}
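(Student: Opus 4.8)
The plan is to read the M-step update~\eqref{eqn:p_update} as a single ratio $\muhat_{ilc} = \Nhat/\Dhat$, where I write $\Nhat \defeq \sum_{j=1}^\numobs \qhat_{jl}\indicator(z_{ij}=e_c)$ for the numerator and $\Dhat \defeq \sum_{c'=1}^\kdim\sum_{j=1}^\numobs \qhat_{jl}\indicator(z_{ij}=e_{c'})$ for the denominator. Since $\mu_{ilc}$ is exactly the ratio of the \emph{mean} numerator to the \emph{mean} denominator under the true labels, I would control the error through the identity $\muhat_{ilc} - \mu_{ilc} = (\Nhat - \mu_{ilc}\Dhat)/\Dhat$, bounding the scalar $|\Nhat - \mu_{ilc}\Dhat|$ from above and $\Dhat$ from below.

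First I would strip the soft weights $\qhat_{jl}$ and replace them by the hard indicators $\indicator(y_j=l)$. By hypothesis~\eqref{eqn:assu-q-concentration}, $|\qhat_{jl}-\indicator(y_j=l)|\le\delta_2$ uniformly in $j$, so $\Nhat$ differs from the idealized count $\Ntil\defeq\sum_j\indicator(y_j=l)\indicator(z_{ij}=e_c)$ by at most $\delta_2\sum_j\indicator(z_{ij}=e_c)\le\numobs\delta_2$; the same substitution changes $\Dhat$ into $\Dtil\defeq\sum_{c'}\sum_j\indicator(y_j=l)\indicator(z_{ij}=e_{c'})$ by at most $\delta_2\sum_j\sum_{c'}\indicator(z_{ij}=e_{c'})\le\numobs\delta_2$. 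It is essential here that $\sum_{c'}\indicator(z_{ij}=e_{c'})\le 1$, so that the denominator substitution does not accumulate a spurious factor of $\kdim$.

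Next I would invoke $\event_2$ (together with the analogous dimension-free Chernoff bound for the single binomial count $\Dtil$, whose mean is $\numobs w_l\pi_i$ because $\sum_{c'}\mu_{ilc'}=1$). The event $\event_2$ pins $\Ntil$ within $\numobs\tfac$ of $\numobs w_l\pi_i\mu_{ilc}$. For the denominator, condition~\eqref{eqn:concentration-accuracy-condition} on the $\tfac$'s keeps the fluctuation of $\Dtil$ below $\numobs\pi_i w_l/8$, yielding the lower bound $\Dhat\ge(7/8)\numobs\pi_i w_l-\numobs\delta_2$. For the numerator the decisive fact is the \emph{exact} cancellation of the leading terms, $\numobs w_l\pi_i\mu_{ilc}-\mu_{ilc}\cdot\numobs w_l\pi_i=0$, again a consequence of $\sum_{c'}\mu_{ilc'}=1$; only the two deviation terms survive, and combining them gives $|\Nhat-\mu_{ilc}\Dhat|\le 2\numobs\tfac+2\numobs\delta_2$. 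Dividing the numerator bound by the denominator bound produces the claimed inequality~\eqref{eqn:conclude-mu-concentration}.

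The hardest step is the numerator bound, and specifically the cross term $\mu_{ilc}\,|\Dtil-\numobs w_l\pi_i|$. Bounding $|\Dtil-\numobs w_l\pi_i|$ by the crude $\numobs\pi_i w_l/8$ used for the denominator would be catastrophic, since $\pi_i w_l$ is of much larger order than $\tfac$. The resolution is to control $\Dtil$ by its tight (Chernoff/Bernstein) fluctuation of order $\numobs\sqrt{w_l\pi_i\log(\cdot)/\numobs}$: because $\tfac$ will be chosen of order $\sqrt{w_l\pi_i\mu_{ilc}\log(\cdot)/\numobs}$ in the proof of Theorem~\ref{theorem:stage-2}, the multiplying factor $\mu_{ilc}$ exactly absorbs the extra $1/\sqrt{\mu_{ilc}}$ and brings the cross term down to the scale $\numobs\tfac$, so the two surviving error terms balance to give $2\numobs\tfac$. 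Keeping every estimate dimension-free — via $\sum_{c'}\indicator(z_{ij}=e_{c'})\le 1$ for the $\delta_2$ replacements and via $\sum_{c'}\mu_{ilc'}=1$ for both the mean cancellation and the single-count treatment of $\Dtil$ — is the delicate bookkeeping that makes the clean constants in~\eqref{eqn:conclude-mu-concentration} (and hence the $\log\kdim$, rather than $\kdim$, dependence in the final sample complexity) go through.
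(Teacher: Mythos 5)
Your overall architecture matches the paper's proof: you write the update as a ratio $A/B$ with $A=\sum_{j=1}^\numobs\qhat_{jl}\indicator(z_{ij}=e_c)$ and $B=\sum_{c'=1}^\kdim\sum_{j=1}^\numobs\qhat_{jl}\indicator(z_{ij}=e_{c'})$, replace the soft weights by hard indicators at cost $\numobs\delta_2$ each (the observation $\sum_{c'}\indicator(z_{ij}=e_{c'})\le 1$ is indeed what keeps this dimension-free, exactly as in the paper), and exploit the cancellation in $A-\mu_{ilc}B=(A-\numobs\pi_iw_l\mu_{ilc})-\mu_{ilc}(B-\numobs\pi_iw_l)$, which uses $\sum_{c'}\mu_{ilc'}=1$. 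Up to that point you reproduce the paper's argument.

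The genuine gap is in how you control the hard-label denominator count $\tilde B\defeq\sum_{c'}\sum_j\indicator(y_j=l)\indicator(z_{ij}=e_{c'})$. The lemma is a \emph{deterministic} statement: its only probabilistic input is the event $\event_2$, and the conclusion must hold for \emph{every} family of scalars $\tfac$ satisfying condition~\eqref{eqn:concentration-accuracy-condition} --- this matters because the lemma is applied inside the induction in the proof of Lemma~\ref{lemma:update-stability}, where all randomness has already been fixed. You instead invoke ``the analogous dimension-free Chernoff bound for the single binomial count,'' a fresh high-probability event that is not contained in $\event_2$ and not among the lemma's hypotheses; and your handling of the cross term $\mu_{ilc}|\tilde B-\numobs\pi_iw_l|$ leans on the fact that $\tfac$ ``will be chosen'' of order $\sqrt{\pi_iw_l\mu_{ilc}\log(\cdot)/\numobs}$ in the proof of Theorem~\ref{theorem:stage-2} --- a downstream assignment the lemma cannot presuppose. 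As written, your argument proves a variant of the lemma conditional on an enlarged event, which would force changes to the probability accounting in expression~\eqref{eqn:good-event-probability-bound} and to the conditions of Theorem~\ref{theorem:stage-2}. The paper stays inside $\event_2$: the triangle inequality over $c'$ gives $|\tilde B-\numobs\pi_iw_l|\le\sum_{c'}\numobs\tfac$ deterministically on $\event_2$, and condition~\eqref{eqn:concentration-accuracy-condition} together with $\kdim\gap\le 1$ yields $\sum_{c'}\tfac\le\pi_iw_l/8$, which produces the $(7/8)\numobs\pi_iw_l-\numobs\delta_2$ denominator and lets the cross term be absorbed into the $2\numobs\tfac$ numerator. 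Your instinct that this cross term is where the constant bookkeeping is delicate is sound, but within the lemma's hypotheses the resolution must come from $\event_2$ itself, not from new concentration.
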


\begin{proof}
  By formula~\eqref{eqn:p_update}, we can write $\muhat_{il} = A/B$, where
  \begin{align*}
    A \defeq \sum_{j=1}^n \qhat_{jl} \indicator(z_{ij}=e_c) \quad \mbox{and} \quad B \defeq \sum_{c'=1}^\kdim \sum_{j=1}^n \qhat_{jl} \indicator(z_{ij}=e_{c'}).
  \end{align*}
  Combining this definition with inequality~\eqref{eqn:assu-q-concentration}, we find that
  \small
  \begin{align*}
    | A - n \pi_i w_l \mu_{ilc}  \mu_{ilc} | &\leq  \Big| \sum_{j=1}^n \indicator(q_{jl} = y_j) \indicator(z_{ij}=e_c) - n \pi_i w_l \mu_{ilc}  \mu_{ilc} \Big| + \Big| \sum_{j=1}^n \qhat_{jl} \indicator(z_{ij}=e_c) - \sum_{j=1}^n \indicator(q_{jl} = y_j) \indicator(z_{ij}=e_c)\Big|\\
    &\leq  \numobs \tfac  +  \numobs \delta_2.
  \end{align*}
  \normalsize
  By the same argument, we have
  \begin{align*}
    | B - n \pi_i w_l \mu_{ilc}  | \leq  \left( \sum_{c=1}^\kdim \numobs\tfac\right)  + \numobs \delta_2.
  \end{align*}
  Combining the bound for $A$ and $B$, we obtain that
  \begin{align*}
    |\muhat_{il} - \mu_{ilc}| &= \left| \frac{\numobs \pi_i w_l \mu_{ilc} + (A - \numobs \pi_i w_l \mu_{ilc} )}{\numobs \pi_i w_l  + (B - \numobs \pi_i w_l )} - \mu_{ilc} \right| = \left| \frac{(A - \numobs \pi_i w_l \mu_{ilc}  ) + \mu_{ilc}(B - \numobs \pi_i w_l  )}{n \pi_i w_l   + (B - n \pi_i w_l  )} \right|\\
    &\leq \frac{2 \numobs \tfac  +  2\numobs \delta_2}{\numobs \pi_i w_l - \numobs \sum_{c=1}^\kdim \tfac - \numobs \delta_2}
  \end{align*}
  Condition~\eqref{eqn:concentration-accuracy-condition} implies that $\sum_{c=1}^\kdim  \tfac \leq \mass \sum_{c=1}^\kdim \rho/8 \leq \mass/8$, where the last step follow from $\kdim \gap\leq 1$.
  Plugging this upper bound into the above inequality completes the proof.
\end{proof}
~\\

To proceed with the proof, we assign specific values to $\delta_1$ and $\delta_2$. Let
\begin{align}\label{eqn:delta-values}
  \delta_1 \defeq \min\left\{ \frac{\gap}{2},\frac{\gap\Dbar}{16}\right\} \quad \mbox{and} \quad \delta_2 \defeq \tmin/2.
\end{align}
We claim that at any step in the update, the preconditions~\eqref{eqn:assu-mu-concentration} and~\eqref{eqn:assu-q-concentration}
always hold.

We prove the claim by induction. Before the iteration begins, $\muhat$ is initialized such that the accuracy bound~\eqref{eqn:initialization-accuracy-condition}
holds. Thus, condition~\eqref{eqn:assu-mu-concentration} is satisfied at the beginning.
We assume by induction that condition~\eqref{eqn:assu-mu-concentration} is satisfied at time $1,2,\dots,\tau-1$
and condition~\eqref{eqn:assu-q-concentration} is satisfied at time $2,3,\dots,\tau-1$.
At time $\tau$, either update~\eqref{eqn:q_update} or update~\eqref{eqn:p_update} is performed.
If update~\eqref{eqn:q_update} is performed, then by the inductive hypothesis, condition~\eqref{eqn:assu-mu-concentration} holds before the update.
Thus, Lemma~\ref{lemma:q-concentration} implies that
\begin{align*}
  \max_{l\in[\kdim]}\{ |\qhat_{jl} - \indicator(y_j = l)|\} \leq \exp\left( -\mdim \left(\frac{\Dbar}{2} - \frac{2 \delta_1}{\gap - \delta_1}\right) + \log(\mdim) \right).
\end{align*}
The assignment~\eqref{eqn:delta-values} implies $\frac{\Dbar}{2} - \frac{2 \delta_1}{\gap - \delta_1} \geq \frac{\Dbar}{4}$, which yields that
\begin{align*}
  \max_{l\in[\kdim]}\{ |\qhat_{jl} - \indicator(y_j = l)|\} \leq \exp(-\mdim\Dbar/4 + \log(\mdim) ) \leq \tmin/2 = \delta_2,
\end{align*}
where the last inequality follows from condition~\eqref{eqn:concentration-accuracy-condition}. It suggests that condition~\eqref{eqn:assu-q-concentration}
holds after the update.

On the other hand, we assume that update~\eqref{eqn:p_update} is performed at time $\tau$. Since update~\eqref{eqn:p_update}
follows update~\eqref{eqn:q_update}, we have $\tau \geq 2$. By the inductive hypothesis, condition~\eqref{eqn:assu-q-concentration}
holds before the update, so Lemma~\ref{lemma:mu-concentration} implies
\begin{align*}
  |\muhat_{il} - \mu_{ilc}| \leq \frac{2 n \tfac +  2\numobs \delta_2}{(7/8)n \pi_i w_l -  \numobs \delta_2} =
  \frac{2 n \tfac +  \numobs \tmin}{(7/8)n \pi_i w_l -  \numobs \tmin/2} \leq
  \frac{3 \numobs \tfac }{(7/8)n \pi_i w_l -  \numobs \tmin/2},
\end{align*}
where the last step follows since $\tmin \leq \tfac $. Noticing $\gap\leq 1$, condition~\eqref{eqn:concentration-accuracy-condition} implies
that $\tmin \leq \mass /8$. Thus, the right hand side of the above inequality is bounded by $4\tfac/(\pi_i w_l)$. Using condition~\eqref{eqn:concentration-accuracy-condition} again,
we find
\[
    \frac{4\tfac}{\pi_i w_l} \leq \frac{4\tfac}{\mass} \leq \min\left\{ \frac{\gap}{2},\frac{\gap\Dbar}{16}\right\} = \delta_1,
\]
which verifies that condition~\eqref{eqn:assu-mu-concentration} holds after the update. This completes the induction.\\

Since preconditions~\eqref{eqn:assu-mu-concentration} and~\eqref{eqn:assu-q-concentration} hold for any time $\tau \geq 2$, Lemma~\ref{lemma:q-concentration}
and Lemma~\ref{lemma:mu-concentration} implies that the concentration bounds~\eqref{eqn:conclude-q-concentration} and~\eqref{eqn:conclude-mu-concentration}
always hold. These two concentration bounds establish the lemma's conclusion.

\subsection{Proof of Lemma~\ref{lemma:bound-variance}}

By the definition of $s_i$, we have
\begin{align*}
  \E[s_i^2] = \pi_i\sum_{c=1}^\kdim \mu_{iy_jc} (\log(\mu_{iy_jc}/\mu_{ilc}))^2 = \pi_i\sum_{c=1}^\kdim \mu_{iy_jc} (\log(\mu_{ilc}/\mu_{iy_jc}))^2
\end{align*}
We claim that for any $x \geq \gap$ and $\gap < 1$, the following inequality holds:
\begin{align}\label{eqn:log-related-inequality}
  \log^2(x) \leq \frac{2\log(1/\gap)}{1-\gap} (x - 1 - \log(x))
\end{align}
We defer the proof of inequality~\eqref{eqn:log-related-inequality}, focusing on its consequence. Let $x \defeq \mu_{ilc}/\mu_{iy_jc}$,
then inequality~\eqref{eqn:log-related-inequality} yields that
\begin{align*}
  \E[s_i^2]  \leq \frac{2\log(1/\gap)}{1-\gap} \pi_i \left(\sum_{c=1}^\kdim \mu_{ilc} - \mu_{iy_jc} - \mu_{iy_jc}\log(\mu_{ilc}/\mu_{iy_jc})\right) = \frac{2\log(1/\gap)}{1-\gap} \pi_i \KL{\mu_{iy_j}, \mu_{il}}.
\end{align*}

It remains to prove the claim~\eqref{eqn:log-related-inequality}. Let $f(x) \defeq \log^2(x) - \frac{2\log(1/\gap)}{1-\gap} (x - 1 - \log(x))$.
It suffices to show that $f(x)\leq 0$ for $x \geq \gap$. First, we have $f(1) = 0$ and
\begin{align*}
  f'(x) = \frac{2(\log(x) - \frac{\log(1/\gap)}{1-\gap}(x-1))}{x}.
\end{align*}
For any $x > 1$, we have
\[
    \log(x)< x-1\leq \frac{\log(1/\gap)}{1-\gap}(x-1)
\]
where the last inequality holds since $\log(1/\gap) \geq 1-\gap$. Hence, we have $f'(x) < 0$ and consequently $f(x) < 0$
for $x > 1$.

For any $\gap \leq x < 1$, notice that $\log(x) - \frac{\log(1/\gap)}{1-\gap}(x-1)$ is a concave function of $x$, and equals zero at two points
$x=1$ and $x=\gap$. Thus, $f'(x) \geq 0$ at any point $x\in[\gap,1)$, which implies $f(x)\leq 0$.

\section{Proof of Theorem~\ref{theorem:optimality}}
\label{sec:proof-optimality}

In this section we prove Theorem~\ref{theorem:optimality}. The proof separates into
two parts.

\subsection{Proof of Part (a)}

Throughout the proof, probabilities are implicitly conditioning on $\{\pi_i\}$ and $\{\mu_{ilc}\}$.
We assume that $(l,l')$ are the pair of labels such that
\begin{align*}
  \Dbar = \frac{1}{m} \sum_{i=1}^\mdim \pi_i \KL{\mu_{il},\mu_{il'}}.
\end{align*}
Let $\Qprob$ be a uniform distribution over the set $\{l,l'\}^\numobs$. For any predictor $\yhat$, we have
\begin{align}
    \max_{v\in [\kdim]^\numobs} \E\Big[\sum_{j=1}^\numobs \indicator(\yhat_j\neq y_j) \Big | y=v\Big] &\geq \sum_{v\in\{l,l'\}^\numobs}\Qprob(v)\;\E\Big[\sum_{j=1}^\numobs \indicator(\yhat_j\neq y_j) \Big | y=v\Big] \nonumber \\
    &= \sum_{j=1}^\numobs \sum_{v\in\{l,l'\}^\numobs} \Qprob(v)\;\E\Big[ \indicator(\yhat_j\neq y_j) \Big | y=v\Big]. \label{eqn:y-lower-extreme-to-uniform-1}
\end{align}
Thus, it is sufficient to lower bound the right hand side of inequality~\eqref{eqn:y-lower-extreme-to-uniform-1}.

For the rest of the proof, we lower bound the quantity $\sum_{y\in\{l,l'\}^\numobs} \Qprob(v)\;\E[ \indicator(\yhat_j\neq y_j)  | y]$
for every item $j$. Let $Z\defeq \{z_{ij}: i\in[\mdim],~j\in[\numobs]\}$ be the set of all observations. We define two probability measures
$\Pprob_0$ and $\Pprob_1$, such that $\Pprob_0$ is the measure of $Z$ conditioning on $y_j = l$, while $\Pprob_1$ is the measure of
$Z$ conditioning on $y_j = l'$. By applying Le Cam's method~\cite{yu1997assouad} and Pinsker's inequality, we have
\begin{align}
  \sum_{v\in\{l,l'\}^\numobs} \Qprob(v)\;\E\Big[ \indicator(\yhat_j\neq y_j) \Big | y=v\Big] & = \Qprob(y_j=l) \Pprob_0(\yhat_j \neq l) + \Qprob(y_j=l') \Pprob_1(\yhat_j \neq l')  \nonumber \\ & \geq \frac{1}{2}- \frac{1}{2}\tv{\Pprob_0 - \Pprob_1}\nonumber\\
  &\geq \frac{1}{2} - \frac{1}{4}\sqrt{\KL{\Pprob_0, \Pprob_1}}.\label{eqn:y-lower-pinsker}
\end{align}
The remaining arguments upper bound the KL-divergence between $\Pprob_0$ and $\Pprob_1$. Conditioning on $y_j$, the set of random variables
$Z_j\defeq \{z_{ij}: i\in[\mdim]\}$ are independent of $Z\backslash Z_j$ for both $\Pprob_0$ and $\Pprob_1$. Letting the distribution of $X$ with respect to
probability measure $\Pprob$ be denoted by $\Pprob(X)$, we have
\begin{align}\label{eqn:y-lower-row-independence}
  \KL{\Pprob_0, \Pprob_1} = \KL{\Pprob_0(Z_j), \Pprob_1(Z_j)} + \KL{\Pprob_0(Z\backslash Z_j), \Pprob_1(Z\backslash Z_j)} = \KL{\Pprob_0(Z_j), \Pprob_1(Z_j)},
\end{align}
where the last step follows since $\Pprob_0(Z\backslash Z_j) = \Pprob_1(Z\backslash Z_j)$. Next, we observe that $z_{1j},z_{2j},\dots,z_{\mdim j}$
are mutually independent given $y_j$, which implies
\begin{align}\label{eqn:y-lower-column-independence}
    \KL{\Pprob_0(Z_j), \Pprob_1(Z_j)} &= \sum_{i=1}^\mdim \KL{\Pprob_0(z_{ij}), \Pprob_1(z_{ij})} \nonumber\\
    &= (1-\pi_i) \log\left(\frac{1-\pi_i}{1-\pi_i}\right) + \sum_{c=1}^\kdim \pi_i\mu_{ilc} \log\left( \frac{\pi_i\mu_{ilc}}{\pi_i\mu_{il'c}} \right) \nonumber\\
    &=  \sum_{c=1}^\kdim \pi_i \KL{\mu_{ilc}, \mu_{il'c}} = \mdim \Dbar.
\end{align}
Combining inequality~\eqref{eqn:y-lower-pinsker} with equations~\eqref{eqn:y-lower-row-independence} and~\eqref{eqn:y-lower-column-independence},
we have
\begin{align*}
  \sum_{v\in\{l,l'\}^\numobs} \Qprob(v)\;\E\Big[ \indicator(\yhat_j\neq y_j) \Big | y=v\Big] \geq \frac{1}{2}- \frac{1}{4}\sqrt{\mdim \Dbar}.
\end{align*}
Thus, if $\mdim \leq 1/(4\Dbar)$, then the above inequality is lower bounded by $3/8$. Plugging this lower bound into
inequality~\eqref{eqn:y-lower-extreme-to-uniform-1} completes the proof.

\subsection{Proof of Part (b)}

Throughout the proof, probabilities are implicitly conditioning on $\{\pi_i\}$ and $\{w_l\}$. We define
two vectors
\begin{align*}
  u_0 \defeq \left(\frac{1}{2},\frac{1}{2},0,\dots,0\right)^T \in \R^\kdim \qquad \mbox{and} \qquad u_1 \defeq \left(\frac{1}{2}+\delta,\frac{1}{2}-\delta,0,\dots,0\right)^T \in \R^\kdim
\end{align*}
where $\delta \leq 1/4$ is a scalar to be specified. Consider a $\mdim$-by-$\kdim$ random matrix $V$ whose entries are uniformly sampled
from $\{0,1\}$. We define a random tensor $u_{V}\in \R^{\mdim\times\kdim\times\kdim}$, such that $(u_{V})_{il}\defeq u_{V_{il}}$ for all $(i,l)\in [\mdim]\times [\kdim]$.
Givan an estimator $\muhat$ and a pair of indices~$(\itilde,\ltilde )$, we have
\begin{align}\label{eqn:y-lower-extreme-to-uniform}
     \sup_{\mu\in \R^{\mdim\times\kdim\times\kdim}} \E\Big[ \ltwos{\muhat_{\itilde \ltilde } - \mu_{\itilde \ltilde }}^2 \Big] \geq \sum_{v\in [\kdim]^\numobs} \mprob(y=v) \left( \sum_{V} \mprob(V)\; \E\Big[ \ltwos{\muhat_{\itilde \ltilde } - \mu_{\itilde \ltilde }}^2 \Big| \mu = u_{V},y=v\Big] \right).
\end{align}

For the rest of the proof, we lower bound the term $\sum_{V} \mprob(V)\; \E[ \ltwos{\muhat_{\itilde \ltilde } - \mu_{\itilde \ltilde }}^2 | \mu = u_{V},y=v]$
for every $v\in [\kdim]^\numobs$. Let $\Vhat$ be an estimator defined as
\begin{align*}
  \Vhat = \left\{ \begin{array}{ll}
    0 & \mbox{if $\ltwos{\muhat_{\itilde \ltilde } - u_0} \leq \ltwos{\muhat_{\itilde \ltilde } - u_1}$}.\\
    1 & \mbox{otherwise}.
  \end{array}
  \right.
\end{align*}
If $\mu = u_{V}$, then $\Vhat \neq V_{\itilde \ltilde } \Rightarrow \ltwos{\muhat_{\itilde \ltilde } - \mu_{\itilde \ltilde }} \geq \frac{\sqrt{2}}{2}\delta$.
Consequently, we have
\begin{align}\label{eqn:hypothesis-testing}
  \sum_{V} \mprob(V)\; \E[ \ltwos{\muhat_{\itilde \ltilde } - \mu_{\itilde \ltilde }}^2 | \mu = u_{V},y=v] \geq \frac{\delta^2}{2} \mathrm{P}[\Vhat \neq V_{\itilde \ltilde } | y = v].
\end{align}

Let $Z\defeq \{z_{ij}: i\in[\mdim],~j\in[\numobs]\}$ be the set of all observations. We define two probability measures
$\Pprob_0$ and $\Pprob_1$, such that $\Pprob_0$ is the measure of $Z$ conditioning on $y = v$ and $\mu_{\itilde \ltilde } = u_0$, and
$\Pprob_1$ is the measure of $Z$ conditioning on $y = v$ and $\mu_{\itilde \ltilde } = u_1$. For any other pair of indices
$(i,l)\neq (\itilde ,\ltilde )$, $\mu_{il} = u_{V_{il}}$ for both $\Pprob_0$ and $\Pprob_1$.
By this definition, the distribution of $Z$ conditioning on $y=v$ and $\mu = u_{V}$ is a mixture of distributions
$\Qprob\defeq \frac{1}{2}\Pprob_0 + \frac{1}{2}\Pprob_1$.
By applying Le Cam's method~\cite{yu1997assouad} and Pinsker's inequality, we have
\begin{align}
  \mathrm{P}[\Vhat \neq V_{\itilde\ltilde} | y = v] &\geq \frac{1}{2}- \frac{1}{2}\tv{\Pprob_0 - \Pprob_1}\nonumber\\
  &\geq \frac{1}{2} - \frac{1}{4}\sqrt{\KL{\Pprob_0, \Pprob_1}}.\label{eqn:mu-lower-pinsker}
\end{align}
Conditioning on $y=v$, the set of random variables $Z_i\defeq \{z_{ij}:j\in[\numobs]\}$ are mutually independent for both $\Pprob_0$ and $\Pprob_1$.
Letting the distribution of $X$ with respect to probability
measure $\Pprob$ be denoted by $\Pprob(X)$, we have
\begin{align}\label{eqn:mu-lower-indepenence-1}
  \KL{\Pprob_0, \Pprob_1} = \sum_{i=1}^\mdim \KL{\Pprob_0(Z_i), \Pprob_1(Z_i)} = \KL{\Pprob_0(Z_{\itilde }), \Pprob_1(Z_{\itilde })}
\end{align}
where the last step follows since $\Pprob_0(Z_{i}) =  \Pprob_1(Z_{i})$ for all $i\neq \itilde $. Next, we let $J \defeq \{j: v_j = \ltilde \}$ and define a set of random variables $Z_{iJ}\defeq \{z_{ij}:j\in J\}$. It is straightforward
to see that $Z_{iJ}$ is independent of $Z_i \backslash Z_{iJ}$ for both $\Pprob_0$ and $\Pprob_1$. Hence, we have
\begin{align}
  \KL{\Pprob_0(Z_{\itilde }), \Pprob_1(Z_{\itilde })} &= \KL{\Pprob_0(Z_{\itilde J}), \Pprob_1(Z_{\itilde J})} + \KL{\Pprob_0(Z_{\itilde }\backslash Z_{\itilde J}), \Pprob_1(Z_{\itilde }\backslash  Z_{\itilde J})} \nonumber\\
  &= \KL{\Pprob_0(Z_{\itilde J}), \Pprob_1(Z_{\itilde J})}\label{eqn:mu-lower-indepenence-2}
\end{align}
where the last step follows since $\Pprob_0(Z_{\itilde }\backslash Z_{\itilde J}) =  \Pprob_1(Z_{\itilde }\backslash Z_{\itilde J})$.
Finally, since $\mu_{\itilde\ltilde}$ is explicitly given in both $\Pprob_0$ and $\Pprob_1$, the random variables contained in $Z_{\itilde J}$ are mutually independent.
Consequently, we have
\begin{align}
  \KL{\Pprob_0(Z_{\itilde J}), \Pprob_1(Z_{\itilde J})} &= \sum_{j\in J} \KL{\Pprob_0(z_{\itilde j}), \Pprob_1(z_{\itilde j})} = |J|\;\pi_{\itilde}\; \frac{1}{2}\log\left(\frac{1}{1-4\delta^2}\right) \nonumber\\
  &\leq \frac{5}{2} |J|\;\pi_{\itilde}\delta^2.\label{eqn:mu-lower-indepenence-3}
\end{align}
Here, we have used the fact that $\log(1/(1-4x^2)) \leq 5x^2$ holds for any $x\in [0,1/4]$. \\

Combining the lower bound~\eqref{eqn:mu-lower-pinsker} with upper bounds~\eqref{eqn:mu-lower-indepenence-1},~\eqref{eqn:mu-lower-indepenence-2}
and~\eqref{eqn:mu-lower-indepenence-3}, we find
\begin{align*}
  \Pprob[\Vhat_{il} \neq V_{il} | y = v] \geq \frac{3}{8} \indicator\left( \frac{5}{2} |J|\;\pi_{\itilde}\delta^2 \leq \frac{1}{4}\right).
\end{align*}
Plugging the above lower bound into inequalities~\eqref{eqn:y-lower-extreme-to-uniform} and~\eqref{eqn:hypothesis-testing} implies that
\begin{align*}
  \sup_{\mu\in \R^{\mdim\times\kdim\times\kdim}} \E\Big[ \ltwos{\muhat_{\itilde \ltilde } - \mu_{\itilde \ltilde }}^2 \Big] \geq \frac{3\delta^2}{16} \mprob\left[ |\{j:y_j=\ltilde\}| \leq \frac{1}{10\pi_{\itilde}\delta^2} \right].
\end{align*}
Note than $|\{j:y_j=\ltilde\}| \sim \mbox{Binomial}(\numobs,w_{\ltilde})$. Thus, if we set
\begin{align*}
  \delta^2 \defeq \min\left\{\frac{1}{16}, \frac{1}{10\pi_{\itilde} w_{\ltilde}\numobs}\right\},
\end{align*}
then $\frac{1}{10\pi_{\itilde}\delta^2}$ is greater than or equal to the median of $|\{j:y_j=\ltilde\}|$, and consequently,
\begin{align*}
  \sup_{\mu\in \R^{\mdim\times\kdim\times\kdim}} \E\Big[ \ltwos{\muhat_{\itilde \ltilde } - \mu_{\itilde \ltilde }}^2 \Big] \geq \min\left\{\frac{3}{512}, \frac{3}{320\pi_{\itilde} w_{\ltilde}\numobs}\right\},
\end{align*}
which establishes the theorem.

%%%%%%%%%%%%%%%%%%%%%%%%%%%%%%%%%%%%%%%%%%%%%%%%%%%%%%%%%%%%%%%%%%%%%%%%%%%%%%%%%%%%%%%%%%%%%%%%%%%%%%%%%%%%%%%%%%%%%%%%%%%%%%%%%%%%%%%%%%

\section{Proof of Theorem~\ref{theorem:one-coin-model}}
\label{sec:proof-one-coin-model}

Our proof strategy is briefly described as follow: We first upper bound the error of Step (1)-(2) in Algorithm~\ref{alg:one-coin-model}.
This upper bound is presented as lemma~\ref{lemma:onecoin-init}.
Then, we analyze the performance of Step (3), taking the guarantee obtained from the previous two steps.

\begin{lemma}\label{lemma:onecoin-init}
Assume that $\dgap_3 > 0$. Let $\phat_i$ be initialized by Step (1)-(2). For any scalar $0 < t < \frac{\dgapbar \dgap_3^3}{18}$,
the upper bound
\begin{align}
 \max_{i\in[m]}\{|\phat_i - p_i|\} \leq \frac{18 t}{\dgap_3^3}
\end{align}
holds with probability at least $1-m^2\exp(-nt^2/2)$.
\end{lemma}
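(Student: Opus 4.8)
The plan is to recognize that the agreement statistic $N_{ab}$ is a plug-in estimator of the rank-one quantity $\gamma_a\gamma_b$, where $\gamma_i := p_i - 1/k$, and then to invert the three pairwise products $\gamma_i\gamma_{a_i}$, $\gamma_i\gamma_{b_i}$, $\gamma_{a_i}\gamma_{b_i}$ to recover $\gamma_i$ up to a global sign. First I would compute $\mathbb{E}[\mathbf{1}(z_{aj}=z_{bj})]$ by conditioning on $y_j$: in the one-coin model two workers agree either by both being correct or by both committing the same error, which gives $\mathbb{E}[\mathbf{1}(z_{aj}=z_{bj})] = p_ap_b + (1-p_a)(1-p_b)/(k-1)$, independent of $y_j$. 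A short algebraic simplification then yields the key identity $\mathbb{E}[N_{ab}] = \gamma_a\gamma_b$. Since $\mathbf{1}(z_{aj}=z_{bj})\in\{0,1\}$ is i.i.d.\ across $j$, Hoeffding's inequality together with a union bound over the at most $m^2$ pairs shows that the event $\mathcal{E} = \{\,|N_{ab}-\gamma_a\gamma_b|\le t \text{ for all } a\ne b\,\}$ holds with probability at least $1-m^2 e^{-nt^2/2}$. The rest of the argument is deterministic, conditioned on $\mathcal{E}$.

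Next I would control the denominator. Because $\Delta_3$ is the third largest of $\{|\gamma_i|\}$, for every fixed $i$ at least two workers $a,b\ne i$ satisfy $|\gamma_a|,|\gamma_b|\ge\Delta_3$, so $\max_{a\ne b\ne i}|\gamma_a\gamma_b|\ge\Delta_3^2$. The selection rule picks $(a_i,b_i)$ maximizing $|N_{ab}|$, so on $\mathcal{E}$ we get $|N_{a_ib_i}|\ge\Delta_3^2-t$ and hence $|\gamma_{a_i}\gamma_{b_i}|\ge\Delta_3^2-2t$; the hypothesis $t<\bar{\Delta}\Delta_3^3/18$ forces $t<\Delta_3^2/4$, so $N_{a_ib_i}$ is bounded away from zero by $\Delta_3^2/2$ and each anchor has $|\gamma_{a_i}|,|\gamma_{b_i}|\ge\Delta_3^2/2$. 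I would then expand $R_i := N_{ia_i}N_{ib_i}/N_{a_ib_i}$ by writing each $N$ as its population value plus an error of size at most $t$. Crucially, every term of the numerator $N_{ia_i}N_{ib_i}-\gamma_i^2 N_{a_ib_i}$ except the pure-noise cross term carries a factor $\gamma_i$, giving a bound of the form $|R_i-\gamma_i^2|\le C(|\gamma_i|t+t^2)/\Delta_3^2$. Taking square roots and splitting into the regime where $|\gamma_i|$ dominates $t/\Delta_3$ (divide $R_i-\gamma_i^2$ by $\sqrt{R_i}+|\gamma_i|\ge|\gamma_i|$) and the regime where $|\gamma_i|$ is small (there the radicand is $O(t^2/\Delta_3^2)$, so both $\sqrt{R_i}$ and $|\gamma_i|$ are of order $t/\Delta_3$ and their difference is negligible regardless of sign), I would obtain $\bigl|\,|\hat p_i-1/k|-|\gamma_i|\,\bigr|\le 18t/\Delta_3^3$.

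It remains to fix the sign, which I expect to be the main obstacle. The estimator assigns $\hat p_i-1/k$ the sign of $N_{ia_i}\approx\gamma_i\gamma_{a_i}$; once $|\gamma_i\gamma_{a_i}|>t$ (which holds in the large-$|\gamma_i|$ regime, since $|\gamma_{a_i}|\ge\Delta_3^2/2$), this equals $\mathrm{sign}(\gamma_i)\,\mathrm{sign}(\gamma_{a_i})$, so each recovered value equals $\gamma_i$ times the sign of its anchor. The difficulty is that the anchor selection is combinatorial and interacts with the $O(t)$ noise, so I must argue the induced signs are coherent across all workers up to a single global flip — in particular that the few high-reliability ``hub'' workers serving as anchors can be oriented consistently, while workers with small $|\gamma_i|$ (whose anchor sign may be unreliable) contribute only $O(t/\Delta_3)$ error in any case. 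Granting this, the pairwise-product data determine $\{\gamma_i\}$ only up to one global sign $s\in\{\pm1\}$, and Step~(2) selects it correctly: the aggregate $\frac{1}{m}\sum_i(\hat p_i-1/k)\approx s\bar{\Delta}$ has error below $18t/\Delta_3^3<\bar{\Delta}$ (this is exactly where $t<\bar{\Delta}\Delta_3^3/18$ enters), so testing its sign against the assumption $\bar{\Delta}>0$ recovers $s=+1$. Combining the magnitude bound with the resolved sign gives $\max_{i\in[m]}|\hat p_i-p_i|\le 18t/\Delta_3^3$ on $\mathcal{E}$, which completes the proof.
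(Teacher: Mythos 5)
Up to the sign step, your argument is essentially the paper's proof: the identity $\E[N_{ab}] = \beta_a\beta_b$ with $\beta_i \defeq p_i - 1/\kdim$, Hoeffding plus a union bound over the $m^2$ pairs, the comparison of the selected pair $(a_i,b_i)$ against the two most reliable workers excluding $i$ to lower-bound $|N_{a_ib_i}|$, and the two-regime square-root analysis of the ratio statistic all appear in the same form in Appendix~E.1. One small quantitative point: the paper derives the \emph{individual} anchor bounds $|\beta_{a_i}|, |\beta_{b_i}| \geq \dgap_3/2$ (by noting $|\beta_{b_i}|$ is at most the largest gap among the candidates, so the product bound can be divided through), not merely $\dgap_3^2/2$ as in your sketch; the stronger individual bounds are used to control the ratio terms in the regime $|\beta_i| > t$ and are what keep the final constant at $18t/\dgap_3^3$.

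The genuine gap is the sign-coherence step, which you correctly flag as the main obstacle and then explicitly assume (``Granting this''). It originates in a misreading of the estimator: in Eq.~\eqref{eqn:onecoin-p-init} the sign factor is $\sign(N_{i a_1})$, where $a_1$ is the anchor selected for worker~$1$ --- a single fixed reference worker shared by \emph{all} $i$ --- not $\sign(N_{i a_i})$ with the per-worker anchor. Under your reading the coherence claim is not merely unproven but genuinely problematic: the anchor selection maximizes $|N_{ab}|$ and is therefore blind to the signs of $\beta_a,\beta_b$, so two workers with large $|\beta|$ could reference anchors with opposite signs of $\beta$, producing estimates close to $\beta_i$ for some workers and close to $-\beta_{i'}$ for others, a configuration that no single global flip repairs. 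With the common reference the obstacle disappears: whenever $|\beta_i\beta_{a_1}| > t$ one has $\sign(N_{ia_1}) = \sign(\beta_i)\sign(\beta_{a_1})$, so the orientation is $\sign(\beta_{a_1})$ simultaneously for every $i$ (the paper checks $|\beta_{a_1}| \geq \dgap_3/2$ by the same argument as for the other anchors), while workers with $|\beta_i\beta_{a_1}| \leq t$ satisfy $|\beta_i| \leq 2t/\dgap_3$ and hence meet \emph{both} orientation bounds, cf.~inequality~\eqref{eqn:estimation-error-bound-inter-1}. This three-way case analysis yields the two-sided claim~\eqref{eqn:claim-two-side-concentration}, namely $\min\left\{ \max_i |\phat_i - p_i|, \max_i |\phat_i - (2/\kdim - p_i)| \right\} \leq 18t/\dgap_3^3$, and your resolution of the residual global sign via Step~(2) --- the average gap $\dgapbar$ exceeds the error bound precisely because $t < \dgapbar\dgap_3^3/18$ --- is then exactly the paper's concluding argument. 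So the fix is not additional analysis but reading the sign reference correctly; with that correction your outline closes.
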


\noindent The rest of the proof upper bounds the error of Step (3). The proof follows very similarly steps
as in the proof of Theorem~\ref{theorem:stage-2}. We first
define two events that will be shown holding with high probability.
\begin{align*}
  &\event_1:~\sum_{i=1}^\mdim \sum_{c=1}^\kdim \indicator(z_{ij} = e_c)\log(\mu_{iy_jc}/\mu_{ilc}) \geq \mdim \Dbar/2 \qquad \mbox{for all $j\in[\numobs]$ and $l\in [\kdim]\backslash\{y_j\}$}.\\
  &\event_2:~\Big| \sum_{j=1}^\numobs \indicator(z_{ij} = e_{y_j}) - \numobs p_i \Big| \leq \numobs t_i \qquad \mbox{for all $i\in [\mdim]$}.
\end{align*}

\begin{lemma}\label{lemma:onecoin-update-stability}
  Assume that $\event_1\cap \event_2$ holds. Also assume that $\gap \leq p_i \leq 1-\gap$ for all $i\in [\mdim]$. If $\phat$ is initialized such that
  \begin{align}\label{eqn:onecoin-initialization-accuracy-condition}
    |\phat_i - p_i| \leq \minac \defeq \min\left\{ \frac{\dgapbar}{2}, \frac{\gap}{2},\frac{\gap\Dbar}{16}\right\}\qquad \mbox{for all $i\in [\mdim]$}
  \end{align}
  and scalars $t_i$ satisfy
  \begin{align}\label{eqn:onecoin-concentration-accuracy-condition}
     \exp\Big( -\mdim\Dbar/4 + \log(\mdim) \Big) \leq t_i \leq \min\left\{ \frac{\gap}{4},\frac{\gap \Dbar}{32}\right\}
  \end{align}
  Then the estimates $\phat$ and $\qhat$ obtained by alternating updates~\eqref{eqn:onecoin-q_update} and~\eqref{eqn:onecoin-p_update}
  satisfy:
  \begin{align*}
    |\phat_i - p_i| &\leq 2 t_i. \qquad &&\mbox{for all $i\in[\mdim]$}.\\
    \max_{l\in[\kdim]}\{ |\qhat_{jl} - \indicator(y_j = l)|\}  &\leq \exp\left( -\mdim\Dbar/4 + \log(\mdim) \right) \qquad &&\mbox{for all $j\in [\numobs]$}.
  \end{align*}
\end{lemma}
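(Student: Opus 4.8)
The plan is to follow the two-step strategy that established Lemma~\ref{lemma:update-stability} for the general model: first show that the label-posterior update~\eqref{eqn:onecoin-q_update} drives $\qhat$ toward the indicator $\indicator(y_j=l)$ whenever the current $\phat$ is accurate, then show that the parameter update~\eqref{eqn:onecoin-p_update} drives $\phat$ toward $p$ whenever the current $\qhat$ is accurate, and finally close the loop by an induction on the iteration index. Since~\eqref{eqn:onecoin-q_update} is exactly the specialization of~\eqref{eqn:q_update} to the parametrization $\muhat_{ill}=\phat_i$ and $\muhat_{ilc}=(1-\phat_i)/(\kdim-1)$, I expect to reuse the argument of Lemma~\ref{lemma:q-concentration} almost verbatim, with the confusion-matrix accuracy $\delta_1$ playing the role of the scalar accuracy $|\phat_i-p_i|$.

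For the $\qhat$-step, suppose $|\phat_i-p_i|\le\delta_1$ for all $i$. The diagonal error is $|\muhat_{ill}-\mu_{ill}|=|\phat_i-p_i|$ and the off-diagonal error is $|\muhat_{ilc}-\mu_{ilc}|=|\phat_i-p_i|/(\kdim-1)$, so $\norms{\muhat_i-\mu_i}_\infty\le\delta_1$. The only place the hypothesis $\mu_{ilc}\ge\gap$ enters the proof of Lemma~\ref{lemma:q-concentration} is through the bound $|\log(\muhat_{ilc}/\mu_{ilc})|\le\log(\gap/(\gap-\delta_1))$, and this survives in the one-coin setting even though off-diagonal entries are as small as $\gap/(\kdim-1)$: the $(\kdim-1)$ factors cancel in every likelihood ratio, so a diagonal ratio equals $\phat_i/p_i$ and an off-diagonal ratio equals $(1-\phat_i)/(1-p_i)$, and the assumption $\gap\le p_i\le1-\gap$ gives $\min\{p_i,1-p_i\}\ge\gap$, which bounds each log-ratio by $\log(\gap/(\gap-\delta_1))$ exactly as before. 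Rerunning the argument then yields $\max_l|\qhat_{jl}-\indicator(y_j=l)|\le\exp(-\mdim(\Dbar/2-2\delta_1/(\gap-\delta_1))+\log\mdim)$.

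For the $\phat$-step I would argue directly rather than invoke a ratio bound, since with $\pi_i\equiv1$ the update~\eqref{eqn:onecoin-p_update} is a plain average with no denominator (this is also why the resulting rate carries no $w_l$ factor). If $\max_l|\qhat_{jl}-\indicator(y_j=l)|\le\delta_2$ for all $j$, then for each $j$ the inner sum $\sum_l\qhat_{jl}\indicator(z_{ij}=e_l)$ equals $\qhat_{jc}$ for the unique label $c$ that worker $i$ assigns, which differs from $\indicator(z_{ij}=e_{y_j})$ by at most $\delta_2$; averaging over $j$ gives $|\phat_i-\frac1\numobs\sum_j\indicator(z_{ij}=e_{y_j})|\le\delta_2$, and event $\event_2$ then yields $|\phat_i-p_i|\le t_i+\delta_2$.

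Finally, I would run the induction exactly as for Lemma~\ref{lemma:update-stability}, taking $\delta_1\defeq\min\{\gap/2,\gap\Dbar/16\}$ and $\delta_2\defeq\tmin$ with $\tmin\defeq\min_i t_i$. The initial condition~\eqref{eqn:onecoin-initialization-accuracy-condition} gives $|\phat_i-p_i|\le\minac\le\delta_1$ (the extra $\dgapbar/2$ term in $\minac$ is needed only so Step~(2) fixes the correct global sign and plays no role inside the loop). A $\qhat$-step then gives $\max_l|\qhat_{jl}-\indicator(y_j=l)|\le\exp(-\mdim\Dbar/4+\log\mdim)\le\tmin=\delta_2$, using $\delta_1\le\gap/2$ and $\delta_1\le\gap\Dbar/16$ to obtain $\Dbar/2-2\delta_1/(\gap-\delta_1)\ge\Dbar/4$ together with the lower bound of~\eqref{eqn:onecoin-concentration-accuracy-condition}; a $\phat$-step gives $|\phat_i-p_i|\le t_i+\delta_2\le2t_i\le\delta_1$, using $\tmin\le t_i$ and the upper bound $t_i\le\min\{\gap/4,\gap\Dbar/32\}$ of~\eqref{eqn:onecoin-concentration-accuracy-condition}. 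Both preconditions therefore persist, the two displayed bounds hold at every iterate, and the terminal estimates are precisely the claimed $|\phat_i-p_i|\le2t_i$ and $\max_l|\qhat_{jl}-\indicator(y_j=l)|\le\exp(-\mdim\Dbar/4+\log\mdim)$. I expect the only real subtlety to be the cancellation argument that keeps the $\qhat$-step analysis valid despite the tiny off-diagonal entries; once the effective lower bound $\min\{p_i,1-p_i\}\ge\gap$ is in hand, the induction closes with exactly the constants recorded in~\eqref{eqn:onecoin-concentration-accuracy-condition}, and no new analytic difficulty beyond Lemma~\ref{lemma:update-stability} arises.
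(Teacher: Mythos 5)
Your proposal is correct and follows essentially the same route as the paper: the paper likewise proves a one-coin analogue of Lemma~\ref{lemma:q-concentration} by noting that the $(\kdim-1)$ factors cancel so that $|\log(\phat_i/p_i)|$ and $|\log((1-\phat_i)/(1-p_i))|$ are bounded by $\log(\gap/(\gap-\delta_1))$, proves the $\phat$-step by the same direct additive decomposition $\phat_i - p_i = \frac{1}{\numobs}\bigl(\sum_j \indicator(z_{ij}=e_{y_j}) - \numobs p_i\bigr) + \frac{1}{\numobs}\sum_j\sum_l(\qhat_{jl}-\indicator(y_j=l))\indicator(z_{ij}=e_l)$ combined with $\event_2$, and closes with the identical induction using $\delta_1 = \min\{\gap/2,\gap\Dbar/16\}$ and $\delta_2 = \min_i t_i$. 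Your observations about the cancellation subtlety and the role of the $\dgapbar/2$ term in $\minac$ (needed only for the sign-fixing step, outside this lemma) are both accurate.
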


As in the proof of Theorem~\ref{theorem:stage-2}, we can lower bound the probability of the event $\event_1\cap\event_2$
by applying Bernstein's inequality and the Chernoff bound. In particular, the following bound holds:
\begin{align}\label{eqn:onecoin-good-event-probability-bound}
  \mprob[\event_1\cap\event_2] \geq 1 - \kdim \numobs \exp\left(-\frac{\mdim \Dbar}{33 \log(1/\gap)} \right) - \sum_{i=1}^\mdim 2\exp\left( - \frac{\numobs t_i^2 }{3 p_i}\right).
\end{align}
The proof of inequality~\eqref{eqn:onecoin-good-event-probability-bound} precisely follows the proof of Theorem~\ref{theorem:stage-2}.

\paragraph{Proof of upper bounds (a) and (b) in Theorem~\ref{theorem:one-coin-model}} To apply Lemma~\ref{lemma:onecoin-update-stability}, we need to ensure that condition~\eqref{eqn:onecoin-initialization-accuracy-condition}
 holds. If we assign $t\defeq \minac \dgap_3^3/18$ in Lemma~\ref{lemma:onecoin-init}, then condition~\eqref{eqn:onecoin-initialization-accuracy-condition}
holds with probability at least $1 - \mdim^2 \exp(- \numobs \minac^2 \dgap_3^6 /648)$. To ensure that this event holds with probability at least
$1 - \delta/3$, we need to have
\begin{align}\label{eqn:onecoin-initialization-success-condition}
  \numobs \geq \frac{648 \log(3\mdim^2/\delta)}{\minac^2 \dgap_3^6}.
\end{align}
By Lemma~\ref{lemma:onecoin-update-stability}, for $\yhat_j = y_j$ being true, it suffices to have
\begin{align}\label{eqn:onecoin-classification-mn-requirement-1}
    \mdim > 4\log(2m)/{\Dbar}
\end{align}
To ensure that $\event_1\cap\event_2$ holds with probability at least $1-2\delta/3$, expression~\eqref{eqn:onecoin-good-event-probability-bound}
needs to be lower bounded by $1 - 2\delta/3$. It is achieved by
\begin{align}\label{eqn:onecoin-classification-mn-requirement-2}
  \mdim \geq \frac{ 33 \log(1/\gap)\log(3\kdim\numobs/\delta) }{\Dbar} \quad \mbox{and} \quad \numobs \geq \frac{3 p_i \log(6\mdim/\delta)}{t_i^2}
\end{align}
If we choose
\begin{align}\label{eqn:onecoin-regression-assign-til}
    t_i \defeq \sqrt{ \frac{3 \log(6\mdim/\delta)}{\numobs} }.
\end{align}
then the second part of condition~\eqref{eqn:onecoin-classification-mn-requirement-2} is guaranteed.
To ensure that $\tfac$ satisfies condition~\eqref{eqn:onecoin-concentration-accuracy-condition}. We need to have
\begin{align*}
  \sqrt{ \frac{3 \log(6\mdim/\delta)}{\numobs} } &\geq \exp\Big( -\mdim\Dbar/4 + \log(\mdim) \Big) \quad\mbox{and}\\
  \sqrt{ \frac{3 \log(6\mdim/\delta)}{\numobs} } &\leq \minac/2.
\end{align*}
The above two conditions  requires that $\mdim$ and $\numobs$ satisfy
\begin{align}
  \mdim &\geq \frac{4\log(\mdim\sqrt{\numobs/(3\log(6\mdim/\delta))})}{\Dbar} \label{eqn:onecoin-regression-mn-requirement-1}\\
  \numobs &\geq \frac{12\log(6\mdim/\delta)}{\minac^2} \label{eqn:onecoin-regression-mn-requirement-2}
\end{align}
The five conditions~\eqref{eqn:onecoin-initialization-success-condition},~\eqref{eqn:onecoin-classification-mn-requirement-1},~\eqref{eqn:onecoin-classification-mn-requirement-2},~\eqref{eqn:onecoin-regression-mn-requirement-1}
and~\eqref{eqn:onecoin-regression-mn-requirement-2} are simultaneously satisfied if we have
\begin{align*}
  \mdim &\geq \frac{\max\{ 33 \log(1/\gap)\log(3\kdim\numobs/\delta), 4\log(2\mdim \numobs)  \}}{\Dbar}\quad\mbox{and}\\
  \numobs &\geq \frac{648\log(3\mdim^2/\delta)}{\minac^2 \dgap_3^6}.
\end{align*}
Under this setup, $\yhat_j = y_j$ holds for all $j\in [\numobs]$ with probability at least $1-\delta$. Combining
equation~\eqref{eqn:onecoin-regression-assign-til}
with Lemma~\ref{lemma:onecoin-update-stability}, the bound
\begin{align*}
  |\phat_i - p_i| \leq 2\sqrt{\frac{3\log(6\mdim/\delta)}{\numobs}}
\end{align*}
holds with probability at least $1-\delta$.

\subsection{Proof of Lemma~\ref{lemma:onecoin-init}}

We claim that after initializing $\phat$ via formula~\eqref{eqn:onecoin-p-init}, it satisfies
\begin{align}\label{eqn:claim-two-side-concentration}
 \min\left\{ \max_{i\in[m]}\{|\phat_i - p_i|\}, \max_{i\in[m]}\{|\phat_i - (2/\kdim-p_i)|\} \right\} \leq \frac{18 t}{\dgap_3^3}
\end{align}
with probability at least $1-m^2\exp(-nt^2/2)$. Assuming inequality~\eqref{eqn:claim-two-side-concentration}, it is straightforward to see that this bound
is preserved by the algorithm's step (2). In addition, step (2) ensures that
$\frac{1}{\mdim} \sum_{i=1}^\mdim \phat_i \geq \frac{1}{\kdim}$, which implies
\begin{align}\label{eqn:onecoin-side-is-correct}
  \max_{i\in[m]}\{|\phat_i - (2/\kdim-p_i)|\} &\geq \left| \frac{1}{\mdim} \sum_{i=1}^\mdim \phat_i - \left( \frac{2}{\kdim} -  \frac{1}{\mdim} \sum_{i=1}^\mdim p_i \right)\right| \nonumber\\
  & \geq \frac{1}{\kdim} - \left(\frac{1}{\kdim} - \dgapbar \right) = \dgapbar > \frac{18 t}{\dgap_3^3}.
\end{align}
Combining inequalities~\eqref{eqn:claim-two-side-concentration} and~\eqref{eqn:onecoin-side-is-correct} establishes
the lemma.\\

We turn to prove claim~\eqref{eqn:claim-two-side-concentration}.
For any worker $a$ and worker $b$, it is obvious that $\indicator(z_{aj} = z_{bj})$ are independent random variables for
$j=1,2,\dots,\numobs$. Since
\begin{align*}
  \E[\indicator(z_{aj} = z_{bj})] = p_a p_b + (\kdim - 1)\; \frac{1-p_a}{\kdim - 1}\;\frac{1-p_b}{\kdim - 1} = \frac{k}{\kdim - 1}\;(p_a - 1/\kdim)(p_b - 1/\kdim) + \frac{1}{\kdim}
\end{align*}
and $\frac{\kdim - 1}{\kdim}(\indicator(z_{aj} = z_{bj}) - \frac{1}{\kdim})$ belongs to the interval $[-1,1]$, applying Hoeffding's inequality implies that
\begin{align*}
  \mprob( | N_{ab} - (p_a - 1/\kdim)(p_b - 1/\kdim)| \leq t ) \geq 1-\exp(-\numobs t^2/2) \quad \mbox{for any $t> 0$}.
\end{align*}
By applying the union bound, the inequality
\begin{align}\label{eqn:covariance-entry-error-bound}
    | N_{a b} - (p_a - 1/\kdim)(p_b - 1/\kdim)| \leq t
\end{align}
holds for all $(a,b)\in [\mdim]^2$ with probability at least $1 - \mdim^2\exp(-\numobs t^2/2)$. For the rest of the proof, we assume that this high-probability event holds.\\

Given an arbitrary index $i$, we take indices $(a_i,b_i)$ such that
\begin{align}\label{eqn:take-greatest-pair}
  (a_i,b_i) = \arg\max_{(a,b)} \{ |N_{ab}|: a\neq b\neq i\}.
\end{align}
We consider another two indices $(\astar, \bstar)$ such that $|p_{\astar}-1/\kdim|$ and $|p_{\bstar}-1/\kdim|$ are the two greatest
elements in $\{|p_a-1/\kdim|:a\in[m]\backslash \{i\}\}$. Let $\beta_i\defeq p_i - 1/\kdim$
be a shorthand notation, then inequality~\eqref{eqn:covariance-entry-error-bound} and equation~\eqref{eqn:take-greatest-pair} yields that
\begin{align}\label{eqn:ajak-bound}
  |\beta_{a_i} \beta_{b_i}| \geq |N_{a_i b_i}| - t \geq |N_{\astar \bstar}| - t \geq |\beta_{\astar}\beta_{\bstar}| -2t \geq |\beta_{\astar}\beta_{\bstar}|/2
\end{align}
where the last step follows since $2t\leq \dgap_3^2/2 \leq |\beta_{\astar}\beta_{\bstar}|/2$.
Note that $|\beta_{b_i}| \leq |\beta_{\astar}|$ (since $|\beta_{\astar}|$ is the largest entry by its definition), inequality~\eqref{eqn:ajak-bound} implies that
$|\beta_{a_i}| \geq \frac{|\beta_{\bstar}\beta_{\astar}|}{2|\beta_{b_i}|} \geq \frac{|\beta_{\bstar}|}{2} \geq \dgap_3/2$. By  the same argument, we obtain $|\beta_{b_i}| \geq |\beta_{\bstar}|/2 \geq \dgap_3/2$.
To upper bound the estimation error, we write $|N_{i a_i}|$, $|N_{i b_i}|$, $|N_{a_i b_i}|$ in the form of
\begin{align*}
  |N_{i a_i}| &= |\beta_{i} \beta_{a_i}| + \delta_1\\
  |N_{i b_i}| &= |\beta_i \beta_{b_i}| + \delta_2\\
  |N_{a_i b_i}| &= |\beta_{a_i} \beta_{b_i}| + \delta_3
\end{align*}
where $|\delta_1|,|\delta_2|,|\delta_3| \leq t$. Firstly, notice that $N_{i a_i},N_{i b_i}\in [-1,1]$, thus,
\begin{align}\label{eqn:gap-bound-1}
  \left| \sqrt{\frac{|N_{i a_i}N_{i b_i}|}{|N_{a_i b_i}|}} -\sqrt{\frac{|N_{i a_i}N_{i b_i}|}{|\beta_{a_i} \beta_{b_i}|}} \right| \leq \left| \frac{1}{\sqrt{|N_{a_i b_i}|}} - \frac{1}{\sqrt{|\beta_{a_i} \beta_{b_i}|}} \right| \leq \frac{t}{2(|\beta_{a_i} \beta_{b_i}|-t)^{3/2}} \leq \frac{t}{(\dgap_3^2/4)^{3/2}}
\end{align}
where the last step relies on the inequality $|\beta_{a_i} \beta_{b_i}| - t \geq \dgap_3^2/4$ obtained by inequality~\eqref{eqn:ajak-bound}.
Secondly, we upper bound the difference between $\sqrt{|N_{i a_i}N_{i b_i}|}$ and $\sqrt{|\beta_i^2\beta_{a_i} \beta_{b_i}|}$.
If $|\beta_i| \leq t$, using the fact that $|\beta_{a_i}|,|\beta_{b_i}| \leq 1$, we have
\begin{align*}
  \left|\sqrt{|N_{i a_i}N_{i b_i}|} - \sqrt{|\beta_i^2 \beta_{a_i} \beta_{b_i}|}\right| \leq \sqrt{|N_{i a_i}N_{i b_i}|} + \sqrt{|\beta_i^2 \beta_{a_i} \beta_{b_i}|} \leq \sqrt{4t^2}+ \sqrt{t^2} \leq 3t
\end{align*}
If $|\beta_i| > t$, using the fact that $|\beta_{a_i}|,|\beta_{b_i}|\in [\dgap_3/2,1]$ and $|\beta_{a_i} \beta_{b_i}| \geq \dgap_3^2/2$, we have
\begin{align*}
  \left|\sqrt{|N_{i a_i}N_{i b_i}|} - \sqrt{|\beta_i^2 \beta_{a_i} \beta_{b_i}|}\right| &\leq \frac{|\beta_i \beta_{b_i}\delta_1| + |\beta_i \beta_{a_i}\delta_2| + |\delta_1\delta_2|}{\sqrt{|\beta_i^2 \beta_{a_i} \beta_{b_i}|}}\\
  &\leq \frac{|\delta_1|}{\sqrt{|\beta_{a_i}/\beta_{b_i}|}} + \frac{|\delta_2|}{\sqrt{|\beta_{b_i}/\beta_{a_i}|}} + \frac{|\delta_1\delta_2|}{t\sqrt{|\beta_{a_i} \beta_{b_i}|}}\\
  &\leq 3\sqrt{2}t/\dgap_3.
\end{align*}
Combining the above two upper bounds implies
\begin{align}\label{eqn:gap-bound-2}
  \left|\sqrt{\frac{|N_{i a_i}N_{i b_i}|}{|\beta_{a_i} \beta_{b_i}|}} -\sqrt{\frac{|\beta_i^2 \beta_{a_i} \beta_{b_i}|}{|\beta_{a_i} \beta_{b_i}|}}  \right| = \frac{\left|\sqrt{|N_{i a_i}N_{i b_i}|} -\sqrt{|\beta_i^2 \beta_{a_i} \beta_{b_i}|}  \right|}{\sqrt{|\beta_{a_i} \beta_{b_i}|}} \leq \frac{6t}{\dgap_3^2}.
\end{align}
Combining inequalities~\eqref{eqn:gap-bound-1} and~\eqref{eqn:gap-bound-2}, we obtain
\begin{align}\label{eqn:gap-bound-3}
  \left| \sqrt{\frac{|N_{i a_i}N_{i b_i}|}{|N_{a_i b_i}|}} - |\beta_i| \right| \leq \frac{14t}{\dgap_3^3}.
\end{align}
\\

Finally, we turn to analyzing the sign of $N_{i a_1}$. According to inequality~\eqref{eqn:covariance-entry-error-bound}, we have
\begin{align*}
  N_{i a_1} = \beta_i \beta_{a_1} + \delta_4
\end{align*}
where $|\delta_4| \leq t$. Following the same argument for $\beta_{a_i}$ and $\beta_{b_i}$, it was shown that $|\beta_{a_1}| \geq \dgap_3/2$.
We combine inequality~\eqref{eqn:gap-bound-3} with a case study of $\sign(N_{i a_1})$ to complete the proof. Let
\begin{align*}
  \phat_i \defeq \frac{1}{\kdim} + \sign(N_{i a_1})\sqrt{\frac{|N_{i a_i}N_{i b_i}|}{|N_{a_i b_i}|}}.
\end{align*}
If $\sign(N_{i a_1}) \neq \sign(\beta_i \beta_{a_1})$, then
$|\beta_i \beta_{a_1}|\leq |\delta_4| \leq t$. Thus, $|\beta_i| \leq t/|\beta_{a_i}| \leq 2t/\dgap_3$, and consequently,
\begin{align}\label{eqn:estimation-error-bound-inter-1}
  \max\{ | \phat_i - p_i |, | \phat_i - (2/\kdim - p_i) |\} &\leq \left| \sqrt{\frac{|N_{i a_i}N_{i b_i}|}{|N_{a_i b_i}|}} \right| + |p_i - 1/\kdim| \nonumber\\
  &\leq \left| \sqrt{\frac{|N_{i a_i}N_{i b_i}|}{|N_{a_i b_i}|}} - |p_i - 1/\kdim|\right| + 2|p_i -1/\kdim|\leq \frac{18 t}{\dgap_3^3}
\end{align}
Otherwise, we have $\sign(N_{i a_1}) = \sign(\beta_i \beta_{a_1})$ and consequently
$\sign(\beta_i) = \sign(N_{i a_1})\sign(\beta_{a_1})$. If $\sign(\beta_{a_1}) = 1$, then $\sign(\beta_i) = \sign(N_{i a_1})$, which yields that
\begin{align}\label{eqn:estimation-error-bound-inter-2}
  | \phat_i - p_i | = \left| \sign(N_{i a_1})\sqrt{\frac{|N_{i a_i}N_{i b_i}|}{|N_{a_i b_i}|}} - \sign(\beta_i)|\beta_i| \right| \leq \frac{14t}{\dgap_3^3}.
\end{align}
If $\sign(\beta_{a_1}) = -1$, then $\sign(\beta_i) = -\sign(N_{i a_1})$, which yields that
\begin{align}\label{eqn:estimation-error-bound-inter-3}
  | \phat_i - (2/\kdim-p_i) | = \left| \sign(N_{i a_1})\sqrt{\frac{|N_{i a_i}N_{i b_i}|}{|N_{a_i b_i}|}} + \sign(\beta_i)|\beta_i| \right| \leq \frac{14t}{\dgap_3^3}.
\end{align}
Combining inequalities~\eqref{eqn:estimation-error-bound-inter-1},~\eqref{eqn:estimation-error-bound-inter-2} and~\eqref{eqn:estimation-error-bound-inter-3},
we find that
\begin{align*}
  \min\left\{ \max_{i\in[m]}\{|\phat_i - p_i|\}, \max_{i\in[m]}\{|\phat_i - (2/\kdim-p_i)|\} \right\}\leq \frac{18t}{\dgap_3^3}.
\end{align*}
which establishes claim~\eqref{eqn:claim-two-side-concentration}.

\subsection{Proof of Lemma~\ref{lemma:onecoin-update-stability}}

The proof follows the argument in the proof of Lemma~\ref{lemma:update-stability}. We present two
lemmas upper bounding the error of update~\eqref{eqn:onecoin-q_update} and update~\eqref{eqn:onecoin-p_update},
assuming proper initialization.

\begin{lemma}\label{lemma:onecoin-q-concentration}
  Assume that event $\event_1$ holds. If $p$ and its estimate $\phat$ satisfies
  \begin{align}\label{eqn:onecoin-assu-mu-concentration}
    \gap \leq p_i \leq 1-\gap \quad\mbox{and}\quad |\phat_i - p_i | \leq \delta_1 \qquad \mbox{for all $i\in[\mdim]$},
  \end{align}
  and $\qhat$ is updated by formula~\eqref{eqn:onecoin-q_update}, then $\qhat$ is bounded as:
  \begin{align}\label{eqn:onecoin-conclude-q-concentration}
    \max_{l\in[\kdim]}\{ |\qhat_{jl} - \indicator(y_j = l)|\} & \leq \exp\left( -\mdim \left(\frac{\Dbar}{2} - \frac{2 \delta_1}{\gap - \delta_1}\right) + \log(\mdim) \right) \qquad \mbox{for all $j\in [\numobs]$}.
  \end{align}
\end{lemma}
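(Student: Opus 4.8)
The plan is to reduce the statement to the already-established Lemma~\ref{lemma:q-concentration} for the general model. The first observation is that update~\eqref{eqn:onecoin-q_update} is nothing but the general update~\eqref{eqn:q_update} evaluated at the confusion matrix induced by the one-coin parameterization. Indeed, define $\muhat_{ilc} \defeq \phat_i$ when $l=c$ and $\muhat_{ilc}\defeq \frac{1-\phat_i}{\kdim-1}$ when $l\neq c$, and let $\mu_{ilc}$ denote the true one-coin entries from~\eqref{eqn:define-onecoin-confusion-matrix}. Since $\pi_i\equiv 1$, every worker labels every item, so for each $i$ exactly one indicator $\indicator(z_{ij}=e_c)$ is nonzero and $\sum_{c\neq l}\indicator(z_{ij}=e_c)=\indicator(z_{ij}\neq e_l)$. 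Substituting this into $\sum_c \indicator(z_{ij}=e_c)\log(\muhat_{ilc})$ recovers exactly the exponent in~\eqref{eqn:onecoin-q_update}; hence the $\qhat_{jl}$ produced by~\eqref{eqn:onecoin-q_update} equals that produced by~\eqref{eqn:q_update} applied to this $\muhat$.

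With this identification, I would follow the proof of Lemma~\ref{lemma:q-concentration} line by line. For any $l\neq y_j$ set $A_l\defeq \sum_{i=1}^\mdim\sum_{c=1}^\kdim \indicator(z_{ij}=e_c)\log(\muhat_{iy_jc}/\muhat_{ilc})$, so that~\eqref{eqn:onecoin-q_update} gives $\qhat_{jl}\leq e^{-A_l}$. Splitting $A_l$ into the exact log-likelihood ratio $\sum_i\sum_c\indicator(z_{ij}=e_c)\log(\mu_{iy_jc}/\mu_{ilc})$ plus an estimation-error remainder, event $\event_1$ lower bounds the first term by $\mdim\Dbar/2$ verbatim, since that term involves only the true entries $\mu$ and the one-coin entries yield precisely the divergences appearing in the definition~\eqref{eqn:define-min-kl-distance} of $\Dbar$.

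The only genuinely new ingredient is controlling the per-worker remainder by $\log(\gap/(\gap-\delta_1))$. In the general proof this uses $\mu_{ilc}\geq\gap$, which fails here because the off-diagonal entries are only $\frac{1-p_i}{\kdim-1}$. The key point is that the remainder depends on $\muhat_{ilc}/\mu_{ilc}$ only through this ratio, and for an off-diagonal coordinate the ratio equals $\frac{1-\phat_i}{1-p_i}$, as the factors $\kdim-1$ cancel. Since $|\phat_i-p_i|\leq\delta_1$ gives $|(1-\phat_i)-(1-p_i)|\leq\delta_1$, and the hypothesis $p_i\leq 1-\gap$ gives $1-p_i\geq\gap$, both the diagonal ratio $\phat_i/p_i$ (with $p_i\geq\gap$) and the off-diagonal ratio $\frac{1-\phat_i}{1-p_i}$ (with $1-p_i\geq\gap$) have logarithm bounded in absolute value by $\log\frac{x}{x-\delta_1}$ for some $x\geq\gap$. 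As $x\mapsto\log\frac{x}{x-\delta_1}$ is decreasing, this is at most $\log\frac{\gap}{\gap-\delta_1}\leq\frac{\delta_1}{\gap-\delta_1}$. Each worker contributes at most two such terms, so the remainder is at least $-2\mdim\frac{\delta_1}{\gap-\delta_1}$, giving $A_l\geq \mdim\!\left(\frac{\Dbar}{2}-\frac{2\delta_1}{\gap-\delta_1}\right)$.

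Finishing as in Lemma~\ref{lemma:q-concentration}, for every $l\neq y_j$ we get $\qhat_{jl}\leq \exp(-\mdim(\Dbar/2 - 2\delta_1/(\gap-\delta_1)))$, whence $\qhat_{jy_j}\geq 1-\sum_{l\neq y_j}\qhat_{jl}\geq 1-\exp(-\mdim(\Dbar/2-2\delta_1/(\gap-\delta_1))+\log\mdim)$; together these yield~\eqref{eqn:onecoin-conclude-q-concentration}. I expect the main (indeed only) obstacle to be the cancellation argument of the previous paragraph that restores the clean $\gap$-dependence for the off-diagonal ratios; the remainder of the argument is a transcription of the general-model proof.
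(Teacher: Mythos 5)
Your proposal is correct and follows essentially the same route as the paper: the paper's own proof is a one-line reduction to Lemma~\ref{lemma:q-concentration}, observing precisely your key point that both $|\log(\phat_i/p_i)|$ and $|\log((1-\phat_i)/(1-p_i))|$ are bounded by $\log(\gap/(\gap-\delta_1))$, with the factors $\kdim-1$ cancelling in the off-diagonal ratios and $p_i \leq 1-\gap$ supplying $1-p_i \geq \gap$. Your write-up merely makes explicit the identification of update~\eqref{eqn:onecoin-q_update} with~\eqref{eqn:q_update} under $\pi_i \equiv 1$, which the paper leaves implicit.
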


\begin{proof}
Following the proof of Lemma~\ref{lemma:q-concentration}, the lemma is established since both $|\log(\phat_i/p_i)|$
and $|\log((1-\phat_i)/(1-p_i))|$ are bounded by $\log(\gap/(\gap - \delta_1))$.
\end{proof}

\begin{lemma}\label{lemma:onecoin-p-concentration}
  Assume that event $\event_2$ holds. If $\qhat$ satisfies
  \begin{align}\label{eqn:onecoin-assu-q-concentration}
    \max_{l\in [\kdim]}\{|\qhat_{jl} - \indicator(y_j=l)|\} \leq \delta_2 \qquad \mbox{for all $j\in [\numobs]$},
  \end{align}
  and $\phat$ is updated by formula~\eqref{eqn:onecoin-p_update}, then $\phat$ is bounded as:
  \begin{align}\label{eqn:onecoin-conclude-mu-concentration}
    |\phat_i - p_i| \leq t_i + \delta_2. \qquad \mbox{for all $i\in[\mdim]$}.
  \end{align}
\end{lemma}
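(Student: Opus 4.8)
The plan is to control $|\phat_i - p_i|$ by a two-term triangle-inequality decomposition, comparing the update~\eqref{eqn:onecoin-p_update} against the ``oracle'' estimate obtained by replacing the soft responsibilities $\qhat_{jl}$ with the hard labels $\indicator(y_j=l)$. Concretely, I would introduce the intermediate quantity
\begin{align*}
  \bar{p}_i \defeq \frac{1}{\numobs}\sum_{j=1}^\numobs \indicator(z_{ij} = e_{y_j}),
\end{align*}
and split $|\phat_i - p_i| \leq |\phat_i - \bar{p}_i| + |\bar{p}_i - p_i|$. The second term is handled immediately by the hypothesis: since $\numobs\bar{p}_i = \sum_{j=1}^\numobs \indicator(z_{ij}=e_{y_j})$, event $\event_2$ is exactly the statement that $|\bar{p}_i - p_i| \leq t_i$.

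For the first term I would exploit the one-coin simplification $\pi_i \equiv 1$, under which worker $i$ labels every item, so that for each fixed pair $(i,j)$ precisely one of the indicators $\{\indicator(z_{ij}=e_l)\}_{l=1}^\kdim$ is nonzero. Writing $l_{ij}$ for the label worker $i$ assigns to item $j$, the inner sum over $l$ collapses to a single active term:
\begin{align*}
  \phat_i - \bar{p}_i = \frac{1}{\numobs}\sum_{j=1}^\numobs \sum_{l=1}^\kdim \big(\qhat_{jl} - \indicator(y_j=l)\big)\indicator(z_{ij}=e_l) = \frac{1}{\numobs}\sum_{j=1}^\numobs \big(\qhat_{j l_{ij}} - \indicator(y_j = l_{ij})\big).
\end{align*}
Each summand is bounded in absolute value by $\max_{l}|\qhat_{jl} - \indicator(y_j=l)| \leq \delta_2$ via hypothesis~\eqref{eqn:onecoin-assu-q-concentration}, so averaging over $j$ gives $|\phat_i - \bar{p}_i| \leq \delta_2$.

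Combining the two pieces through the triangle inequality yields $|\phat_i - p_i| \leq t_i + \delta_2$, which is the claim. There is no genuine obstacle here: the argument is a direct decomposition in which one half is literally the definition of $\event_2$ and the other follows from the per-item closeness of $\qhat$ to the indicator. The only point warranting (minor) care is observing that the summation over the $\kdim$ label classes reduces to one active term because each worker emits exactly one label per item; this is what lets the bound come out as $\delta_2$ rather than an inflated $\kdim\delta_2$, and it mirrors the structure of Lemma~\ref{lemma:mu-concentration} in the general-model analysis while being considerably simpler since no normalization denominator appears.
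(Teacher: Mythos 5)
Your proof is correct and is essentially the paper's own argument: the paper decomposes $\phat_i - p_i$ into exactly the same two terms, $\frac{1}{\numobs}\big(\sum_{j=1}^\numobs \indicator(z_{ij}=e_{y_j}) - \numobs p_i\big)$ controlled by $\event_2$ and $\frac{1}{\numobs}\sum_{j=1}^\numobs\sum_{l=1}^\kdim(\qhat_{jl}-\indicator(y_j=l))\indicator(z_{ij}=e_l)$ controlled by the hypothesis on $\qhat$. Your only addition is to spell out why the inner sum over $l$ collapses to a single active term (each worker emits one label per item), which the paper leaves implicit but which is indeed what yields $\delta_2$ rather than $\kdim\delta_2$.
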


\begin{proof}
  By formula~\eqref{eqn:onecoin-p_update}, we have
  \begin{align*}
    \phat_i - p_i = \frac{1}{\numobs} \left( \sum_{j=1}^\numobs \indicator(z_{ij} = e_{y_i}) -\numobs p_i\right) + \frac{1}{\numobs} \sum_{j=1}^\numobs \sum_{l=1}^\kdim  ( \qhat_{il} - \indicator(y_j = l))\indicator(z_{ij} = e_{l}).
  \end{align*}
  Combining inequality~\eqref{eqn:onecoin-assu-q-concentration} with the inequality implied by event $\event_2$  completes the proof.
\end{proof}

Following the steps in the proof of Lemma~\ref{lemma:update-stability}, we assign specific values to $\delta_1$ and $\delta_2$. Let
\begin{align*}
  \delta_1 \defeq \min\left\{ \frac{\gap}{2},\frac{\gap\Dbar}{16}\right\} \quad \mbox{and} \quad \delta_2 \defeq \min_{i\in[\mdim]}\{t_i\}.
\end{align*}
By the same inductive argument for proving Lemma~\ref{lemma:update-stability}, we can show that the upper bounds~\eqref{eqn:onecoin-conclude-q-concentration}
and~\eqref{eqn:onecoin-conclude-mu-concentration} always hold after the first iteration. Plugging the assignments of $\delta_1$ and $\delta_2$
into upper bounds~\eqref{eqn:onecoin-conclude-q-concentration}
and~\eqref{eqn:onecoin-conclude-mu-concentration} completes the proof.

%%%%%%%%%%%%%%%%%%%%%%%%%%%%%%%%%%%%%%%%%%%%%%%%%%%%%%%%%%%%%%%%%%%%%%%%%%%%%%%%%%%%%%%%%%%%%%%%%%%%%%%%%%%%%%%%%%%%%%%%%%%%%%%%%%%%%%%%%%

\section{Basic Lemmas}

In this section, we prove some standard lemmas that we use for proving technical results.

\begin{lemma}[Matrix Inversion]\label{lemma:matrix-inversion}
Let $A,E\in \R^{k\times k}$ be given, where $A$ is invertible and $E$ satisfies that $\lops{E}\leq \sigma_k(A)/2$. Then
\begin{align*}
  \lops{(A+E)^{-1} - A^{-1}} \leq \frac{2\lops{E}}{\sigma_k^2(A)}.
\end{align*}
\end{lemma}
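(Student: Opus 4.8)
The plan is to reduce the difference of inverses to a single matrix product via the standard resolvent identity and then bound each factor in operator norm. First I would write
\[
(A+E)^{-1} - A^{-1} = (A+E)^{-1}\bigl[A - (A+E)\bigr]A^{-1} = -(A+E)^{-1}E\,A^{-1},
\]
which is verified by expanding the bracketed middle expression. Taking operator norms and using submultiplicativity immediately gives
\[
\lops{(A+E)^{-1} - A^{-1}} \leq \lops{(A+E)^{-1}}\;\lops{E}\;\lops{A^{-1}}.
\]

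The two inverse-norm factors are each controlled by a smallest singular value. Since $A$ is invertible, $\lops{A^{-1}} = 1/\sigma_k(A)$. For the perturbed inverse I would invoke Weyl's inequality for singular values, $|\sigma_k(A+E) - \sigma_k(A)| \leq \lops{E}$, together with the hypothesis $\lops{E} \leq \sigma_k(A)/2$, to conclude $\sigma_k(A+E) \geq \sigma_k(A) - \lops{E} \geq \sigma_k(A)/2 > 0$. In particular $A+E$ is invertible and $\lops{(A+E)^{-1}} = 1/\sigma_k(A+E) \leq 2/\sigma_k(A)$. Substituting both bounds into the product yields
\[
\lops{(A+E)^{-1} - A^{-1}} \leq \frac{2}{\sigma_k(A)}\cdot \lops{E} \cdot \frac{1}{\sigma_k(A)} = \frac{2\lops{E}}{\sigma_k^2(A)},
\]
which is exactly the claimed bound.

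There is no serious obstacle here; the only point requiring genuine care is justifying that $A+E$ is in fact invertible before one is entitled to write $(A+E)^{-1}$, and this is precisely what the assumption $\lops{E} \leq \sigma_k(A)/2$ secures through the Weyl bound on the smallest singular value. Everything else reduces to submultiplicativity of the operator norm and the elementary identification of the operator norm of a matrix inverse with the reciprocal of its smallest singular value.
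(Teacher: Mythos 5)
Your proof is correct and follows essentially the same route as the paper's: the resolvent identity $(A+E)^{-1}-A^{-1}=-(A+E)^{-1}EA^{-1}$, submultiplicativity, and the lower bound $\sigma_k(A+E)\geq \sigma_k(A)-\lops{E}\geq \sigma_k(A)/2$ (the paper derives this last inequality directly from $\ltwos{(A+E)\theta}\geq\ltwos{A\theta}-\ltwos{E\theta}$ on unit vectors, which is exactly the one-sided Weyl inequality you cite). Your explicit remark that this also certifies invertibility of $A+E$ is a small but welcome refinement over the paper's implicit treatment.
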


\begin{proof}
A little bit of algebra reveals that
\begin{align*}
(A+E)^{-1} - A^{-1} = (A+E)^{-1} E A^{-1}.
\end{align*}
Thus, we have
\begin{align*}
\lops{(A+E)^{-1} - A^{-1}} \leq \frac{\lops{E}}{\sigma_k(A)\sigma_k(A+E)}
\end{align*}
We can lower bound the eigenvalues of $A+E$ by $\sigma_k(A)$ and $\lops{E}$. More concretely, since
\begin{align*}
  \ltwos{(A+E)\theta} \geq  \ltwos{A\theta} - \ltwos{E\theta} \geq \sigma_k(A) - \lops{E}
\end{align*}
holds for any $\ltwos{\theta} = 1$, we have $\sigma_k(A+E) \geq \sigma_k(A) - \lops{E}$. By the assumption that $\lops{E}\leq \sigma_k(A)/2$, we have $\sigma_k(A+E) \geq \sigma_k(A)/2$.
Then the desired bound follows.
\end{proof}

\begin{lemma}[Matrix Multiplication]\label{lemma:matrix-multiplication}
Let $A_i,E_i\in \R^{k\times k}$ be given for $i=1,\dots,n$, where the matrix $A_i$ and the perturbation matrix $E_i$ satisfy $\lops{A_i}\leq K_i,~ \lops{E_i}\leq K_i $.
Then
\begin{align*}
  \lop{\prod_{i=1}^n (A_i+E_i)- \prod_{i=1}^n A_i} \leq 2^{n-1}\left( \sum_{i=1}^n \frac{\lops{E_i}}{K_i}\right) \prod_{i=1}^n K_i
\end{align*}
\end{lemma}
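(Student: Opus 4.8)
The plan is to prove the bound by induction on $n$, peeling off the last factor of the product at each step. Write $P_m \defeq \prod_{i=1}^m (A_i + E_i)$ and $Q_m \defeq \prod_{i=1}^m A_i$. The key algebraic identity is the one-term telescoping
\begin{align*}
  P_n - Q_n = P_{n-1}(A_n + E_n) - Q_{n-1}A_n = P_{n-1}E_n + (P_{n-1} - Q_{n-1})A_n,
\end{align*}
which isolates the newly introduced perturbation $E_n$ in the first summand and defers the accumulated error to the second. This splitting is what lets the single perturbation contributed at stage $n$ be separated cleanly from the error inherited from the first $n-1$ factors.

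Two ingredients then control the right-hand side. First, by submultiplicativity of the operator norm together with $\lops{A_i + E_i} \le \lops{A_i} + \lops{E_i} \le 2K_i$, the fully perturbed product satisfies $\lops{P_{n-1}} \le 2^{n-1}\prod_{i=1}^{n-1}K_i$; this is precisely where the factor $2^{n-1}$ in the claimed bound originates. Second, $\lops{A_n} \le K_n$ trivially. Taking operator norms in the identity above and inserting these two estimates gives
\begin{align*}
  \lops{P_n - Q_n} \le 2^{n-1}\Big(\prod_{i=1}^{n-1}K_i\Big)\lops{E_n} + \lops{P_{n-1} - Q_{n-1}}\,K_n.
\end{align*}

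I would then invoke the inductive hypothesis $\lops{P_{n-1}-Q_{n-1}} \le 2^{n-2}\big(\sum_{i=1}^{n-1}\lops{E_i}/K_i\big)\prod_{i=1}^{n-1}K_i$ and factor $\prod_{i=1}^n K_i$ out of both terms, turning the first summand into $2^{n-1}(\lops{E_n}/K_n)\prod_{i=1}^n K_i$ and the second into $2^{n-2}\big(\sum_{i=1}^{n-1}\lops{E_i}/K_i\big)\prod_{i=1}^n K_i$. The base case $n=1$ reads $\lops{(A_1+E_1)-A_1}=\lops{E_1}=2^{0}(\lops{E_1}/K_1)K_1$, which holds with equality. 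The argument is essentially bookkeeping; the only genuine choice is where to spend the factor of two, and the slack $2^{n-2}\le 2^{n-1}$ is exactly what lets the second term be absorbed into $2^{n-1}\big(\sum_{i=1}^{n-1}\lops{E_i}/K_i\big)\prod_{i=1}^n K_i$, so that the two contributions combine to the target $2^{n-1}\big(\sum_{i=1}^{n}\lops{E_i}/K_i\big)\prod_{i=1}^n K_i$. I therefore expect this constant-tracking step, rather than any analytic difficulty, to be the only point requiring care.
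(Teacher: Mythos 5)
Your proof is correct, and it is essentially the paper's argument in recursive form: unrolling your one-step peeling identity $P_n - Q_n = P_{n-1}E_n + (P_{n-1}-Q_{n-1})A_n$ reproduces the paper's explicit telescoping expansion $\prod_{i}(A_i+E_i)-\prod_i A_i = \sum_i (\cdots)E_i(\cdots)$, with the same two ingredients ($\lops{A_i+E_i}\le 2K_i$ giving the powers of two, and $\lops{A_i}\le K_i$) and the same final relaxation of the per-term factors (your $2^{i-1}$, the paper's $2^{n-i}$) to the uniform $2^{n-1}$. The constant-tracking you flag goes through exactly as you describe, so there is no gap.
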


\begin{proof}
 By triangular inequality, we have
 \begin{align*}
   \lop{\prod_{i=1}^n (A_i+E_i)- \prod_{i=1}^n A_i} &=  \lop{\sum_{i=1}^n \left(\prod_{j=1}^{i-1}A_j\right)\left(\prod_{k=i+1}^{n} (A_k+E_k) \right)E_i } \\
   & \leq  \sum_{i=1}^n \lops{E_i} \left(\prod_{j=1}^{i-1}\lops{A_j}\right)\left(\prod_{k=i+1}^{n} \lops{A_k+E_k} \right) \\
   & \leq \sum_{i=1}^n 2^{n-i} \frac{\lops{E_i}}{K_i} \prod_{i=1}^n K_i\\
   & = 2^{n-1}\left( \sum_{i=1}^n \frac{\lops{E_i}}{K_i}\right) \prod_{i=1}^n K_i
 \end{align*}
 which completes the proof.
\end{proof}

\begin{lemma}[Matrix and Tensor Concentration]\label{lemma:tensor-concentration-lemma}
Let $\{X_j\}_{j=1}^n$, $\{Y_j\}_{j=1}^n$ and $\{Z_j\}_{j=1}^n$ be i.i.k.~samples from some distribution over $\R^k$
with bounded support ($\ltwos{X}\leq 1$, $\ltwos{Y}\leq 1$ and $\ltwos{Z}\leq 1$ with probability 1). Then with probability at least $1-\delta$,
\begin{align}
  \norm{\frac{1}{n}\sum_{j=1}^n X_j\otimes Y_j - \E[X_1\otimes Y_1]}_F &\leq \frac{1 + \sqrt{\log(1/\delta)}}{\sqrt{n}}.\label{eqn:matrix-concentration-lemma}\\
  \norm{\frac{1}{n}\sum_{j=1}^n X_j\otimes Y_j\otimes Z_j - \E[X_1\otimes Y_1\otimes Z_1]}_F &\leq \frac{1 + \sqrt{\log(k/\delta)}}{\sqrt{n/k}}.\label{eqn:tensor-concentration-lemma}
\end{align}
\end{lemma}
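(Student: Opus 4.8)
The plan is to treat each bound as a concentration inequality for the average of i.i.d.\ bounded random elements of a Hilbert space---the space of matrices (resp.\ tensors) under the Frobenius inner product---and to prove it in two steps: first control the \emph{expected} deviation by a second-moment computation, then upgrade to a high-probability statement via the bounded-difference (McDiarmid) inequality. For \eqref{eqn:matrix-concentration-lemma}, set $V_j \defeq X_j\otimes Y_j$, so that $\norm{V_j}_F = \ltwo{X_j}\ltwo{Y_j}\le 1$ almost surely, and write $\bar V \defeq \frac1\numobs\sum_{j=1}^\numobs V_j$ and $\mu\defeq \E[V_1]$.

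For the expectation, Jensen's inequality together with the independence of the $V_j$ gives
\[
\E\norm{\bar V - \mu}_F \le \sqrt{\E\norm{\bar V-\mu}_F^2} = \sqrt{\tfrac1{\numobs}\,\E\norm{V_1-\mu}_F^2}\le \sqrt{\tfrac1{\numobs}\,\E\norm{V_1}_F^2}\le \frac{1}{\sqrt{\numobs}},
\]
where the cross terms vanish by independence and $\E\norm{V_1-\mu}_F^2 = \E\norm{V_1}_F^2 - \norm{\mu}_F^2 \le 1$. For the deviation, regard $g \defeq \norm{\bar V-\mu}_F$ as a function of the i.i.d.\ samples $(X_1,Y_1),\dots,(X_\numobs,Y_\numobs)$; resampling the $j$-th pair changes $V_j$ to some $V_j'$ with $\norm{V_j-V_j'}_F\le 2$, so by the triangle inequality $g$ changes by at most $2/\numobs$. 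McDiarmid's inequality then yields $\mprob[g\ge \E g + t]\le \exp(-\numobs t^2/2)$, and combining with $\E g\le 1/\sqrt{\numobs}$ and solving for the $\delta$-quantile gives \eqref{eqn:matrix-concentration-lemma} (up to the constant in the exponent).

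For the tensor bound \eqref{eqn:tensor-concentration-lemma} the key idea is to \emph{slice} the error tensor into its $\kdim$ frontal faces and apply the matrix bound to each. Writing $T \defeq \frac1\numobs\sum_{j=1}^\numobs X_j\otimes Y_j\otimes Z_j - \E[X_1\otimes Y_1\otimes Z_1]$, we have $\norm{T}_F^2 = \sum_{c=1}^\kdim \norm{A_c}_F^2$, where the $c$-th frontal slice $A_c$ is exactly the matrix-average deviation of the rank-one matrices $U_j^{(c)}\otimes Y_j$ with $U_j^{(c)}\defeq (Z_j)_c\,X_j$. Since $\ltwo{U_j^{(c)}} = |(Z_j)_c|\,\ltwo{X_j}\le 1$ and $\ltwo{Y_j}\le 1$, the argument behind \eqref{eqn:matrix-concentration-lemma} applies verbatim to each slice; invoking it with failure probability $\delta/\kdim$ and taking a union bound over $c\in[\kdim]$ shows that, with probability at least $1-\delta$, every slice satisfies $\norm{A_c}_F\le (1+\sqrt{\log(\kdim/\delta)})/\sqrt{\numobs}$ simultaneously. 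Summing the squared slice bounds yields $\norm{T}_F\le \sqrt{\kdim}\,(1+\sqrt{\log(\kdim/\delta)})/\sqrt{\numobs}$, which is precisely \eqref{eqn:tensor-concentration-lemma}.

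The routine parts are the second-moment computation and the bounded-difference estimate. The main thing to get right is the tensor reduction: one must check that each frontal slice is genuinely an average of i.i.d.\ rank-one matrices whose two factors each have Euclidean norm at most one (so that the matrix lemma applies without modification), and observe that it is the union bound over the $\kdim$ slices that converts $\log(1/\delta)$ into $\log(\kdim/\delta)$ while the Pythagorean sum over slices produces the $\sqrt{\kdim}$ factor---that is, the weaker $\sqrt{\numobs/\kdim}$ rate of the tensor bound.
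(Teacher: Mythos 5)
Your proof of the tensor bound \eqref{eqn:tensor-concentration-lemma} is exactly the paper's argument: the paper likewise slices the error tensor $T$ into its $\kdim$ frontal faces, uses $\norm{T}_F^2=\sum_{i=1}^\kdim\norm{T_i}_F^2$, observes that the $i$-th slice is the matrix-average deviation of the i.i.d.\ rank-one matrices $(Z_j^{(i)}X_j)\otimes Y_j$ whose factors have Euclidean norm at most one, and then invokes the matrix bound with failure probability $\delta/\kdim$ together with a union bound over slices. Where you genuinely diverge is the matrix half: the paper does not prove \eqref{eqn:matrix-concentration-lemma} at all, but simply cites Lemma D.1 of \cite{anandkumar2012spectral}, whereas you give a self-contained proof (second-moment bound $\E\norms{\bar V-\mu}_F\le 1/\sqrt{\numobs}$ in the Frobenius--Hilbert space, then McDiarmid), which makes the lemma independent of the external reference. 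The one caveat, which you correctly flag yourself, is the constant: with bounded differences $c_j=2/\numobs$ McDiarmid yields $\mprob[g\ge \E g+t]\le \exp(-\numobs t^2/2)$, so solving for the $\delta$-quantile gives $g\le (1+\sqrt{2\log(1/\delta)})/\sqrt{\numobs}$, a factor $\sqrt{2}$ inside the square root weaker than the constant asserted in \eqref{eqn:matrix-concentration-lemma}; the cited Lemma D.1 obtains the sharper constant via a tighter vector-valued concentration argument. This slack propagates verbatim into \eqref{eqn:tensor-concentration-lemma}, so as written you prove the lemma only up to that $\sqrt{2}$. Since every downstream use of this lemma in the paper (Lemma~\ref{lemma:moment-estimation-error} and the proof of Theorem~\ref{theorem:stage-1}) needs only the $(1+\sqrt{\log(1/\delta)})/\sqrt{\numobs}$ scaling with unspecified universal constants, the discrepancy is harmless in context; to recover the literal statement you should either cite the sharper inequality for the matrix step, as the paper does, or restate the lemma with the extra $\sqrt{2}$.
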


\begin{proof}
Inequality~\eqref{eqn:matrix-concentration-lemma} is proved in Lemma D.1 of~\cite{anandkumar2012spectral}. To prove inequality~\eqref{eqn:tensor-concentration-lemma},
we note that for any tensor $T\in \R^{k\times k\times k}$, we can define $k$-by-$k$ matrices $T_1,\dots,T_k$ such that $(T_i)_{jk} \defeq T_{ijk}$. As a result,
we have $\norm{T}_F^2 = \sum_{i=1}^k \norm{T_i}_F^2$. If we set $T$ to be the tensor on the left hand side of inequality~\eqref{eqn:tensor-concentration-lemma},
then
\begin{align*}
  T_i = \frac{1}{n}\sum_{j=1}^n (Z_j^{(i)}\;X_j)\otimes Y_j - \E[(Z_j^{(i)}\;X_1)\otimes Y_1]
\end{align*}
By applying the result of inequality~\eqref{eqn:matrix-concentration-lemma},
we find that with probability at least $1-k\delta'$, we have
\begin{align*}
  \norm{\frac{1}{n}\sum_{j=1}^n X_j\otimes Y_j\otimes Z_j - \E[X_1\otimes Y_1\otimes Z_1]}_F^2 \leq k\left(\frac{1 + \sqrt{\log(1/\delta')}}{\sqrt{n}}\right)^2.
\end{align*}
Setting $\delta' = \delta/k$ completes the proof.
\end{proof}

%%%%%%%%%%%%%%%%%%%%%%%%%%%%%%%%%%%

\newpage
\bibliographystyle{abbrv}
\bibliography{bib}

\end{document}